\documentclass{article}
\usepackage{amsmath,amssymb,amsthm,mathtools}
\usepackage{xspace}

\theoremstyle{plain}
\newtheorem{theorem}{Theorem}
\newtheorem{lemma}{Lemma}
\newtheorem{proposition}{Proposition}
\newtheorem{corollary}{Corollary}
\theoremstyle{definition}

\usepackage{latex_macros}
\renewcommand{\figdir}{.}

\providecommand{\real}{\mathbb{R}}
\providecommand{\Exp}{\mathbb{E}}

\providecommand{\mycomment}[1]{}

\usepackage{fullpage}
\usepackage{xcolor}
\IfFileExists{fontawesome5.sty}{
\usepackage{fontawesome5}

}{

}

\usepackage[
colorlinks=true,
linkcolor=magenta,
citecolor=blue,
urlcolor=blue
]{hyperref}

\usepackage[capitalize,noabbrev,nameinlink]{cleveref}
\crefname{assumption}{Assumption}{Assumptions}
\Crefname{assumption}{Assumption}{Assumptions}

\usepackage{enumitem}

\newcommand{\Xvar}{\ensuremath{X}}
\newcommand{\usedim}{d}

\newcommand{\defn}{\coloneqq}

\newcommand{\Yvar}{Y}

\newcommand{\Bit}{V}
\newcommand{\lam}{\lambda}

\newcommand{\Denoise}{\mu}
\newcommand{\Info}{\mathsf{I}}

\newcommand{\Hinfo}{\mathsf{H}}

\newcommand{\UnmaskKL}{\Gamma_{\scaleto{\operatorname{umask}}{4pt}}}
\newcommand{\UnmaskCard}{\Gamma_{\scaleto{\operatorname{card}}{3pt}}}

\newcommand{\Law}{\mathcal{L}}

\newcommand{\KL}{\ensuremath{D_{\scaleto{\operatorname{KL}}{4pt}}}}

\newcommand{\Jtot}{\mathsf J_{\mathrm{tot}}}

\newcommand{\Score}{\mathsf{s}}

\newcommand{\Hblk}{\mathsf H}
\newcommand{\hfun}{\ensuremath{\mathsf{h}}}

\newcommand{\Zvar}{Z}
\newcommand{\Mask}{\ensuremath{M}}

\newcommand{\NewMask}[2]{\ensuremath{\Mask_{#2 \mid #1}}}
\newcommand{\NewXvar}[2]{\ensuremath{\Xvar_{#2 \mid #1}}}
\newcommand{\XvarForce}[1]{\Xvar_{#1 \mid i}^{(\ell)}}

\newcommand{\Zmask}[1]{\Zvar_{#1}}

\newcommand{\Nscore}{N}

\newcommand{\Zhat}{\ensuremath{\widehat{Z}}}

\newcommand{\Xhat}{\ensuremath{\widehat{X}}}
\newcommand{\XhatIt}[1]{\Xhat^{#1}}

\newcommand{\ShannonInfo}{\ensuremath{\operatorname{Info}}}
\newcommand{\ShanInfo}{\ShannonInfo}

\newcommand{\Prob}{\ensuremath{\mathbb{P}}}
\newcommand{\Qprob}{\ensuremath{\mathbb{Q}}}

\newcommand{\Ent}{\ensuremath{\operatorname{Ent}}}

\newcommand{\Partition}{\ensuremath{\mathcal{P}}}
\newcommand{\PartH}{\ensuremath{\Partition(\Hinfo)}}

\newcommand{\Card}{\ensuremath{\operatorname{Card}}}
\renewcommand{\Xhat}{\widehat{X}}

\newcommand{\Alphabet}{\ensuremath{\mathcal{A}}}

\newcommand{\weight}{w}

\newcommand{\jind}{i}
\newcommand{\Reveal}{\ensuremath{R}}

\newcommand{\Masked}{\ensuremath{\mathcal{M}}}
\newcommand{\Event}{\ensuremath{\mathcal{E}}}

\newcommand{\ind}{i}
\newcommand{\iter}{j}

\makeatletter
\let\MaskOriginalNewCommand\newcommand
\def\newcommand#1{%
\@ifundefined{\expandafter\@gobble\string#1}
{\MaskOriginalNewCommand{#1}}
{\renewcommand{#1}}}
\makeatother

\newcommand{\DenoiseHat}{\widehat{\Denoise}}
\newcommand{\Info}{\ShanInfo}

\newcommand{\Hinfo}{\mathsf{H}}
\newcommand{\Dinfo}{\mathsf{D}}
\newcommand{\Escore}{\mathsf{E}_{\scaleto{\operatorname{den}}{3pt}}}

\newcommand{\Amat}{\ensuremath{\mathbf{A}}}

\newcommand{\Law}{\mathcal{L}}

\newcommand{\KL}{\ensuremath{D_{\scaleto{\operatorname{KL}}{4pt}}}}

\newcommand{\Score}{\mathsf{s}}
\newcommand{\scorehat}{\widehat{\score}}
\newcommand{\score}{\Score}

\newcommand{\Hblk}{\mathsf H}
\newcommand{\ValFun}{\mathsf{V}}
\newcommand{\hfun}{\ensuremath{\mathsf{h}}}
\newcommand{\ufun}{\ensuremath{\mathsf{g}}}

\newcommand{\IntLeft}{\ensuremath{\mathcal I_{\scaleto{\operatorname{pre}}{3pt}}}}
\newcommand{\IntMid}{\ensuremath{\mathcal I_{\scaleto{\operatorname{trans}}{3pt}}}}
\newcommand{\IntRight}{\ensuremath{\mathcal I_{\scaleto{\operatorname{post}}{3pt}}}}

\newcommand{\hcard}{\ensuremath{\hfun^{\scaleto{\operatorname{card}}{4pt}}}}
\newcommand{\Hcard}{\ensuremath{\Hunmask^{\mathrm{card}}}}

\newcommand{\rhat}{\widehat r}

\newcommand{\Zvar}{Z}

\newcommand{\Nscore}{N}

\newcommand{\Zhat}{\ensuremath{\widehat{Z}}}

\newcommand{\ShannonInfo}{\ensuremath{\operatorname{Info}}}
\newcommand{\ShanInfo}{\ShannonInfo}

\providecommand{\Order}{\mathcal{O}}

\newcommand{\Prob}{\ensuremath{\mathbb{P}}}
\newcommand{\Qprob}{\ensuremath{\mathbb{Q}}}

\newcommand{\Ent}{\ensuremath{\operatorname{Ent}}}

\newcommand{\Partition}{\ensuremath{\mathcal{P}}}
\newcommand{\PartH}{\ensuremath{\PartComp(\Partition)}}
\newcommand{\PartComp}{\mathsf{C}_{\scaleto{\operatorname{\ugc}}{4pt}}}
\newcommand{\PartCompHat}{\widehat{\mathsf{C}}_{\scaleto{\operatorname{\ugc}}{4pt}}}

\newcommand{\PlainPartComp}{\ensuremath{\mathsf{C}}}

\newcommand{\Card}{\ensuremath{\operatorname{Card}}}
\newcommand{\Xhat}{\widehat{X}}

\newcommand{\iter}{j}

\newcommand{\bind}{k}
\newcommand{\Btot}{K}

\newcommand{\block}{b}
\newcommand{\jind}{i}

\newcommand{\Hhat}{\widehat{\Hinfo}_\numsam}
\newcommand{\HhatBind}[1]{\widehat{\Hinfo}_{#1, \numsam}}

\newcommand{\ugc}{\textsf{UGC}\xspace}
\newcommand{\dgc}{\textsf{DGC}\xspace}

\newcommand{\Ber}{\textsf{Ber}\xspace}
\newcommand{\Card}{\textsf{Card}\xspace}
\newcommand{\card}{\ensuremath{\operatorname{card}}}

\newcommand{\Hunmask}{\ensuremath{\mathsf{H}}}

\newcommand{\numobs}{\ensuremath{m}}

\newcommand{\Qfunup}[1]{\Qfun^{(#1)}}

\newcommand{\HackErr}{\widehat{r}_\numobs(\eta)}
\newcommand{\NewHackErr}{\widehat{r}_{\bind, \numobs}(\eta/\Btot)}

\newcommand{\Sinfo}{\ensuremath{\mathsf{S}}}

\newcommand{\Vhat}{\ensuremath{\widehat{V}}}
\newcommand{\Nhat}{\ensuremath{\widehat{N}}}

\newcommand{\qdens}{\mathsf{q}}

\newcommand{\rhohat}{\ensuremath{\widehat{\rho}}}
\newcommand{\PartHfine}{\mathsf{P}_{\scaleto{\mathrm{\ugc}}{4pt}}}

\newcommand{\Jtot}{J}

\newcommand{\fdens}{\ensuremath{\mathsf{f}}}

\newcommand{\UniK}{\ensuremath{\Uni_\Btot}}
\newcommand{\Uni}{\ensuremath{\mathcal{U}}}
\newcommand{\Lip}{\ensuremath{\operatorname{Lip}}}

\makeatletter
\long\def\@makecaption#1#2{
\vskip 0.8ex
\setbox\@tempboxa\hbox{\small {\bf #1:} #2}
\parindent 1.5em  
\dimen0=\hsize
\advance\dimen0 by -3em
\ifdim \wd\@tempboxa >\dimen0
\hbox to \hsize{
\parindent 0em
\hfil 
\parbox{\dimen0}{\def\baselinestretch{0.96}\small
{\bf #1.} #2
} 
\hfil}
\else \hbox to \hsize{\hfil \box\@tempboxa \hfil}
\fi
}
\makeatother

\newenvironment{researchquestion}
{\begin{center}\begin{minipage}{0.98\textwidth}}
{\end{minipage}\end{center}}
\let\newcommand\MaskOriginalNewCommand

\newcommand{\Uvar}{\ensuremath{U}}

\newcommand{\ZhatSmall}{\hat{\Zvar}}
\newcommand{\ZhatCard}{\ensuremath{\Zhat^{\mathrm{card}}}}

\newcommand{\Arand}[1]{\ensuremath{A^{#1}}}
\newcommand{\RandSub}{\Prob_{\scaleto{\operatorname{Ber}}{4pt}}}
\newcommand{\RandUmask}{\Prob_{\scaleto{\operatorname{umask}}{4pt}}}

\newcommand{\Geo}{\ensuremath{\operatorname{Geo}}}

\newcommand{\newrfinal}{T}
\newcommand{\newrinit}{{\rtime_0}}

\newcommand{\miss}{\star}

\newcommand{\MyMask}[1]{\Zvar^\miss_{#1}}

\newcommand{\PlainKer}{\ensuremath{\mathbb{K}}}

\newcommand{\Kexact}[2]{\PlainKer_{#1, #2}}
\newcommand{\Khat}[2]{\widehat{\PlainKer}_{#1, #2}}
\newcommand{\CompKernel}{\ensuremath{\mathsf C}}

\newcommand{\rtime}{t}
\newcommand{\odds}{\ensuremath{\psi}}

\newcommand{\logit}{\ensuremath{\varphi}}
\newcommand{\revodds}{\logit}

\newcommand{\revinv}{\logit^{-1}}

\newcommand{\DTC}{\ensuremath{\mathsf{DTC}}}
\newcommand{\TC}{\ensuremath{\mathsf{TC}}}

\newcommand{\DHW}{\ensuremath{\mathsf{DHW}}}
\newcommand{\TSE}{\ensuremath{\mathsf{TSE}}}

\newcommand{\ANNOY}{\log
  \Big(\frac{\odds(\newrfinal)}{\odds(\newrinit)}\Big)}

\newcommand{\HACKT}{\KL(\Prob_{\rtime_0} \| \Qprob_{\rtime_0}) +
  \CompletionDefect_{\CompKernel}(T)}
\newcommand{\CompletionDefect}{\ensuremath{\Gamma}}

\newcommand{\Elld}{\ell_\usedim}
\newcommand{\ThetaTil}{\ensuremath{\widetilde{\Theta}}}
\newcommand{\OmegaTil}{\ensuremath{\widetilde{\Omega}}}

\newcommand{\SB}{\PartComp}
\newcommand{\FP}{\PartHfine}

\newcommand{\myrho}{c_{p,q}}

\newcommand{\Zclean}[1]{\ensuremath{\Zvar^{(#1)}}}

\newcommand{\samind}{\ell} \newcommand{\numsam}{m}
\newcommand{\QhatSam}{\ensuremath{Q^{(\samind)}}}

\newcommand{\BOUNDARY}{ \KL(\Prob_{\rtime_0} \| \Qprob_{\rtime_0}) +
  \CompletionDefect_{\CompKernel}(T)}

\newcommand{\Yup}[1]{\ensuremath{Y^{(#1)}}}
\newcommand{\ashort}{a}
\newcommand{\Difun}{\ensuremath{\mathcal{I}}}

\newcommand{\Ker}{\ensuremath{\mathbb{K}}}
\newcommand{\IntStar}{\ensuremath{\Int_{\scaleto{\operatorname{full}}{4pt}}}}
\newcommand{\MyLen}{\operatorname{Len}}
\newcommand{\Int}{\mathcal{I}}

\newcommand{\Ratio}{\operatorname{Ratio}}
\newcommand{\ProbZ}{\ensuremath{\Prob_\Zvar}}

\newcommand{\deld}{\ensuremath{\delta_d}}
\newcommand{\sneg}{\ensuremath{s_{\scaleto{\operatorname{left}}{3pt}}}}
\newcommand{\spos}{\ensuremath{s_{\scaleto{\operatorname{right}}{3pt}}}}
\newcommand{\splus}{\spos}

\newcommand{\Lint}{\mathcal{I}}

\newcommand{\esize}{\ensuremath{s}}

\newcommand{\order}{\Order}
\newcommand{\Fstar}{F_\star}
\newcommand{\newrevinv}{R}
\newcommand{\kdim}{\ensuremath{k}}
\newcommand{\Bmat}{\mathbf{B}}

\newcommand{\Nstar}{\ensuremath{N^\star}}

\begin{document}

\begin{center}
{\Large\bfseries The data geometry of masking diffusion:
  \\ Certified-optimal schedules via unmasking growth complexity }

\vspace*{0.3in}

\begin{tabular}{c}
Martin J. Wainwright \\ \texttt{mjwain@mit.edu}
\end{tabular}

\vspace*{0.2in}
\begin{tabular}{c}
Lab for Information and Decision Systems \\
Statistics and Data Science Center \\
EECS and Mathematics, \\
Massachusetts Institute of Technology
\end{tabular}

\vspace*{0.25in}
\today
\vspace*{0.25in}

\begin{abstract}
We study masking diffusion for discrete sampling and introduce a
path-resolved measure of data geometry called the \emph{unmasking
growth complexity} ({\textsf{UGC}\xspace}).  Its local increments
directly control Kullback--Leibler (KL) discretization error, yielding
a unified analysis of Bernoulli-subset and fixed-cardinality unmasking
schemes.  In log-reveal-odds coordinates, this structure yields
optimized single-block and multi-block schedules, and quantifies the
gains from adapting computational effort to data geometry.  Crucially,
we show how {\textsf{UGC}\xspace} increments can be estimated from
samples via KL increments along coupled reveal trajectories.  This
leads to \emph{certified-optimal} samplers that achieve a prescribed
KL error with high probability and iteration complexity within a
constant factor of the corresponding oracle procedure.  Collapsing the
\ugc path yields the aggregate {\textsf{UGC}\xspace} mass, which
connects to classical multivariate dependence measures, and complexity
measures from previous analyses of discrete diffusion.  In the
fine-partition limit, the squared integral of the square-root
{\textsf{UGC}\xspace} density determines the sharp leading-order
optimal Euler discretization error.  Examples exhibit substantial
dimension-dependent gains over coarse schedules, including
$\widetilde{\Omega}(\sqrt{d})$ improvements achievable with a constant
number of adaptively placed blocks.
\end{abstract}
\end{center}



\section{Introduction}

The problem of sampling from a high-dimensional distribution is
fundamental in nature, and has a wide range of applications.
Efficient sampling algorithms underpin the utility of Monte Carlo
approximation~\cite{RobCase04,RubKroe08}; support uncertainty
quantification in Bayesian models~\cite{GelEtAl13,BroEtAl11}; and lie
at the heart of generative AI~\cite{RomEtAl22,CroEtAl23,YanEtAl25,
  CheEtAl24}.  Recent years have witnessed tremendous practical and
theoretical advances in the use of diffusion\footnote{In full
generality, the diffusion terminology is a misnomer, since not all
such models involve a diffusion process in the formal probabilistic
sense; this includes the unmasking models studied in this
paper. Nonetheless, we adopt the conventional terminology.}  sampling
algorithms, which generate samples via a sequence of ``denoising''
operations~\cite{SohEtAl15,SonErmo19,HoEtAl20,SonEtAl21,
  CheEtAl23c,CheEtAl23a,BenEtAl24}.  Initial work in the area focused
on sampling from continuous distributions in $\real^\usedim$, in which
case the denoising step corresponds to estimating a signal embedded in
Gaussian noise~\cite{HoEtAl20,SonEtAl21}.  A more recent line of work,
and the general focus of this paper, has focused on sampling from
discrete distributions, using various kinds of denoising
processes~\cite{HooEtAl21,AusEtAl21,CamEtAl22,LouEtAl24,
  ShiEtAl24b,SahooEtAl2024MDLM,CheEtAl25,DmiEtAl26}.

In this paper, we study the problem of sampling a random vector $\Zvar \in
(\Alphabet)^\usedim$, where $\Alphabet$ is a discrete alphabet, and
$\usedim$ is the ambient dimension. While various samplers have been
devised, we focus on \emph{unmasking samplers}, which traverse a path
from a fully unobserved vector, denoted by $\Zvar^* = (\star, \ldots,
\star)$ with $\star$ meaning masked or unobserved, back to a sample
$\Zvar \sim \ProbZ$ from the target distribution.  The path is
traversed via a sequence of unmasking operations, in which a subset of
the masked coordinates are revealed.  Unmasking algorithms differ in
the form and sequence by which these unmasking steps take place.
Unmasking and closely related mask-based generative samplers have
proven effective across a range of applications, including machine
translation~\cite{GhazvininejadEtAl2019MaskPredict}, image and video
synthesis~\cite{ChangEtAl2022MaskGIT,YuEtAl2023MAGVIT}, text-to-image
generation~\cite{ChangEtAl2023Muse}, speech
synthesis~\cite{WangEtAl2025MaskGCT}, language
modeling~\cite{SahooEtAl2024MDLM,NieEtAl2025LLaDA}, and protein
design~\cite{WangEtAl2024DPLM}.

\subsection{Overview}

This paper is motivated by two broad questions associated with
diffusion samplers, as articulated in our companion paper on Gaussian
diffusion~\cite{Wai26}, which we paraphrase here:
\begin{researchquestion}
{\bf{Q1:}} Can the performance of masked diffusion sampling be
explained and quantified, in some generality, by a measure tied to
data geometry?  \par\smallskip
{\bf{Q2:}} Is it possible to exploit a masked-diffusion measure of
data geometry to design, optimize and certify practical sampling
schemes?
\end{researchquestion}
Recent work has made substantial progress on both questions. Chen et
al.~\cite{CheEtAl25} analyze fixed-cardinality unmasking and derive an
exact information-profile representation of its expected KL
discretization error, with consequences for schedule design and bounds
based on total and dual total correlation.  Lavenant and
Zanella~\cite{LavZan25} derive an information-profile representation
of the factorization error for random-order unmasking, and study
optimal schedules in asymptotic scaling regimes. From a different
direction, Dmitriev et al.~\cite{DmiEtAl26} analyze discrete diffusion
via a continuous-time Markov chain (CTMC) formulation; for masking
diffusion, their modified $\tau$-leaping guarantees are governed by an
effective total correlation that can be substantially smaller than the
classical worst-case measures. Collectively, these results provide
important answers to {\bf{Q1}} and partial answers to {\bf{Q2}}, but
fall short of finite-sample guarantees for learning and certifying
geometry-adaptive schedules from data.

Our answer to {\bf{Q1}} is based on the \emph{unmasking growth
complexity} or \ugc for short.  It is a path-resolved measure of data
dependence along the reveal process. We show that its local increments
directly control KL discretization error, thereby obtaining a common
analysis of both fixed-cardinality unmasking and a natural
Bernoulli-subset variant. When collapsed over the full reveal path,
the \ugc complexity reduces to a coarse measure that closely connected
to classical multivariate dependence
measures~\cite{Wat60,Han75,Han78,{TonSpoEde94}}, as well as to the
effective total correlation appearing in the CTMC
analysis~\cite{DmiEtAl26}. Consequently, the resulting single-block
guarantees sharpen existing fixed-cardinality bounds while giving
Bernoulli unmasking guarantees comparable to those available for CTMC
samplers.

The main contributions of our paper are in the context of {\bf{Q2}},
and in particular, our use of the full \ugc complexity path to design
and analyze sampling algorithms that are certified-optimal. More
precisely, we show that \ugc complexity has a natural additive
structure along the reveal path, so that its local increments can be
estimated from samples and used to allocate computational effort where
the target distribution is most difficult to unmask. We exploit this
structure to derive and optimize blockwise schedules, construct
data-dependent samplers with high-probability KL certificates, and
finally, to optimize the block boundaries themselves. In the
fine-partition limit, the log-reveal-odds \ugc density emerges as the
intrinsic local geometry: its square-root integral governs the
limiting partition complexity and the sharp leading-order optimal
Euler discretization error.


\subsection{Related work}
\label{SecRelated}

So as to put our results in context, we now provide a broader
discussion of related work.  There are various types of discrete
diffusion models, all of which replace the additive Gaussian noise for
continuous-space diffusions by stochastic corruption kernels on
discrete state spaces.  Early work developed multinomial and more
general structured discrete diffusion
processes~\cite{HooEtAl21,AusEtAl21}, among them the absorbing-state
corruption that underlies a masking diffusion process.  Other
work~\cite{CamEtAl22} formulated discrete diffusion models using the
formalism of continuous-time Markov chains, with subsequent work
developing discrete analogues of score-based modeling in which the
reverse CTMC is parameterized through learned probability
ratios~\cite{LouEtAl24}.  Masked diffusion is a particularly important
subclass, in which coordinates are corrupted by replacement with a
distinguished mask symbol and generation proceeds by progressively
reconstructing masked coordinates~\cite{ShiEtAl24b,
  SahooEtAl2024MDLM}.

As described above, recent work has made progress on characterizing
the accuracy--parallelism trade-off in random-order unmasking with
fixed-cardinality subsets.  In particular, Li and Cai~\cite{LiCai25}
established information-theoretic convergence guarantees for parallel
masked-diffusion sampling.  Chen et al.~\cite{CheEtAl25} derived an
exact characterization of the expected KL divergence in terms of a
one-dimensional information profile; they also gave explicit sampling
guarantees involving classical multivariate dependence
measures~\cite{Wat60,Han75,Han78}, namely the total correlation
($\TC$) and dual total correlation ($\DTC$). A portion of our analysis
exploits their exact KL representation. Lavenant and
Zanella~\cite{LavZan25} derived an information-profile representation
for the factorization error of random-order unmasking, and then
studied optimal scheduling via a continuum scaling limit. We compare
their asymptotic schedule analysis more closely with our
fine-partition and Euler results following~\Cref{ThmFineEuler}.

The $\tau$-leaping method for continuous-time Markov chains (CTMC)
originates in stochastic chemical kinetics, where its consistency and
approximation errors have been studied
extensively~\cite{Gil01,RatEtAl05,Li07,AndEtAl11}.  For discrete
diffusion, recent work has developed convergence guarantees for
$\tau$-leaping under both standard and absorbing corruption
processes~\cite{RenEtAl25b,LiaEtAl25a,LiaEtAl25b,DmiEtAl26}, as well
as higher-order variants of CTMC discretization~\cite{RenEtAl25a}.
Among CTMC analyses, most related to our work is the paper of Dmitriev
et al.~\cite{DmiEtAl26}, who analyze a modified $\tau$-leaping sampler
for a general class of CTMC discrete diffusion models.  For masking
diffusion, they give guarantees in terms of a measure that they call
effective total correlation, which turns out to be closely related to
the coarse \ugc complexity that we introduce.  In contrast to
CTMC-based analyses, we study Bernoulli and fixed-cardinality
unmasking; our coarse \ugc bounds yield matching guarantees for these
direct schemes.  The coarse \ugc complexity is also connected to
several multivariate dependence measures, including $\TC$, $\DTC$, and
the finer-grained measure of Tononi et al.~\cite{TonSpoEde94}.  We
develop these connections in~\Cref{AppComplements}.

Finally, in our companion paper~\cite{Wai26} on Gaussian diffusion
sampling, we introduced the denoising growth complexity ($\dgc$), a
pathwise complexity that controls KL discretization error, and whose
log-scale \dgc density specifies optimal sampling schedules.  Despite
the substantial differences between Gaussian and masking
diffusion, the two theories exhibit a remarkable degree of
parallelism, suggesting a common underlying principle linking
denoising growth, KL discretization error, and optimal scheduling.


\section{Overview: unmasking growth complexity and sampling}
\label{SecOverview}

So as to orient the reader, we begin by providing a high-level overview
of the main ideas and quantities underlying our analysis.  We first
describe the canonical unmasking process.  Using it, we define the
unmasking growth complexity (\ugc), which measures how informational
difficulty is distributed along the reveal path. We then pass to
log-reveal-odds coordinates, where this complexity is represented by a
density whose geometry determines how sampling effort should be
allocated along the path.

Our goal here is primarily conceptual: to explain the distinction
between coarse and geometry-aware schedules, to illustrate the data
geometry of the \ugc density on several examples, and to preview how
data-driven adaptive partitioning can exploit this geometry. Precise
forms of our guarantees for samplers, estimation procedures, and
supporting technical results are developed in the subsequent sections.


\subsection{Unmasking reveal process}
\label{SecRevealProcess}
Given a discrete alphabet $\Alphabet$, our goal is to sample a
$\usedim$-dimensional random vector $\Zvar \in \Alphabet^\usedim$ with
distribution $\ProbZ$.  Underlying the sampling algorithms that we
analyze is an unmasking stochastic process $\{ \Xvar_t, t \in [0,1 ]
\}$, where $\Xvar_t \in \big(\Alphabet \cup \{\miss \} \big)^\usedim$
and the new symbol $\miss$ denotes a masked or unobserved entry. At
time $t = 0$, we have $\Xvar_0 = (\miss, \ldots, \miss)$ almost
surely, whereas at time $t = 1$, we have $\Xvar_1 \sim \Prob_\Zvar$,
where $\Prob_\Zvar$ is the target distribution from which we would
like to sample.

The evolution of $\Xvar_t$ along the path is controlled by a sequence
$\{ \Uvar_\jind \}_{\jind=1}^\usedim$ of i.i.d.
$\operatorname{Unif}[0, 1]$ random variables, independent of $\Zvar$.
These variables define the \emph{reveal subset sequence}, indexed by
$t \in [0,1]$, via
\begin{subequations}
\begin{align}
\label{EqnMaskingProcess}
\Reveal_t & \defn \big\{ \jind \in [\usedim] \mid \Uvar_\jind \leq t
\big\}, \quad \mbox{and its complement} \quad \Reveal_t^c \defn
      [\usedim] \setminus \Reveal_t.
\end{align}
The subset $\Reveal_t$ corresponds to the subset of coordinates
$\jind$ for which the hidden value $\Zvar_\jind$ has been revealed by
time $t$.  More formally, the \emph{unmasking process} at time $t$ is
given by
\begin{align}
\label{EqnDefnRevealProcess}
X_t & \defn \Big( \Zmask{\Reveal_t}, \; \MyMask{\Reveal^c_t} \Big) \;
\in \; \big(\Alphabet \cup \{\miss \} \big )^\usedim \qquad \mbox{for
  $t \in [0,1]$,}
\end{align}
\end{subequations}
where $\Zmask{\Reveal_t} = (\Zvar_\jind, \jind \in \Reveal_t)$ are the
variables revealed by time $t$, and $\MyMask{\Reveal^c_t} = (\miss,
\ldots, \miss)$ is a sub-vector of missing values in positions indexed
by $\Reveal_t^c$.  By construction, the unmasking
process~\eqref{EqnDefnRevealProcess} defines a family of probability
distributions $\{ \Prob_t, t \in [0,1] \}$, where $\Prob_0$ denotes a
degenerate distribution with all its mass on the masked sequence,
whereas $\Prob_1 \equiv \Prob_\Zvar$ is the probability distribution
of the target variable $\Zvar$.


\subsection{Unmasking growth complexity and its geometry}
\label{SecUGC}
For a reveal time $\rtime \in [0,1]$, we define
the \emph{Bernoulli unmasking gain}
\begin{subequations}
\begin{align}
\label{EqnDefnHfun}
\hfun(\rtime) &\defn \; \sum_{i = 1}^{\usedim} \Info\big( \Zvar_i;
X_\rtime \mid i \in \Masked(X_\rtime) \big),
\end{align}
where $\Masked(X_\rtime) \subseteq \{1, \ldots, \usedim \}$ is the
subset of indices that are masked at reveal time $\rtime$, and $\Info$
denotes the (conditional) mutual information. The function $\hfun$ has
a denoising interpretation, since when $i \in \Masked(X_\rtime)$, all
other coordinates in $X_\rtime$ are revealed independently with
probability $\rtime$; the conditional mutual information term for the
$i^{th}$ coordinate measures how much these revealed variables reduce
uncertainty about $\Zvar_i$.

Moreover, the derivative $\hfun'$ turns out to have a simple
information-theoretic representation: in particular, it corresponds to
the second derivative
\begin{align}
  \label{EqnHderivativeMI}
  \hfun'(t) & = - \frac{d^2}{d t^2} \Info(\Zvar; \Xvar_t),
\end{align}
where $\Info(\Zvar; \Xvar_t)$ denotes the mutual information between
the target vector $\Zvar \in \Alphabet^\usedim$ and the partially
unmasked vector $\Xvar_t \in \big( \Alphabet \cup \{ \miss \}\big)^d$
at reveal time $t \in (0,1)$.  See equation~\eqref{EqnMaskInfoRate} in
the proof of~\Cref{ThmMaster} for the underlying details.

In terms of this mutual information derivative, the \emph{unmasking
growth complexity} assigns a non-negative number to any sub-interval
$[p,q]$ of the unit interval $[0,1]$ via
\begin{align}
\label{EqnDefnUGC}
\Hinfo(p, q) & \defn \int_p^q \rtime (1 - \rtime) \hfun'(\rtime) \, d
\rtime \quad \mbox{for any $0 \leq p < q \leq 1$.}
\end{align}
\end{subequations}
Based on the identity~\eqref{EqnHderivativeMI}, we see that
$\Hinfo(p,q)$ is a weighted integral of the information curvature.

The essential feature of this complexity measure is that it is a
\emph{path-resolved quantity}: rather than assigning a single
complexity to the target distribution $\ProbZ$, it assigns a
complexity to every interval of the reveal path. Our analysis shows
how these local increments directly control the associated KL
discretization error.  Moreover, for any triple $0 \leq p < q < r \leq
1$, we have the additivity property
\begin{align}
\label{EqnHinfoAdditive}  
  \Hinfo(p,r) & = \Hinfo(p, q) + \Hinfo(q, r).
\end{align}
This additivity allows the global sampling problem to be decomposed
into local pieces, whose complexities can be estimated and controlled
separately, and is what ultimately enables geometry-adaptive
refinement of the sampling schedule.

By collapsing the path-resolved complexity to the full interval
$[0,1]$, we obtain the \emph{aggregate \ugc mass} given by $\Hinfo(0,
1) = \int_0^1 \rtime (1 - \rtime) \hfun'(\rtime) d \rtime$.
Interestingly, this aggregate quantity has several connections to
classical measures of multivariate dependence.  By a non-trivial
argument (see the proof of~\Cref{PropSimpleRelation}), it turns out to
be equivalent to a multivariate dependence measure, first introduced
by Tononi, Sporns and Edelman~\cite{TonSpoEde94} in 1994, which
spawned a rich line of work
(e.g.,~\cite{BuzZam12a,BuzZam12b,OlbEtAl08,BarBucBul09,RosEtAl19,VarEtAl23}).
A separate argument shows its close relation to the effective total
correlation introduced by Dmitriev et al.~\cite{DmiEtAl26} in their
study of CTMC unmasking algorithms.  Moreover, the value $\Hinfo(0,1)$
can be upper bounded by classical measures of multivariate
dependence~\cite{Wat60, Han75,Han78}, known as total correlation and
dual total correlation.  We elaborate upon these and other connections
in~\Cref{AppComplements}.

\paragraph{Role of the log-reveal-odds density:}
The natural coordinate for progress along the unmasking path is not
reveal time $\rtime$ nor its logarithm, but rather the log-reveal-odds
$\lam = \revodds(\rtime) \defn \log(\rtime/(1-\rtime))$.  Our analysis
gives one-step KL bounds governed by the multiplicative change in
reveal odds; consequently, equal increments in $\lam$ correspond to
equal multiplicative changes in reveal odds, and place the early and
late stages of unmasking on a common scale.  Expressing the \ugc
complexity in this coordinate yields a density $\qdens$ that localizes
the informational difficulty of sampling along the path. As our
results show, regions where $\qdens(\lam)$ is large require finer
resolution.  More precisely, the \emph{log-reveal-odds \ugc density
function} is given by
\begin{align}
\label{EqnDefnQdens}
\qdens(\lam) & \defn r^2 (1 - r)^2 \hfun'(r) \qquad \mbox{where $r =
  \revinv(\lam) \defn \frac{e^{\lam}}{1 + e^\lam}$.}
\end{align}
By construction, the \ugc increment is given by $\Hinfo(p, q) =
\int_{\revodds(p)}^{\revodds(q)} \qdens(\lam) \, d\lam$. \\

The high-level conclusions of this paper take a particularly simple
form in terms of $\qdens$.  We focus on the canonical reveal interval
$\big[ \tfrac{1}{\usedim}, \; 1 - \tfrac{1}{\usedim} \big]$, which
corresponds under the log-reveal-odds transformation to the symmetric
interval $[-\Elld, \Elld]$ with $\Elld \defn \log(\usedim - 1)$.
\begin{subequations}
\begin{itemize}[leftmargin=1em, topsep=2pt, itemsep=2pt,
  parsep=0pt, partopsep=0pt]
\item Single-block unmasking schemes use a single multiplier across
  the reveal path and hence ignore the location of the \ugc mass.
  Their iteration complexity is governed by the \emph{coarse or
  aggregate \ugc complexity}
  \begin{align}
\label{EqnDefnSingBlock}    
    \SB & \defn 2 \Elld \int_{-\Elld}^{\Elld} \qdens(\lam) \, d \lam.
  \end{align}
 The measure $\SB$ depends only on the total \ugc mass over the
 relevant portion of the path, and not on how this mass is
 distributed.  This coarse measure connects directly to past work: it
 enables us to show that a single-block Bernoulli unmasking sampler
 matches the guarantees given for CTMC unmasking~\cite{DmiEtAl26}, while
 sharpening guarantees from past work on fixed-cardinality
 sampling~\cite{CheEtAl25}.  See~\Cref{CorSingle} and~\Cref{PropSimpleRelation}
 for details.
\item Geometry-aware schemes exploit the local \ugc geometry, taking
  finer steps where $\qdens$ is large and coarser steps where it is
  small. Crucially, we show that $\ugc$ increments can be estimated
  from samples, and use these estimates to choose the partition
  adaptively from data (see~\Cref{PropDataSingle}).  Under progressive
  refinement, the resulting complexity converges from above to the
  \emph{fine-partition complexity}
    \begin{align}
      \label{EqnDefnFinePart}
      \FP & \defn \left (\int_{-\Elld}^{\Elld} \sqrt{\qdens(\lam)} d
      \lam\right)^2,
    \end{align}
 We establish non-asymptotic convergence rates and high-probability
 finite-sample guarantees for the resulting samplers;
 see~\Cref{ThmCertifiedMulti,ThmFineEuler} for details. A related
 square-root structure for an information profile appears in the
 asymptotic analysis of fixed-cardinality schedules by Lavenant and
 Zanella~\cite{LavZan25}; we discuss this connection in more detail
 following~\Cref{ThmFineEuler}.
\end{itemize}
\end{subequations}

In summary, the potential gain from data-dependent optimization of the
sampling schedule is governed by the ratio
\begin{align}
\label{EqnDefnRatio}
  \Ratio(\ProbZ) & \defn \frac{\SB}{\FP} \; = \; \frac{2 \Elld
    \int_{-\Elld}^{\Elld} \qdens(\lam) \, d \lam}{ \left
    (\int_{-\Elld}^{\Elld} \sqrt{\qdens(\lam)} d \lam\right)^2} \;
  \stackrel{(i)}{\geq} \; 1,
\end{align}
where the lower bound (i) follows from the Cauchy--Schwarz inequality.
Equality holds if and only if $\qdens$ is constant, whereas larger
values of $\Ratio(\ProbZ)$ arise when the \ugc mass is unevenly
distributed or sharply concentrated along the path.


\subsubsection{Geometric behavior of \ugc density}
\label{SecGeometry}

Since the \ugc density $\qdens$ captures the essential structure of
the problem, it is useful to compute and plot it for three different
ensembles, each chosen to illustrate a qualitative aspect of our
theory.

\paragraph{Noisy repeated bit:}  We begin with a very simple
example.  For any $p \in [0,1]$, we use $V \sim \Ber(p)$ to denote a
Bernoulli random variable with $\Prob(V = 1) = p$ and $\Prob(V = 0) =
1 - p$.  In this ensemble, we construct a binary random vector $\Zvar
= (\Zvar_1, \ldots, \Zvar_\usedim) \in \{0, 1 \}^\usedim$ by first
drawing $\Uvar \sim \Ber(1/2)$, and then setting
\begin{align*}
\Zvar_\jind & = \Uvar \oplus W_\jind \qquad \mbox{for each $\jind = 1,
  \ldots, \usedim$,}
\end{align*}
where each $W_\jind$ is an independent $\Ber(\eta)$-variable for some
$\eta \in [0, 1/2]$, and $\oplus$ denotes addition modulo two.  By
construction, we have $\Prob(\Zvar_\jind = \Uvar) = 1
- \eta$, hence our use of the term ``noisy repeated bit''.
\begin{figure}[h]
\begin{center}
\begin{tabular}{ccc}
\widgraph{0.31\textwidth}{\figdir/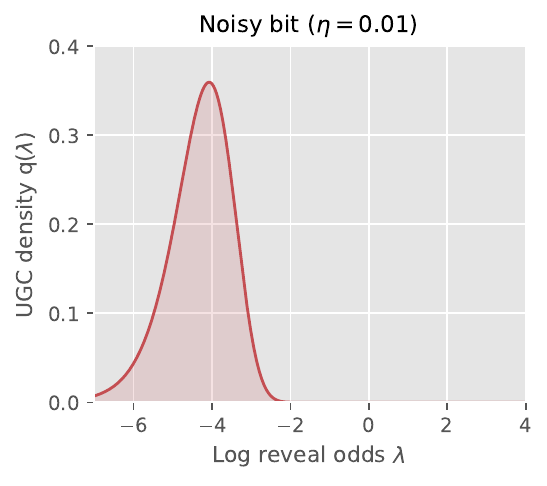}
&
\widgraph{0.31\textwidth}{\figdir/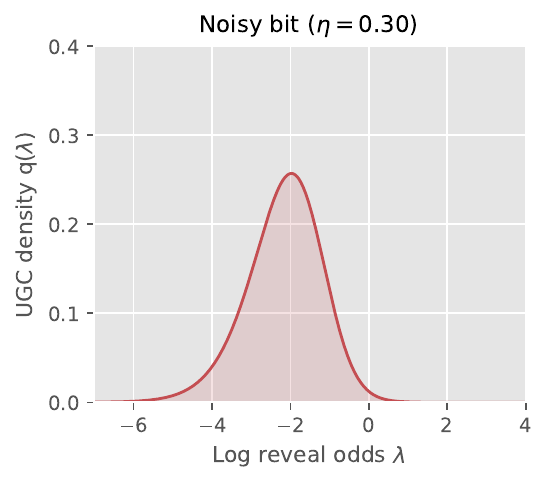}
&
\widgraph{0.31\textwidth}{\figdir/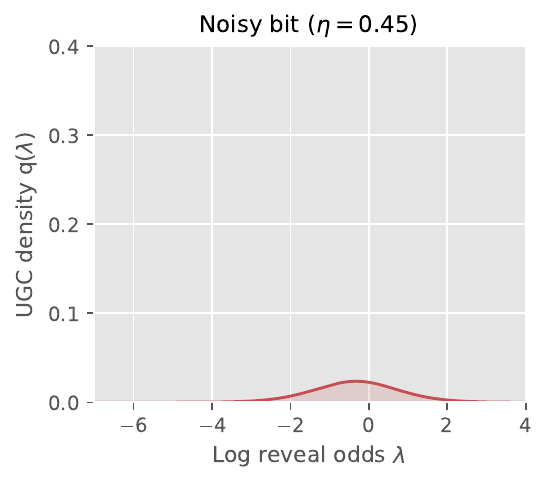}
\\
(a) & (b) & (c) 
\end{tabular}
\caption{Plots of the log-reveal-odds \ugc density $\qdens$ for the noisy
  repeated-bit ensemble for $\eta \in \{0.01, 0.30, 0.45 \}$.  The
  density has a sharp peak for $\eta= 0.01$, and then flattens out and
  shifts to the right as $\eta \rightarrow 0.5$.  Up
  to two digits of accuracy, we have $\Ratio(\Prob_Z) \in \{4.51,
  2.16, 1.65 \}$ in panels (a), (b), and (c), respectively.  For this
  example, we have $\Ratio(\Prob_Z) \asymp \log(\usedim)$ as the
  dimension grows.}
\label{FigNoisyRepeated}
\end{center}
\end{figure}

\Cref{FigNoisyRepeated} gives plots of the \ugc
density~\eqref{EqnDefnQdens} in dimension $\usedim = 128$, and flip
probabilities $\eta \in \{0.01, 0.30, 0.45 \}$.  For $\eta = 0.01$,
the density has a sharp peak at a low value of the log-reveal-odds
parameter $\lam$.  As $\eta$ increases, the dependence among the
coordinates of the multivariate random vector $Z \in \{0,1 \}^d$
decreases; it takes a higher reveal level before information about the
hidden bit $U$ is revealed, as reflected by the rightward shift in the
mode of the \ugc density.  In parallel with this rightward shift, the
entire density collapses towards zero, so that $\Hinfo(0,1)
\rightarrow 0$ as $\eta \rightarrow 1/2$.

\paragraph{Discrete mixture models:}  Our second ensemble corresponds
to the discrete analog of a mixture model.  More precisely, given a
fixed set of $M$ cluster centers, say $C^1, \ldots, C^M\in \{0,
1\}^d$, we generate a binary random vector $\Zvar = (\Zvar_1, \ldots,
\Zvar_d) \in \{0, 1 \}^d$ with components $\Zvar_i = C^J_i \oplus
\xi_i$ for $i = 1, \ldots, \usedim$, where the random cluster index
$J$ is drawn uniformly at random from $\{1, \ldots, M \}$ and the
$\xi_i \sim \Ber(\eta)$ are drawn i.i.d., and independently of the
cluster index.  This procedure generates an $M$-component mixture
distribution with uniform weight $1/M$ on each component.

\begin{figure}[h]
  \begin{center}
    \begin{tabular}{ccc}
      \widgraph{0.31\textwidth}{\figdir/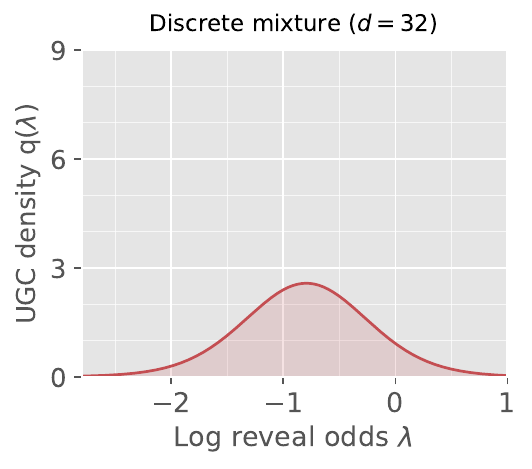} &
      \widgraph{0.31\textwidth}{\figdir/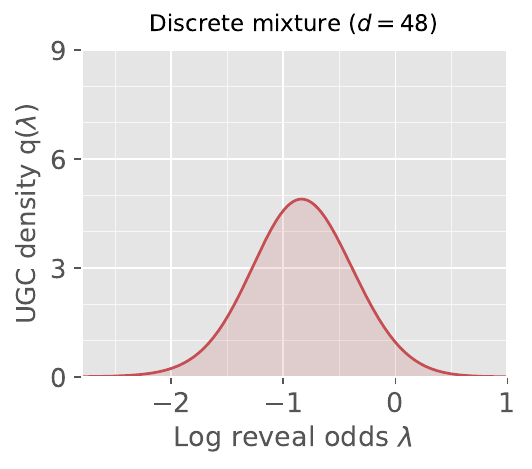} &
      \widgraph{0.31\textwidth}{\figdir/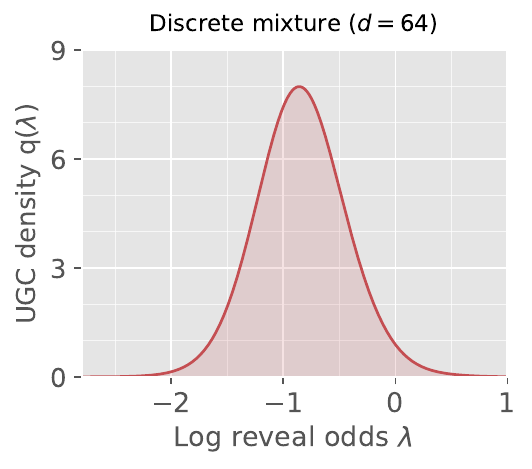} \\
      (a) & (b) & (c)
\end{tabular}
\caption{Plots of the log-reveal-odds \ugc density $\qdens$ for the discrete
  mixture model.  For each dimension $d$, the latent vector $\Zvar \in
  \{0,1\}^d$ is generated by selecting uniformly from $M = 2^{d / 4}$
  cluster centers, and then flipping each coordinate independently
  with probability $\eta = 0.02$.  Panels (a)--(c) correspond to $d
  \in \{32, 48, 64\}$, and hence $M \in \{256, 4096, 65536\}$,
  respectively.  Increasing the dimension produces a progressively
  sharper and taller spike.  Up to two digits of accuracy, we have
  $\Ratio(\Prob_Z) \in \{2.19, 3.13, 3.85 \}$ in panels (a), (b), and
  (c), respectively.  For this example, it can be shown that
  $\Ratio(\Prob_Z) = \ThetaTil(\sqrt{\usedim})$ as the dimension
  grows.}
\label{FigDiscreteMixture}
\end{center}
\end{figure}

We studied this family for varying dimensions $d$ divisible by four,
number of mixture components $M = 2^{d/4}$, and cluster centers sampled
independently and uniformly from the Boolean hypercube $\{0,
1\}^\usedim$.  \Cref{FigDiscreteMixture} shows a sequence of \ugc
densities for this family, with the dimension $d$ and the number $M =
2^{d/4}$ of mixture components increasing as we move from left to right.
The density is unimodal in all cases; as the dimension increases, the
peak location remains fixed but its height increases.  It can be shown
that the ratio~\eqref{EqnDefnRatio} scales as $\Ratio(\Prob_Z) =
\widetilde{\Theta}(\sqrt{\usedim})$, so that there are significant
gains from using geometry-aware stepsizes.  Moreover, since the \ugc
density is concentrated around a single increasingly narrow transition
window, a partition with only $\Btot = 3$ blocks---one covering this
transition and two covering its tails---achieves the fine-partition
complexity up to logarithmic factors.  See~\Cref{SecAlgorithmic}
and~\Cref{FigDiscreteMixture} for further discussion of these factors.

\paragraph{Hierarchical mixture model:}  The previous examples, being
quite simple in nature, all led to unimodal $\ugc$ densities.  We now
consider a hierarchical model in which multiple peaks emerge, and the
peaks have interesting meanings in terms of the underlying data
geometry.
\begin{figure}[htb!]
\begin{center}
\begin{tabular}{@{}cc@{}}
\widgraph{0.60\textwidth}{\figdir/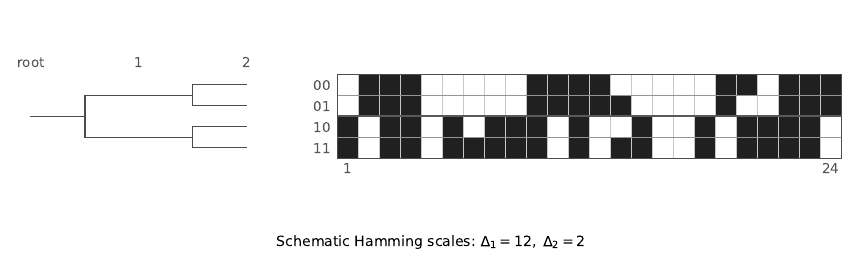}
&
\widgraph{0.28\textwidth}{\figdir/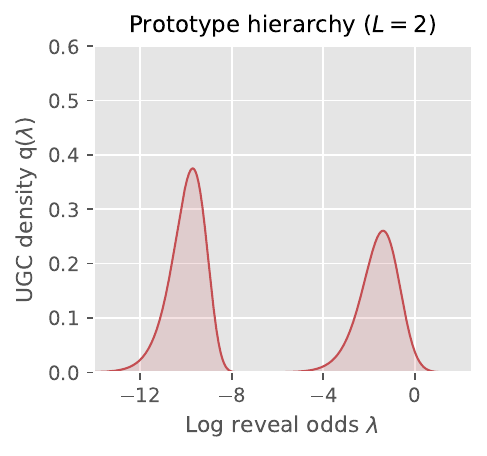}
\\
(a) & (b) \\
\widgraph{0.60\textwidth}{\figdir/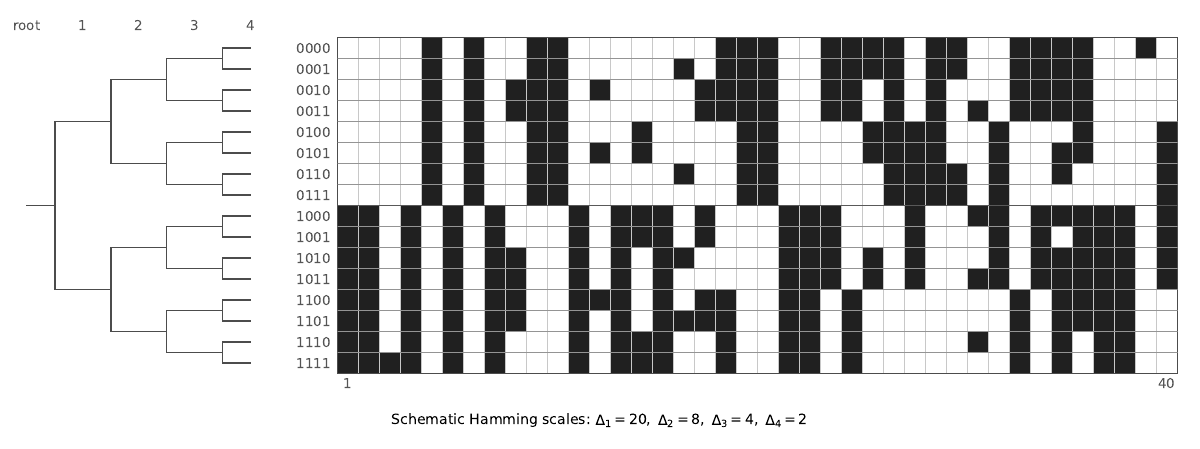} &
\widgraph{0.28\textwidth}{\figdir/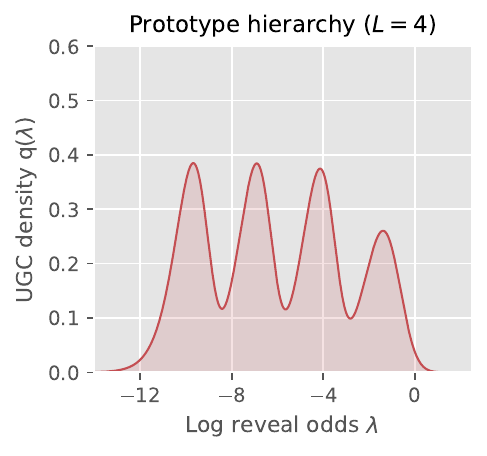} \\
(c) & (d)
\end{tabular}
\caption{Hierarchical binary-prototype mixtures and their log-reveal-odds
  \ugc densities.  The rows correspond to hierarchy depths $L = 2$ and
  $L = 4$.  Left column panels (a) and (c) show the leaf prototypes,
  while right column panels (b) and (d) show the corresponding
  \ugc densities $\qdens$.}
\label{FigHierarchicalMixture}
\end{center}
\end{figure}

For an integer $L \geq 2$, we construct a model that places mass on
exactly $2^L$ prototypes in the Boolean hypercube $\{0,1\}^d$, say
$\{V^1, \ldots, V^M \}$, where $M = 2^L$.  Each prototype is defined by
a path down a binary tree of depth $L$; see the left column
of~\Cref{FigHierarchicalMixture} for some examples.  The geometry of
the prototypes is related to the tree structure in the following way:
for any tree level $\ell \in \{1, \ldots, L\}$, a pair of prototypes
$V^a$ and $V^b$ whose corresponding tree paths first diverge at level
$\ell$ of the tree are separated by Hamming distance $\Delta_\ell$,
for some pre-specified set of distances $\{\Delta_\ell \}_{\ell
  = 1}^{L}$, and total dimension $d = \sum_{\ell = 1}^{L}
\Delta_\ell$. Panel (a) of~\Cref{FigHierarchicalMixture} shows a toy
example with $L = 2$, $(\Delta_1, \Delta_2) = (12, 2)$, and dimension
$d = 12 + 2 = 14$.  It has $M = 2^L = 4$ prototype vectors in
$\{0,1\}^{14}$, assigned the index labels $\{00, 01, 10, 11 \}$,
plotted as rows in the corresponding black-white matrix.  Consider the
pair of prototypes indexed by $00$ and $11$ respectively; they diverge
at level $\ell = 1$ of the tree, and their Hamming distance is equal to
$\Delta_1 = 12$.  Similarly, the prototypes indexed by $00$ and $01$
diverge at level $\ell = 2$ of the tree, and have Hamming distance
$\Delta_2 = 2$.  Panel (c) illustrates a larger construction with $L =
4$, still in a toy dimension $d = 34$ for illustrative purposes.

Panels (b) and (d) show the \ugc densities for the $L = 2$ and $L = 4$
models, using larger separations and dimensions: the $L = 2$ model has
dimension $\usedim = 32776$ and $(\Delta_1, \Delta_2) = (32768, 8)$,
whereas the $L = 4$ model has dimension $\usedim = 34952$ and
$(\Delta_1, \Delta_2, \Delta_3, \Delta_4) = (32768, 2048, 128, 8)$.
Note how the \ugc density for the $L = 2$ model has two distinct peaks
in log-reveal odds.  The level $\ell = 1$ ambiguity is resolved at an
earlier reveal time, since the corresponding prototypes differ by
Hamming distance $\Delta_1 = 32768$; the second peak corresponds to
the later reveal time at which the level $\ell = 2$ ambiguity,
corresponding to Hamming separation $\Delta_2 = 8$, is resolved.
Similar comments apply to the $L = 4$ peaks for the \ugc density in
panel (d).


\subsubsection{From coarse to fine geometry}
\label{SecAlgorithmic}

Let us describe how we exploit finer-grained \ugc geometry for
algorithmic purposes.  A single-block scheme uses one geometric
multiplier across the full log-reveal-odds interval, and its
performance is governed by the coarse complexity $\SB$.  In order to
exploit local \ugc geometry, we instead partition the interval
$[-\Elld, \Elld]$ into $\Btot$ blocks and allow a different geometric
multiplier on each block.  For a partition $\Partition = \{
[\lam_\bind, \lam_{\bind + 1}] \}_{\bind = 0}^{\Btot - 1}$, define the
block lengths $\Sinfo_\bind = \lam_{\bind + 1} - \lam_\bind$ and the
associated \ugc increments $\Hinfo_\bind =
\int_{\lam_\bind}^{\lam_{\bind + 1}} \qdens(\lam) \, d\lam$.
Optimizing the geometric multipliers over these blocks leads to the
partition complexity
\begin{subequations}
\begin{align}
  \label{EqnInterPartComp}
  \PlainPartComp(\Partition) & \defn \left( \sum_{\bind = 0}^{\Btot -
    1} \sqrt{\Sinfo_\bind \Hinfo_\bind} \right)^2.
\end{align}
\Cref{PropTrackMulti} shows that this quantity governs the iteration
complexity of the resulting $\Btot$-block sampler.  Although the
optimal multipliers depend on the unknown increments $\Hinfo_\bind$,
these increments can be estimated from clean samples.  We use these
estimates to choose the partition and its multipliers from data,
leading to the certified guarantees; see~\Cref{PropDataSingle}
and~\Cref{ThmCertifiedMulti}.  Progressive refinement interpolates
between the coarse single-block complexity and the fine-partition
limit~\eqref{EqnDefnFinePart}.  In particular, in~\Cref{ThmFineEuler},
we prove that
\begin{align}
  \label{EqnFPEquiv}
 \inf_{\Partition} \PlainPartComp(\Partition) \; = \; \Big(
 \int_{-\Elld}^{\Elld} \sqrt{\qdens(\lam)} d \lam \Big)^2 \qquad
 \mbox{where the infimum is over partitions of $[-\Elld, \Elld]$.}
\end{align}
\end{subequations}
Thus, increasing the number of blocks allows the sampler to exploit
increasingly fine features of the local \ugc geometry, and ultimately
achieve the fine-grained complexity based on $\sqrt{\qdens}$.

\paragraph{Illustrative example:}
\Cref{FigDiscreteMixBlocks} illustrates this coarse-to-fine
interpolation for the discrete-mixture model
underlying~\Cref{FigDiscreteMixture}.  For $\usedim = 64$, the
single-block scheme has complexity $\SB \approx 65.0$.  A three-block
partition that allocates finer resolution around the narrow
high-density region reduces the partition complexity to approximately
$23.5$, compared with the fine-partition value $\FP \approx 16.9$.

\begin{figure}[h!]
\begin{center}
\begin{tabular}{@{}ccc@{}}
\widgraph{0.31\textwidth}{\figdir/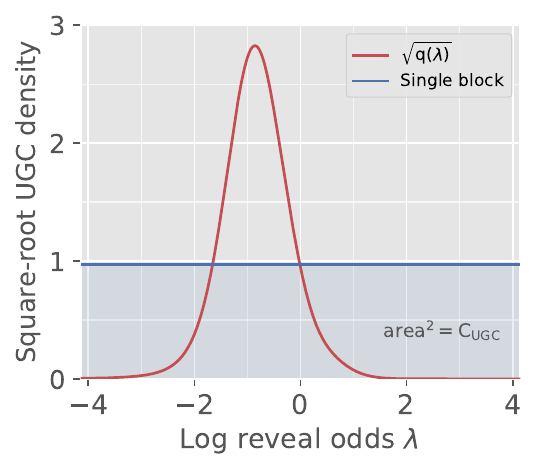}
& \hspace*{0.2in} &
\widgraph{0.31\textwidth}{\figdir/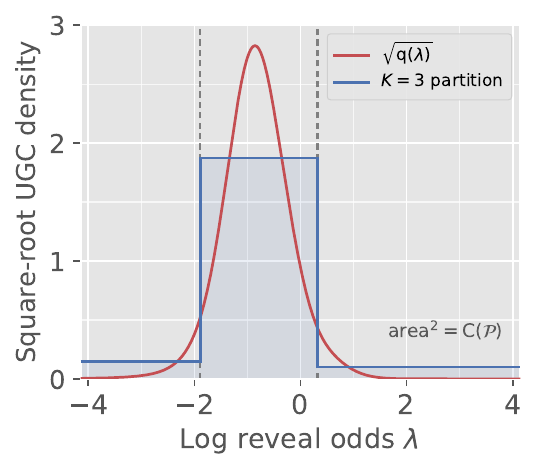}
\\
(a) && (b)
\end{tabular}
\caption{Block geometry for the $d = 64$ random discrete mixture with
  $M = 2^{d / 4}$ and $\eta = 0.02$.  Both panels show the
  square-root \ugc density $\sqrt{\qdens}$ versus the log-reveal-odds
  $\lam$.  Panel (a) uses a single geometric multiplier, yielding
  $\SB \approx 65.0$.  Panel (b) uses the illustrated
  $\Btot = 3$ block partition, yielding
  $\PlainPartComp(\Partition) \approx 23.5$.  The fine-partition
  complexity for this example is $\FP \approx 16.9$.}
\label{FigDiscreteMixBlocks}
\end{center}
\end{figure}

The three-block scheme devotes more iterations to the narrow
high-\ugc region and fewer to the flatter tails.  Even this coarse
adaptation captures most of the available improvement: the complexity
drops from approximately $65.0$ to $23.5$, substantially closing the
gap to the fine-partition value $16.9$.  This example illustrates the
basic role of adaptive partitioning: computation is allocated according
to where the \ugc mass is concentrated, rather than uniformly across
the reveal path.


\section{Unmasking samplers and KL discretization error}
\label{SecMain}

We now turn to the description and analysis of some simple unmasking
samplers.  At a high level, they are based on discretized
approximations to the unmasking reveal process $\{ \Xvar_t, t \in [0,
  1] \}$ from equation~\eqref{EqnDefnRevealProcess}.  Recalling that
$\Masked(x)$ denotes the set of masked coordinates in the vector $x$,
their updates involve the \emph{single-site posterior distributions}
\begin{align}
\label{EqnDefnDenoise}  
\Denoise_i(a, x) & \defn \Prob\big( Z_i = a \mid Z_j = x_j \quad
\mbox{for all $j \notin \Masked(x)$} \big), \quad \mbox{defined for
  each $i \in \Masked(x)$,}
\end{align}
where $a \in \Alphabet$ and $x \in \big( \Alphabet \cup \{\miss \}
\big)^\usedim$.  We use $\Denoise$ to denote the full collection of
these distributions, and often refer to it as the \emph{denoiser}.  It
summarizes residual uncertainty in $\Zvar$ given the observation $x$.

In practice, the denoiser $\Denoise$ is estimated based on samples
from $\Prob_Z$ as follows.  Given a clean sample $\Zvar \sim \ProbZ$,
we can generate training examples by independently masking random
subsets of coordinates.  We then predict the original value of each
masked coordinate from the resulting partially observed sample using
cross-entropy loss.  With this set-up, the population-optimal
predictors are given by the posterior distributions
$\Denoise_i(\mathord\cdot, x)$, and the training procedure generates
estimates $\DenoiseHat_i$ for each $i = 1, \ldots, \usedim$.  For
simplicity, we describe the algorithms using the exact denoiser
$\Denoise$.  Using a learned denoiser $\DenoiseHat$ contributes an
additional term to the KL error bound in a standard way; see
equation~\eqref{EqnAddNoise} following the statement
of~\Cref{ThmMaster} for details.

\subsection{Bernoulli and fixed-cardinality unmasking}
\label{SecSamplers}

We now describe the two standard unmasking samplers that we analyze in
this paper. At a high level, they involve two basic operations:
choosing a subset of indices and then, conditional on this subset,
making a random choice of values for the revealed variables using the
denoiser $\Denoise$.  The samplers that we analyze differ only in how
the random subset is chosen: the \emph{Bernoulli sampler} uses coin
flips, thereby generating a set with a random cardinality, whereas the
\emph{fixed-cardinality sampler} chooses a random subset with fixed
cardinality.  Both samplers are used in practice, and our theory gives
a unified analysis of both in terms of \ugc functions.

Given a subset $A \subseteq [\usedim]$ of coordinates, both samplers
make use of the conditional distribution
\begin{subequations}
\begin{align}
\label{EqnDefnRandUmask}
\RandUmask(X' \mid X, A) & \defn \Big( \bigotimes_{i \notin
  \Masked(X)} \delta_{X_i} \Big) \otimes \left( \bigotimes_{i \in A}
\Denoise_i(\mathord\cdot, X) \right) \otimes \left( \bigotimes_{i \in
  \Masked(X) \setminus A} \delta_\star \right), \qquad \mbox{for $A
  \subseteq \Masked(X)$,}
\end{align}
which fixes all observed variables (i.e., indices $i \notin
\Masked(X)$); fixes all missing variables not in $A$; and makes a random
update to variables in $A$.  Moreover, for an iteration count $N \geq
1$, both samplers are based on a \emph{reveal-time grid}
\begin{align}
\label{EqnDefnRevealGrid}  
0 < \rtime_0 < \rtime_1 < \cdots < \rtime_N < 1.
\end{align}
\end{subequations}
The samplers are initialized with a random choice $\XhatIt{0} \sim
\Qprob_{\rtime_0}$, and then generate a sequence $\{\XhatIt{\iter}
\}_{\iter = 0}^{N}$ moving forward in time across the
grid~\eqref{EqnDefnRevealGrid}.

\subsubsection{Bernoulli unmasking sampler}

We begin by describing the Bernoulli unmasking sampler. It chooses
subsets according to the binomial distribution
\begin{align}
\label{EqnDefnRandSub}
\RandSub(A \mid X, \beta) & \defn \beta^{\card(A)} (1 -
\beta)^{\card(\Masked(X) \setminus A)} \qquad \mbox{with support $A
  \subseteq \Masked(X)$,}
\end{align}
and then updates entries according to the conditional
distribution~\eqref{EqnDefnRandUmask}.  More precisely, given the
initialization $\XhatIt{0}$ and reveal-time
grid~\eqref{EqnDefnRevealGrid}, it generates the sequence
\begin{subequations}
\label{EqnBerUnmask}  
\begin{alignat}{2}
\label{EqnRandSub}
\mbox{\bfseries Draw random binomial subset:} & \quad \Arand{\iter}
\sim \RandSub\big( \mathord\cdot \mid \XhatIt{\iter}, \beta_\iter
\big), \qquad \mbox{where } \beta_\iter \defn \frac{\rtime_{\iter + 1}
  - \rtime_\iter}{1 - \rtime_\iter}, \\
\label{EqnRandUnmask}
\mbox{\bfseries Random unmasking:} & \quad \XhatIt{\iter + 1} \sim
\RandUmask\big( \mathord\cdot \mid \XhatIt{\iter}, \Arand{\iter}
\big),
\end{alignat}
\end{subequations}
for iterations $j = 0, 1, \ldots, N-1$.


\subsubsection{Fixed-cardinality unmasking sampler}

The fixed-cardinality sampler is very similar, except that it replaces
the binomial subset choice~\eqref{EqnDefnRandSub} with a black-box
that draws a uniformly random subset of a specified size.  More
precisely, the fixed-cardinality sampler only admits iteration counts
$N < \usedim - 1$, and it requires that each reveal time be $\usedim$-aligned,
meaning that $\rtime_\iter = \esize_\iter / \usedim$ for integers $1
\leq \esize_0 < \cdots < \esize_N \leq \usedim - 1$.  It initializes
$\XhatIt{0}$ with exactly $\usedim - \esize_0$ masked coordinates.  Then,
at iteration $\iter = 0, \ldots, N - 1$, the current state
$\XhatIt{\iter}$ has exactly $\usedim - \esize_\iter$ masked coordinates.
\begin{subequations}
\begin{alignat}{2}
\label{EqnRandSubUni}
\mbox{\bfseries Draw fixed-cardinality subset:} & \qquad \Arand{\iter}  \sim
\operatorname{Unif} \Big\{ A \subseteq \Masked(\XhatIt{\iter})
\,\Big|\, \card(A) = \esize_{\iter +1} - \esize_\iter \Big\}. \\
\mbox{\bfseries Random unmasking:} & \quad \XhatIt{\iter + 1} \sim
\RandUmask\big( \mathord\cdot \mid \XhatIt{\iter}, \Arand{\iter}
\big),
\end{alignat}
\end{subequations}
For future reference, we note that underlying the fixed-cardinality
sampler is a de-Poissonized version of the unmasking process
$\{\Xvar_t, t \in [0, 1] \}$.  It is defined only at $\usedim$-aligned
times $t = \esize/d$, and the associated variable
\begin{align}
  \label{EqnDefnDePoisson}
  X_t \sim \Prob_t \qquad \mbox{has exactly $\esize$ revealed
    coordinates,}
\end{align}
for a subset of cardinality $\esize$ chosen uniformly at random.  Note
that we are overloading our notation here, but the specific version of
$\Prob_t$ being used will be clear from context.


\subsubsection{Initialization and completion steps}
\label{SecCanonical}
Both samplers involve initialization and completion steps.  In our
analysis, the initialization cost is measured by the KL divergence
\begin{subequations}
  \begin{align}
\label{EqnDefnInitCost}    
\KL(\Prob_{\newrinit} \| \Qprob_{\newrinit}) \qquad \mbox{where
  $\Xvar_\newrinit \sim \Prob_\newrinit$ and $\XhatIt{0} \sim
  \Qprob_{\newrinit}$.}
  \end{align}
Here $\Xvar_\newrinit$ is generated by the standard reveal process for
the Bernoulli sampler~\eqref{EqnDefnRevealProcess}, and from its
de-Poissonized analogue~\eqref{EqnDefnDePoisson} for the
fixed-cardinality sampler.  Moreover, both samplers involve a final
completion step, based on a \emph{completion kernel}
$\CompKernel_\newrfinal(\mathord\cdot \mid x)$ that maps each partially
revealed terminal state $x \in \big( \Alphabet \cup \{\miss\}
\big)^{\usedim}$ to a probability law on fully specified vectors in
$\Alphabet^{\usedim}$.  We denote the resulting completed output by
$\Zhat$ and its law by $\Prob_{\Zhat}$.  The defect of any completion
kernel is measured by
\begin{align}
\label{EqnDefnCompletionDefect}
\CompletionDefect_{\CompKernel}(\newrfinal) & \defn \Exs_{X_T}\left[
  \KL\left( \Law(\Zvar \mid X_T) \,\middle\|\,
  \CompKernel_\newrfinal(\mathord\cdot \mid X_\newrfinal) \right)
  \right].
\end{align}
\end{subequations}
While these terms appear in our bounds, they are not dominant terms.
For example, the following completion kernel $\CompKernel$ has defect
equal to zero, and uses only $\card(\Masked(X_\newrfinal))$ serial
updates.  Fix an ordering of the indices in $\Masked(X_\newrfinal)$,
and then sample from the conditional distributions $\Denoise_i$ in
sequence to fill in the missing entries.  With the endpoint
$\newrfinal = \frac{d-1}{d}$, we have $\card(\Masked(X_\newrfinal)) =
1$ for the exact cardinality sampler and
$\Exs[\card(\Masked(X_\newrfinal))] = 1$ for the Bernoulli sampler, so
that the computational cost of implementing this completion kernel is
minimal.  Similarly, for initialization, it is easy to initialize the
fixed-cardinality sampler with exactly one unmasked entry, so that the
initialization cost vanishes for $\newrinit = 1/d$.  The bounds below
are stated for a general interval $[\newrinit, \newrfinal]$ and
arbitrary initialization-completion choices, but we later specialize
to the interval $\big[ \tfrac{1}{\usedim}, \: 1 - \tfrac{1}{\usedim}
  \big]$, which we refer to as the \emph{canonical reveal time
interval}.


\subsection{Unified analysis}
\label{SecUnified}

We are now set up to state a single theorem that provides a unified
analysis of both the Bernoulli and fixed-cardinality unmasking
samplers.  Our original definition~\eqref{EqnDefnHfun} of the
Bernoulli denoising gain $\hfun$ emphasizes the connection to the
reveal process.  Here, in order to make transparent the connections
with fixed-cardinality sampling, we make use of the following equivalent
representation:
\begin{subequations}
\begin{align}
\label{EqnHfunRandSub}  
  \hfun(\rtime) & = \sum_{\ind = 1}^{\usedim} \Exs_{A_{\ind,\rtime}}
  \big[ \Info(\Zvar_\ind; \Zvar_{A_{\ind,\rtime}}) \big],
\end{align}
where $A_{\ind,\rtime} \subseteq [\usedim] \setminus \{\ind \}$ is a
random subset obtained by including each coordinate of $[\usedim]
\setminus \{ \ind \}$ independently with probability $\rtime$.
See~\Cref{SecHalt} for the proof of this alternative representation.
The analogue for the fixed-cardinality sampler is given by the
\emph{fixed-cardinality unmasking gain} coefficients
\begin{align}
  \label{EqnDefnHcardGain}
\hcard_j & \defn \sum_{\ind = 1}^{\usedim} \Exs_{B_{i,j}}
\big[\Info(\Zvar_\ind; \Zvar_{B_{i,j}}) \big] \qquad \mbox{where
  $B_{i,j}$ is uniform over $j$-cardinality subsets of $[\usedim]
  \setminus \{\ind\}$.}
  \end{align}
\end{subequations}
These coefficients are defined for $j = 1, \ldots, \usedim-1$, with
$\hcard_0 \defn 0$.

These two notions lead to the two versions of \emph{\ugc growth
complexity}, given by
\begin{subequations}
\begin{align}
\label{EqnDefnUGCNew}
\mbox{{\underline{Bernoulli:}}} \qquad \Hinfo(p,q) & \defn \int_p^q
\rtime (1 - \rtime) \hfun'(\rtime) \, d \rtime \qquad \mbox{for pairs
  $0 \leq p < q \leq 1$, and} \\
\label{EqnDefnHcardUGC}
\mbox{{\underline{Exact cardinality:}}} \qquad
\Hcard(a, b) & \defn \sum_{j = a + 1}^{b - 1} \left(
\frac{j}{\usedim} \right) \left( 1 -
\frac{j}{\usedim} \right) \big\{ \hcard_j - \hcard_{j - 1} \big\},
\qquad 0 \leq a < b \leq \usedim.
\end{align}
\end{subequations}

Our main theorem bounds the KL error of both the Bernoulli and
fixed-cardinality unmasking samplers in terms of increments of these
respective \ugc functions.  It applies to any reveal-time
grid~\eqref{EqnDefnRevealGrid}, and the analysis shows why the
\emph{reveal odds function} $\odds(\rtime) \defn \frac{\rtime}{1 -
  \rtime}$ is natural.

\mygraybox{
\begin{theorem}
\label{ThmMaster}
For any grid~\eqref{EqnDefnRevealGrid}, the \Ber-unmasking sampler
produces output $\Zhat$ such that
\begin{subequations}
\begin{align}
\label{EqnMaster}
\KL\left( \Prob_Z \,\middle\|\, \Prob_{\Zhat} \right) & \leq
\sum_{\iter = 0}^{N - 1} \left\{ \frac{\odds(\rtime_{\iter + 1})}
    {\odds(\rtime_\iter)} - 1 \right\} \Hinfo(\rtime_\iter,
    \rtime_{\iter + 1}) + \BOUNDARY.
\end{align}
Moreover, for any $\usedim$-aligned grid $\rtime_\iter =
\frac{a_\iter}{\usedim}$, the \Card-unmasking sampler produces output
$\ZhatCard$ such that
\begin{align}
\label{EqnMasterCard}
\KL\left( \Prob_Z \,\middle\|\,
  \Prob_{\ZhatCard} \right) & \leq \sum_{\iter = 0}^{N - 1}
  \left\{ \frac{\odds(\rtime_{\iter + 1})}
    {\odds\big( \rtime_\iter + \tfrac{1}{\usedim} \big)} - 1
    \right\} \Hcard(a_\iter, a_{\iter + 1}) + \KL(\Prob_{\rtime_0} \|
    \Qprob_{\rtime_0}) + \CompletionDefect_{\CompKernel}(T).
\end{align}
\end{subequations}
\end{theorem}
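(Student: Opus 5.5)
We decompose the error by the chain rule for KL along the Markov path, applied to the true reveal process and to the sampler. The true process $\{\Xvar_{\rtime_\iter}\}_{\iter=0}^N$ followed by $Z$ is a Markov chain. Its transition from $\rtime_\iter$ to $\rtime_{\iter+1}$ reveals each masked coordinate independently with probability $\beta_\iter$, and draws the newly revealed block $\Zvar_A$ from its \emph{joint} conditional law given $\Xvar_{\rtime_\iter}$. The sampler uses the same subset law but the \emph{product} of single-site posteriors $\Denoise_i$. The data-processing inequality and the KL chain rule give
\begin{align*}
\KL(\Prob_Z\|\Prob_{\Zhat}) \le \KL(\Prob_{\rtime_0}\|\Qprob_{\rtime_0}) + \sum_{\iter=0}^{N-1}\Exs\Big[\KL\big(\Law(\Zvar_{A^\iter}\mid \Xvar_{\rtime_\iter},A^\iter)\,\big\|\,\textstyle\bigotimes_{i\in A^\iter}\Denoise_i(\cdot,\Xvar_{\rtime_\iter})\big)\Big] + \CompletionDefect_{\CompKernel}(\newrfinal).
\end{align*}
Each per-step term is a conditional total correlation $\Exs[\sum_{i\in A}\Hinfo(Z_i\mid X)-\Hinfo(Z_A\mid X)]$.

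The core step is to express this expected total correlation exactly through mutual-information quantities of the reveal process. Write $\TC_\iter$ for the $\iter$-th term. Since $A^\iter$ is Bernoulli($\beta_\iter$) on the masked set, the pair $(\Xvar_{\rtime_\iter},\Zvar_{A^\iter})$ has the law of $\Xvar_{\rtime_{\iter+1}}$. Hence $\Exs[\Hinfo(Z_A\mid X_{\rtime_\iter})] = \Info(Z;\Xvar_{\rtime_{\iter+1}})-\Info(Z;\Xvar_{\rtime_\iter})$ in entropy form. In addition, $\Exs\sum_{i\in A}\Hinfo(Z_i\mid X_{\rtime_\iter}) = \beta_\iter\sum_i \Prob(i\in\Masked)\,\Hinfo(Z_i\mid X_{\rtime_\iter}, i\text{ masked})$. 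Using the representation~\eqref{EqnHfunRandSub}, this gives
\begin{align*}
\TC_\iter = (\rtime_{\iter+1}-\rtime_\iter)\big[\Hinfo_{\mathrm{marg}}-\hfun(\rtime_\iter)\big] - \big[I(\rtime_{\iter+1})-I(\rtime_\iter)\big],
\end{align*}
where $I(t)=\Info(Z;\Xvar_t)$ and $\Hinfo_{\mathrm{marg}}=\sum_i\Hinfo(Z_i)$. Differentiating $I$ (each coordinate enters at rate $dt$ and contributes $\Hinfo(Z_i)-[\text{MI with }X_t]$) gives $I'(t)=\Hinfo_{\mathrm{marg}}-\hfun(t)$, which establishes~\eqref{EqnHderivativeMI}. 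Therefore
\begin{align*}
\TC_\iter=\int_{\rtime_\iter}^{\rtime_{\iter+1}}\big(\hfun(s)-\hfun(\rtime_\iter)\big)\,ds=\int_{\rtime_\iter}^{\rtime_{\iter+1}}(\rtime_{\iter+1}-u)\,\hfun'(u)\,du
\end{align*}
by Fubini, with $\hfun'\ge0$ (monotonicity of $\hfun$ follows from the subset representation).

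The remaining step is the weight comparison $\rtime_{\iter+1}-u\le\{\odds(\rtime_{\iter+1})/\odds(\rtime_\iter)-1\}\,u(1-u)$ for $u\in[\rtime_\iter,\rtime_{\iter+1}]$. For $a=\rtime_\iter$ and $b=\rtime_{\iter+1}$, the right side has factor $\frac{b-a}{a(1-b)}$. The ratio $(b-u)/(u(1-u))$ is maximized at $u=a$, where it equals $\frac{b-a}{a(1-a)}\le\frac{b-a}{a(1-b)}$. Integrating against $\hfun'\ge0$ yields~\eqref{EqnMaster}. I expect the main obstacle to be the exact identity for $\TC_\iter$: one must verify carefully that the Bernoulli thinning makes $(X_{\rtime_\iter},Z_{A})$ equal in law to $X_{\rtime_{\iter+1}}$, and that the conditioning on $i\in\Masked$ matches~\eqref{EqnDefnHfun}.

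For the cardinality version I follow the same chain rule with the de-Poissonized process. The per-step total correlation, revealing $a_{\iter+1}-a_\iter$ coordinates uniformly, telescopes via the one-at-a-time chain rule (as in Chen et al.'s exact representation) into $\sum_{j=a_\iter}^{a_{\iter+1}-1}$ of (gain at $j$ minus gain at $a_\iter$). Summation by parts turns this into $\sum_{j}(a_{\iter+1}-j)(\hcard_j-\hcard_{j-1})/\usedim$ over $a_\iter<j<a_{\iter+1}$, with the correct normalization after dividing by the masked count. The discrete weight comparison then gives the $\odds(\rtime_\iter+1/\usedim)$ denominator, because the smallest contributing index is $j=a_\iter+1$. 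The bookkeeping of normalization constants ($\usedim-j$ masked coordinates) is the delicate part there.
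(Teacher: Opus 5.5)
Your proposal is correct and follows the paper's proof essentially step for step: data processing plus the KL chain rule, then the exact per-step identity $\int_{\rtime_\iter}^{\rtime_{\iter+1}} (\rtime_{\iter+1} - u)\,\hfun'(u)\,du$ obtained from $\frac{d}{dt}\Info(\Zvar; \Xvar_t) = \sum_{i} \Ent(\Zvar_i) - \hfun(t)$, and finally the reveal-odds weight comparison against $u(1-u)$. The only real difference is in the cardinality case, where you rederive the exact one-step representation of Chen et al.\ via the one-at-a-time chain rule and summation by parts instead of citing it; this is a valid, self-contained substitute, provided you also record that the discrete comparison needs $\hcard_j \geq \hcard_{j-1}$, which follows from the same chain-rule coupling argument that gives $\hfun' \geq 0$.
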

}

\noindent We prove these two claims
in~\Cref{SecProofThmMasterBer,SecProofThmMasterExact}, respectively.
The proofs are relatively short, and follow the same template of
starting from an exact representation of the KL discretization error
over the interval, and then bounding it in terms of $\Hinfo$ or
$\Hcard$ respectively.  For the Bernoulli sampler, we first derive the
exact KL representation, from which the upper bound~\eqref{EqnMaster}
follows from an elementary argument.  On the other hand, the
fixed-cardinality guarantee~\eqref{EqnMasterCard} makes use of the
exact representation of the KL discretization error, derived by Chen
et al.~\cite{CheEtAl25}; in particular, the coefficients $\hcard_j$
are rescaled versions of their information coefficients.  In contrast
to our upper bounds, neither form of the exact KL error has simple
additive structure in terms of the log-reveal odds.  This structure in
our upper bounds turns out to be very useful: as we discuss in the
sequel (see~\Cref{CorSingle} and~\Cref{AppComplements}), a simple
stepsize choice in the bound~\eqref{EqnMasterCard} gives an immediate
sharpening of the fixed-cardinality sampling guarantees given in the
paper~\cite{CheEtAl25}.  More generally, these upper bounds lend
themselves naturally to developing and analyzing $\Btot$-block
sampling schemes (see~\Cref{SecTracking}).


\paragraph{Estimated denoisers:}
If the sampler replaces each exact single-site posterior
$\Denoise_\ind(\mathord\cdot, x)$ by an estimate
$\DenoiseHat_\ind(\mathord\cdot, x)$, while leaving the random-subset
law and reveal probabilities unchanged, then the right-hand side of
equation~\eqref{EqnMaster} acquires the additional additive term
\begin{align}
  \label{EqnAddNoise}
\Escore(\DenoiseHat) & \defn \sum_{\iter = 0}^{N - 1} \beta_\iter
\Exs\Big[ \sum_{\ind \in \Masked(X_{\rtime_\iter})} \KL\left(
  \Denoise_\ind(\mathord\cdot, X_{\rtime_\iter}) \,\middle \|\,
  \DenoiseHat_\ind(\mathord\cdot, X_{\rtime_\iter}) \right) \Big]
\qquad \mbox{where $\beta_\iter \defn \frac{\rtime_{\iter + 1} -
    \rtime_\iter}{1 - \rtime_\iter}$.}
\end{align}
Here the expectation is under the true reveal-process law of
$X_{\rtime_\iter}$.  A similar comment applies to the \Card-unmasking
sampler.  We establish this fact as part of the proof
in~\Cref{SecProofThmMasterBer}.

\paragraph{Relation between \ugc functions:}

By inspection, it is clear that the $\Ber$-\ugc and $\Card$-\ugc
functions, as defined in equation~\eqref{EqnDefnUGC} and
equation~\eqref{EqnDefnHcardUGC} respectively, are closely related.
This intuition can be formalized as follows.  For any $0 \leq p < q
\leq 1$, define the multinomial triple $(U_0, U_1, U_2) \sim
\operatorname{Multinomial}\left( \usedim + 1;\, p,\, q - p,\, 1 - q
\right)$.  We then have the equivalence
\begin{align}
\label{EqnBerCard}
\Hinfo(p, q) & = \frac{\usedim}{\usedim + 1} \Exs \left[ \Hcard(A_U,
  B_U) \right],
\end{align}
where $A_U \defn \max\{ U_0 - 1, 0 \}$, and $B_U \defn \min\{ U_0 +
U_1, \usedim \}$, and we set $\Hcard(a, b) = 0$ whenever $a \geq b$.
Thus, the $\Ber$-\ugc over $[p, q]$ is the average of $\Card$-\ugc
quantities over randomized cardinality endpoints.
See~\Cref{SecProofCard2Ber} for the proof of the
identity~\eqref{EqnBerCard}.


\subsection{A single-block guarantee and its consequences}

By inspection, the bounds~\eqref{EqnMaster} and~\eqref{EqnMasterCard}
are very well-suited to unmasking-time grid points
$\{\rtime_\iter\}_{\iter = 0}^{N-1}$ for which the reveal odds
$\odds(\rtime_\iter) = \rtime_\iter/(1 - \rtime_\iter)$ evolve in a
geometric way.  In this section, we use~\Cref{ThmMaster} to derive
guarantees for this particularly simple choice of stepsizes.

\subsubsection{Guarantees for a single geometric block}
\label{SecSingle}

For a parameter $\rho > 0$, consider the sequence
\begin{align}
\label{EqnDefnGeoRho}
\makebox[7em][l]{\(\boldsymbol{\Geo(\rho):}\)} \odds(\rtime_{\iter
  + 1}) & = \min \Big \{(1 + \rho) \odds(\rtime_\iter),
\odds(\newrfinal) \Big \}.
\end{align}
When the unmasking algorithm is run with this geometric sequence, we
refer to it as the \emph{$\Geo(\rho)$}-unmasking algorithm.

\mygraybox{
\begin{corollary}[Single-block guarantee for a reveal-odds geometric scheme]
\label{CorSingle}
Given an iteration budget $N \geq \ANNOY$, the $\Geo(\rho)$-unmasking
sampler with multiplier $\rho \defn \exp \big \{ \frac{1}{N}
\log(\odds(\newrfinal)/\odds(\newrinit))\big \} - 1$ produces
completed output $\Zhat$ such that
\begin{align}
\label{EqnSingle}
\KL( \Prob_Z \| \Prob_{\Zhat}) & \leq \frac{2 \Hinfo(\newrinit,
  \newrfinal)}{N} \ANNOY + \KL(\Prob_{\rtime_0} \| \Qprob_{\rtime_0})
+ \CompletionDefect_{\CompKernel}(T).
\end{align}
\end{corollary}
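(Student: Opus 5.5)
This follows directly from the Bernoulli bound~\eqref{EqnMaster} of \Cref{ThmMaster}; we bound the multiplier factors uniformly and then use additivity~\eqref{EqnHinfoAdditive}. With $\rho$ as specified, $(1+\rho)^N = \odds(\newrfinal)/\odds(\newrinit)$. Hence the $\Geo(\rho)$ recursion~\eqref{EqnDefnGeoRho}, started at $\rtime_0 = \newrinit$, satisfies $\odds(\rtime_\iter) = (1+\rho)^\iter \odds(\newrinit)$ for $\iter = 0,\ldots,N$. The truncation at $\odds(\newrfinal)$ is never active before step $N$, and $\rtime_N = \newrfinal$ exactly. Since $\odds$ is strictly increasing, this is a valid grid of the form~\eqref{EqnDefnRevealGrid} with exactly $N$ steps. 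Every ratio therefore satisfies $\odds(\rtime_{\iter+1})/\odds(\rtime_\iter) - 1 = \rho$.

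Substituting into~\eqref{EqnMaster}, the discretization sum becomes $\rho \sum_{\iter=0}^{N-1} \Hinfo(\rtime_\iter, \rtime_{\iter+1})$. By the additivity property~\eqref{EqnHinfoAdditive} this collapses to $\rho\, \Hinfo(\newrinit, \newrfinal)$. The boundary terms $\KL(\Prob_{\rtime_0}\|\Qprob_{\rtime_0}) + \CompletionDefect_{\CompKernel}(T)$ carry over unchanged.

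It remains to show $\rho \le \frac{2}{N} L$, where $L \defn \log(\odds(\newrfinal)/\odds(\newrinit))$. By definition $\rho = e^{x} - 1$ with $x = L/N$. The budget hypothesis $N \ge L$ gives $x \in (0,1]$. On $[0,1]$ the function $e^x - 1$ is convex and vanishes at $0$, so it lies below its chord: $e^x - 1 \le (e-1)x \le 2x$. This yields $\rho \le 2L/N$ and hence~\eqref{EqnSingle}.

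The only delicate point is checking that the truncated geometric recursion produces exactly $N$ steps ending at $\newrfinal$. This holds because $\rho$ is defined precisely so that $(1+\rho)^N$ equals the total odds ratio; this exact-landing property is what justifies applying the uniform multiplier bound to every step. The rest is routine.
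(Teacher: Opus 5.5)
Your proposal is correct and follows the paper's proof: apply \eqref{EqnMaster} with every per-step factor $\odds(\rtime_{\iter+1})/\odds(\rtime_\iter) - 1$ equal to (at most) $\rho$, collapse the sum to $\rho\,\Hinfo(\newrinit,\newrfinal)$ via additivity \eqref{EqnHinfoAdditive}, and bound $e^u - 1 \le 2u$ for $u = \frac{1}{N}\log(\odds(\newrfinal)/\odds(\newrinit)) \in (0,1]$. Your check that the truncated recursion lands exactly on $\newrfinal$ after $N$ steps spells out the paper's ``by construction,'' and your chord bound gives the slightly sharper constant $e-1$.
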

}
\noindent This result is an immediate consequence of~\Cref{ThmMaster},
and we include the proof here.  It makes essential use of the
additivity property~\eqref{EqnHinfoAdditive} of $\Hinfo$.
\begin{proof}
By construction, the schedule traverses the interval $[\newrinit,
\newrfinal]$ in exactly $N$ steps, and moreover, we have
$\frac{\odds(\rtime_{\iter+1})}{\odds(\rtime_\iter)} - 1 \leq \rho$
at each round.  Applying the bound~\eqref{EqnMaster}
from~\Cref{ThmMaster} yields
\begin{align*}
\KL( \Prob_Z \| \Prob_{\Zhat}) & \leq \underbrace{\rho \sum_{\iter =
    0}^{N - 1} \Hinfo(\rtime_\iter, \rtime_{\iter +
    1})}_{\stackrel{(i)}{=} \rho \Hinfo(\newrinit, \newrfinal)} +
\KL(\Prob_{\rtime_0} \| \Qprob_{\rtime_0}) +
\CompletionDefect_{\CompKernel}(T) \\
& \stackrel{(ii)}{\leq} \frac{2 \Hinfo(\newrinit, \newrfinal)}{N} \; \log
\Big(\frac{\odds(\newrfinal)}{\odds(\newrinit)} \Big) +
\KL(\Prob_{\rtime_0} \| \Qprob_{\rtime_0}) +
\CompletionDefect_{\CompKernel}(T),
\end{align*}
where step (i) uses the additivity property~\eqref{EqnHinfoAdditive},
and step (ii) follows since $\exp(u) - 1 \leq 2 u$ for $u \in [0,1]$,
applied with $u = \frac{1}{N} \log \Big(
\frac{\odds(\newrfinal)}{\odds(\newrinit)} \Big)$.
\end{proof}

\paragraph{Exact cardinality sampler:}
An analogous result holds for the fixed-cardinality uniform sampler,
using a slightly different geometric schedule. Given a pair of integers
$a_0 < A$ between $1$ and $\usedim - 1$, we define $t_0 =
\tfrac{a_0}{\usedim}$ and $T = \tfrac{A}{\usedim}$.  Given an
iteration budget $N \geq \log \big(
\frac{\odds(\newrfinal)}{\odds(\newrinit + \tfrac{1}{\usedim})} \big)$,
starting from $a_0$, we recursively choose $a_{j + 1}$ as the largest
integer in $\{a_j + 1, \ldots, A\}$ such that
\begin{subequations}
\begin{align*}
\odds\big( \frac{a_{j + 1}}{\usedim} \big) \leq (1 + \rho) \odds\left(
\frac{a_j + 1}{\usedim} \right) \qquad \mbox{where $\rho \defn \exp
  \Big\{ \frac{1}{N} \log \big(
  \frac{\odds(\newrfinal)}{\odds(\newrinit + \tfrac{1}{\usedim})}
  \big) \Big\} - 1$.}
\end{align*}
With these choices, running the fixed-cardinality sampler yields
completed output $\ZhatCard$ such that
\begin{align*}
\KL\left( \Prob_Z \,\middle\|\, \Prob_{\ZhatCard} \right) & \leq
\frac{2 \Hcard(a_0, A)}{N} \log \Big(
\frac{\odds(\newrfinal)}{\odds(\newrinit + \tfrac{1}{\usedim})} \Big)
+ \KL(\Prob_{\rtime_0} \| \Qprob_{\rtime_0}) +
\CompletionDefect_{\CompKernel}(T).
\end{align*}
\end{subequations}
The proof uses the bound~\eqref{EqnMasterCard} from~\Cref{ThmMaster},
and then applies the same reasoning as the proof of~\Cref{CorSingle}.

\subsubsection{Some consequences}
At a high level, \Cref{CorSingle} and its analogue for
fixed-cardinality sampling show that the iteration complexity when
using a single geometric multiplier protocol is governed by the
\emph{aggregate \ugc mass} functionals $\Hinfo(0,1)$ and $\Hcard(0,
\usedim)$, respectively, for the Bernoulli and fixed-cardinality
versions.  These relations allow us to recover and sharpen several
existing results on unmasking samplers.  As we show
in~\Cref{AppComplements}, the aggregate \ugc complexities associated
with Bernoulli and fixed-cardinality unmasking are equivalent up to
universal constants, and they are closely related to the complexity
governing the CTMC guarantees of Dmitriev et al.~\cite{DmiEtAl26}.
Consequently, the single-block guarantee in~\Cref{CorSingle} yields
comparable guarantees for these different unmasking mechanisms, while
sharpening bounds based only on total correlation and dual total
correlation.  We defer these comparisons, together with further
information-theoretic aspects of aggregate \ugc,
to~\Cref{AppComplements}.



\subsection{Why the \ugc path matters}
\label{SecObscure}

The aggregate \ugc complexity is a coarse measure, relative to the
full log-reveal-odds density that captures the full \ugc path. It is
useful to consider a stylized pair of distributions, based on some
primitives analyzed in the paper~\cite{DmiEtAl26}, that makes this
discrepancy very explicit: while their aggregate complexities are
identical, the \ugc path structure is extremely different, and as we
will see in the next section, the resulting optimal algorithms are
very different.  Recall that the log-reveal-odds \ugc density is given by
\begin{align}
  \label{EqnEquivQdens}
\qdens(\lam) & = r^2 (1 - r)^2 \hfun'(r) \qquad \mbox{where $r =
  \revinv(\lam) = \frac{e^\lam}{1 + e^\lam}$.}
\end{align}

  \begin{figure}[h!]
\begin{center}
\begin{tabular}{ccc}
\widgraph{0.38\textwidth}{\figdir/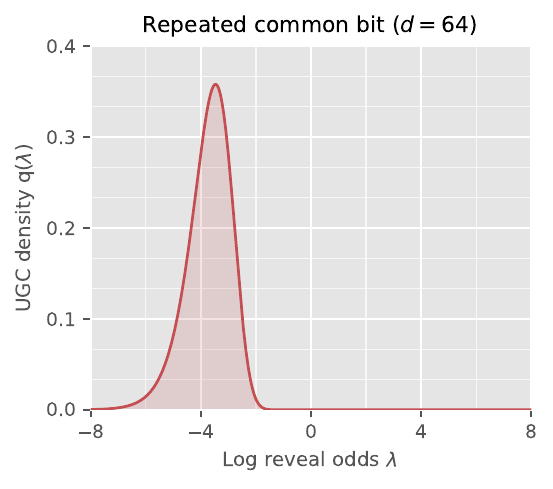}
& \hspace*{.1in} &
\widgraph{0.38\textwidth}{\figdir/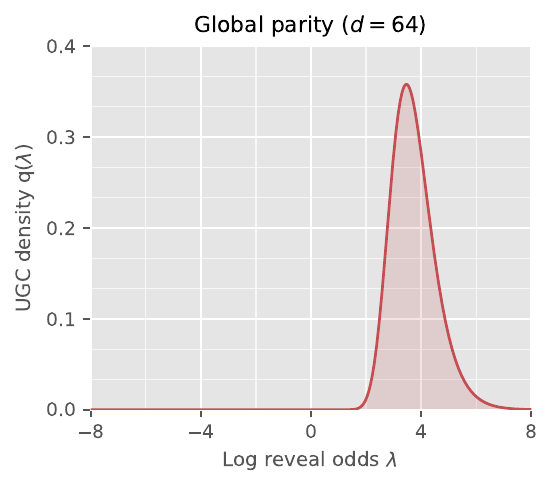}
\\
(a) && (b)
\end{tabular}
\caption{Log-reveal-odds \ugc densities for the repeated-common-bit
  model in panel (a) and the global-parity model in panel (b), both
  with $\usedim = 64$.  The two densities are reflections of one
  another about $\lam = 0$ in log-reveal-odds coordinates: their
  information is concentrated near opposite ends of the reveal path
  even though their full-path \ugc values agree.}
\label{FigRepeatedParityQdens}
\end{center}
\end{figure}

\paragraph{Repeated bit:}
First, suppose that we draw a hidden $\Bit \sim \Ber(1/2)$, and then
generate the random vector $\Zvar \in \{0,1 \}^\usedim$ by setting
$\Zvar_1 = \cdots = \Zvar_\usedim = \Bit$.  This construction is
simply the noiseless ($\eta =0$) instance of the noisy-repeated-bit
example discussed previously in~\Cref{SecGeometry}, and moreover, we
can compute its \ugc density function explicitly.  From
equation~\eqref{EqnEquivQdens}, it suffices to compute $\hfun$ and
then its derivative. Conditional on coordinate $i$ remaining masked,
its mutual-information contribution equals $\Ent(\Bit) = \log 2$ once
at least one of the other $\usedim - 1$ coordinates is revealed; it is
equal to zero otherwise.  It follows that $\hfun_{\mathrm{rep}}(r) =
\usedim \log 2 \big\{ 1 - (1 - r)^{\usedim - 1} \big\}$. Taking
derivatives and combining with the formula~\eqref{EqnEquivQdens}
yields
\begin{subequations}
  \begin{align}
\label{EqnQdensRep}    
  \qdens_{\mathrm{rep}}(\lam) & = \big \{ \usedim (\usedim - 1) \log 2
  \big \} \; r^2 (1 - r)^\usedim.
\end{align}

\paragraph{Global-parity model:}
In the global-parity model, draw $\Zvar_1, \ldots, \Zvar_{\usedim -
  1}$ independently from $\Ber(1 / 2)$ and set $\Zvar_\usedim =
\Zvar_1 \oplus \cdots \oplus \Zvar_{\usedim - 1}$.  Calculations
similar to those in the previous example show that
\begin{align}
\label{EqnQdensPar}
\qdens_{\mathrm{par}}(\lam) & = \big \{ \usedim (\usedim - 1) \log 2
\big \} \; (1 - r)^2 \; r^\usedim.
\end{align}
\end{subequations}
We plot these two \ugc densities in~\Cref{FigRepeatedParityQdens}.
Comparing equation~\eqref{EqnQdensRep} and
equation~\eqref{EqnQdensPar} reveals the relation\footnote{This
follows from the identity $\revinv(-\lam) = 1 - \revinv(\lam)$, which
interchanges $r$ and $1 - r$ under reflection.}
$\qdens_{\mathrm{par}}(\lam) = \qdens_{\mathrm{rep}}(-\lam)$, which
corresponds to reflection across $\lam = 0$ in log-reveal-odds.  This
reflection can be seen by comparing the two panels.

As can be verified via a straightforward calculation, both models have
$\Hinfo(0, 1) = \frac{(\usedim - 1)}{(\usedim + 1)} \log 2$, so that
they have matched aggregate \ugc mass.  However, the full \ugc path
reveals the differences in their distributional structure: the
repeated-bit dependence is resolved near the beginning of the path,
whereas parity dependence persists until the end.  Moreover, as we
develop in the next section, this path difference has important
consequences for designing optimal algorithms.


\section{Certified-optimal bounds for the \ugc path}
\label{SecTracking}
Thus far, we have considered the single-block consequence
of~\Cref{ThmMaster}, based on one geometric multiplier over the full
reveal interval. The additivity~\eqref{EqnHinfoAdditive} of the \ugc
function allows a more refined approach. We first partition the reveal
interval into $\Btot$ blocks, and then optimize the stepsizes within
each block. Doing so leads to a \emph{\ugc partition complexity} that
refines the coarse \ugc bound.  We then show how the required \ugc
increments can be estimated from samples, yielding certified
data-dependent samplers and allowing the block boundaries themselves
to be optimized. Finally, by refining the partition, we recover the
fine-partition complexity $\FP$ in equation~\eqref{EqnDefnFinePart},
which characterizes the optimal discretization error up to a factor of
two.

\subsection{An optimal $\Btot$-block guarantee}

For some $0 < \newrinit < \newrfinal < 1$, we begin with a prescribed
$\Btot$-block partition of the interval $[\newrinit, \newrfinal]$, say
of the form \mbox{ $\Partition \defn \big\{ [\block_{\bind},
    \block_{\bind + 1}] \mid \bind = 0, \ldots, \Btot - 1 \big\}$,}
where $\block_0 = \newrinit$ and $\block_{\Btot} = \newrfinal$. For
each block, define its log-reveal-odds length and \ugc mass by
\begin{align}
  \label{EqnLogRevealBlock}
  \Sinfo_\bind & \defn \logit(\block_{\bind + 1}) -
  \logit(\block_\bind), \quad \mbox{and} \quad \Hinfo_\bind \defn
  \Hinfo(\block_\bind, \block_{\bind + 1}),
\end{align}
where we recall that $\revodds(r) \defn \log(r/(1-r))$.

Given a total iteration budget $N$, suppose that we allocate positive
integers $N_\bind$ with $\sum_{\bind = 0}^{\Btot - 1} N_\bind = N$.
We then run the $\Geo(\rho)$-sampler to traverse the
$\bind^{th}$-block in $N_\bind$ iterations.  Applying~\Cref{ThmMaster}
blockwise yields
\begin{subequations}
\begin{align}
  \label{EqnMultiUpper}
  \KL( \Prob_\Zvar \| \Prob_{\Zhat}) & \leq \sum_{\bind = 0}^{\Btot -
    1} \Big[ \exp\big\{ \Sinfo_\bind / N_\bind \big\} - 1 \Big]
  \Hinfo_\bind + \HACKT.
\end{align}
Here we have again used the additivity
property~\eqref{EqnHinfoAdditive} to bound the error of each block by
the scalar multiple of its \ugc mass.  Hence, for the prescribed
partition, the best guarantee furnished by~\Cref{ThmMaster} is
obtained from the integer program
\begin{align}
  \label{EqnKblockDP}
  \min_{N_0, \ldots, N_{\Btot - 1} > 0} \quad & \sum_{\bind =
    0}^{\Btot - 1} \Big[ \exp\big\{ \Sinfo_\bind / N_\bind \big\} - 1
    \Big] \Hinfo_\bind \qquad \mbox{subject to} \qquad \sum_{\bind =
    0}^{\Btot - 1} N_\bind = N.
\end{align}
\end{subequations}
It is easy to see that this integer program can be solved efficiently
by dynamic programming.

\subsection{Guarantees for an explicit procedure}
\label{SecTrackMulti}

Rather than solving the dynamic program directly, we seek an explicit
allocation that exposes the structure of its solution.  Doing so leads
to the \emph{\ugc-based partition complexity}
\begin{align}
  \label{EqnUGCPartComplex}
  \PartH & \defn \left( \sum_{\bind = 0}^{\Btot - 1}
  \sqrt{\Sinfo_\bind \Hblk_\bind} \right)^2,
\end{align}
which interpolates between the single-block
complexity~\eqref{EqnDefnSingBlock} and the fine-partition
limit~\eqref{EqnDefnFinePart}.

\mygraybox{
\begin{proposition}[Near-optimal geometric schedules for a $\Btot$-block partition]
\label{PropTrackMulti}
Given any $\Btot$-block partition $\Partition$ of $[\newrinit,
  \newrfinal]$ and an iteration budget $\Nscore \geq 2 \left\{ \Btot +
2 \log\left( \frac{\odds(\newrfinal)}{\odds(\newrinit)} \right)
\right\}$, define the geometric multipliers
\begin{subequations}
\begin{align}
  \label{EqnTrackMultiRho}
  \rho_\bind & \defn \min\left\{ 1,\, 4 \frac{\sqrt{\PartH}}{\Nscore}
  \sqrt{\frac{\Sinfo_\bind}{\Hblk_\bind}} \right\}, \qquad \bind = 0,
  1, \ldots, \Btot - 1.
\end{align}
Then the $\Btot$-block \Ber-unmasking procedure based on
$\Partition$ uses at most $\Nscore$ score evaluations and yields an
output $\Zhat$ satisfying
\begin{align}
  \label{EqnTrackMultiKL}
  \KL(\Prob_\Zvar \| \Prob_{\Zhat})
  & \leq
  \frac{4 \PartH}{\Nscore} + \HACKT.
\end{align}
Moreover, if $(\Nstar_0, \ldots, \Nstar_{\Btot - 1})$ denotes the
optimal solution of the dynamic program~\eqref{EqnKblockDP} with
iteration budget $N$, then
\begin{align}
  \label{EqnTrackMultiLower}
  \sum_{\bind = 0}^{\Btot - 1} \rho(\Nstar_\bind) \Hinfo_\bind & \geq
  \frac{\PartH}{N} \qquad \mbox{where $\rho(\Nstar_\bind) \defn
    \exp(\Sinfo_\bind/\Nstar_\bind) - 1$.}
\end{align}
\end{subequations}
\end{proposition}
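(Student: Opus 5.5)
The plan is to handle the upper bound~\eqref{EqnTrackMultiKL} and the lower bound~\eqref{EqnTrackMultiLower} separately. Both rest on the blockwise bound~\eqref{EqnMultiUpper}, which follows from \Cref{ThmMaster} and the additivity property~\eqref{EqnHinfoAdditive}. Throughout, write $\Sinfo_\bind$ for the log-reveal-odds length of block $\bind$ and $\Hblk_\bind$ for its \ugc mass.

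\emph{Upper bound.} On block $\bind$ we run $\Geo(\rho_\bind)$, truncated at the right endpoint $\block_{\bind+1}$. Every step then has odds ratio minus one at most $\rho_\bind$. Summing with additivity bounds the block error by $\rho_\bind \Hblk_\bind$, and the number of steps is
\begin{align*}
N_\bind = \lceil \Sinfo_\bind / \log(1+\rho_\bind) \rceil .
\end{align*}

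For the KL error, $\rho_\bind \le 4\sqrt{\PartH}\sqrt{\Sinfo_\bind/\Hblk_\bind}/\Nscore$. Hence the total error is at most
\begin{align*}
\sum_\bind \rho_\bind \Hblk_\bind \le \frac{4\sqrt{\PartH}}{\Nscore}\sum_\bind \sqrt{\Sinfo_\bind\Hblk_\bind} = \frac{4\PartH}{\Nscore},
\end{align*}
which gives~\eqref{EqnTrackMultiKL} once the initialization and completion terms are added.

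For the iteration count, we use $\log(1+\rho)\ge \rho/2$ for $\rho\in(0,1]$. This gives $N_\bind \le 1 + 2\Sinfo_\bind/\rho_\bind$, and we split into two cases according to which term attains the minimum in~\eqref{EqnTrackMultiRho}.
\begin{itemize}
\item If $\rho_\bind = 1$, then $N_\bind \le 1 + 2\Sinfo_\bind$. Summing over blocks gives at most $\Btot + 2\log(\odds(\newrfinal)/\odds(\newrinit))$, since $\sum_\bind \Sinfo_\bind = \log(\odds(\newrfinal)/\odds(\newrinit))$. This is at most $\Nscore/2$ by the budget assumption.
\item Otherwise $2\Sinfo_\bind/\rho_\bind = \Nscore\sqrt{\Sinfo_\bind\Hblk_\bind}/(2\sqrt{\PartH})$. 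Summed over blocks this is at most $\Nscore/2$.
\end{itemize}
Combining the two cases gives $\sum_\bind N_\bind \le \Btot + \Nscore/2 + (\text{contribution of saturated blocks})$. The bookkeeping must be done carefully: bound each block's count by $1 + 2\Sinfo_\bind + \Nscore\sqrt{\Sinfo_\bind\Hblk_\bind}/(2\sqrt{\PartH})$, the sum of the two case bounds, and then sum. This yields a total of at most $\Btot + 2\log(\cdot) + \Nscore/2 \le \Nscore$. If $\Hblk_\bind = 0$, set $\rho_\bind = 1$; the error contribution of that block is then zero and the count argument is unchanged.

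\emph{Lower bound.} We use $e^x - 1 \ge x$ to get $\rho(\Nstar_\bind)\Hblk_\bind \ge \Sinfo_\bind\Hblk_\bind/\Nstar_\bind$. By Cauchy--Schwarz,
\begin{align*}
\Big(\sum_\bind \sqrt{\Sinfo_\bind\Hblk_\bind}\Big)^2 \le \Big(\sum_\bind \frac{\Sinfo_\bind\Hblk_\bind}{\Nstar_\bind}\Big)\Big(\sum_\bind \Nstar_\bind\Big),
\end{align*}
and the last factor equals $N$. Rearranging gives exactly~\eqref{EqnTrackMultiLower}; the same argument in fact applies to any feasible allocation, not only the optimum.

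\emph{Main obstacle.} The delicate part is the iteration accounting. It has to handle ceiling effects, truncation at block endpoints (the last step in a block may be shorter, but it only helps the error), and the clipping at $1$ in~\eqref{EqnTrackMultiRho}. All of these must be absorbed into the stated budget condition with the constants $2$ and $4$. The error analysis itself is immediate.
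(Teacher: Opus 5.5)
Your proposal is correct and follows essentially the paper's route. The paper writes this argument out in the proof of \Cref{ThmCertifiedMulti}, with estimated increments in place of $\Hblk_\bind$. The KL bound uses $\rho_\bind \Hblk_\bind \leq 4\sqrt{\PartH}\sqrt{\Sinfo_\bind \Hblk_\bind}/\Nscore$ summed over blocks. The step count uses $\lceil \Sinfo_\bind/\log(1+\rho_\bind) \rceil \leq 1 + 2\Sinfo_\bind + \Nscore\sqrt{\Sinfo_\bind \Hblk_\bind}/(2\sqrt{\PartH})$, and summing gives at most $\Nscore$. Your lower bound via $e^x - 1 \geq x$ and Cauchy--Schwarz against $\sum_\bind \Nstar_\bind = N$ is correct; the paper does not write this step out here and defers the whole proof to its companion work.
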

}

\noindent
The proof is an unmasking analogue of the argument used for Gaussian
diffusion in our companion paper~\cite{Wai26}.  The principal
changes are the log-reveal-odds clock, the corresponding \ugc block
masses, and the boundary terms; the remainder follows \emph{mutatis
mutandis}.  This parallelism reflects a common geometric structure
underlying Gaussian diffusion and discrete unmasking, despite their
different noise mechanisms and natural path coordinates.

The upper and lower bounds in~\Cref{PropTrackMulti} show that, apart
from the initial and terminal cost, the explicit allocation is within a
factor of $4$ of the optimal value of the dynamic
program~\eqref{EqnKblockDP}.  It also always improves upon the
single-block complexity.  Indeed, by Cauchy--Schwarz and the
additivity~\eqref{EqnHinfoAdditive} of the \ugc complexity, we have
\begin{align}
  \label{EqnCauchy}
  \PartH & = \left( \sum_{\bind = 0}^{\Btot - 1} \sqrt{\Sinfo_\bind
    \Hblk_\bind} \right)^2 \leq \left( \sum_{\bind = 0}^{\Btot - 1}
  \Sinfo_\bind \right) \left( \sum_{\bind = 0}^{\Btot - 1} \Hblk_\bind
  \right) = \big\{ \logit(\newrfinal) - \logit(\newrinit) \big\}
  \Hinfo(\newrinit, \newrfinal) = \PartComp([\newrinit, \newrfinal]).
\end{align}
Equality holds if and only if there is a scalar $c > 0$ such that
$\Sinfo_\bind = c \Hblk_\bind$ for every block.  Thus, partitioning
provides a strict improvement whenever the \ugc mass is distributed
non-uniformly relative to log-reveal-odds length.


\subsection{Sandwich-\ugc estimators from KL increments}
\label{SecDataSandwich}

While the bounds in~\Cref{PropTrackMulti} are attractive, the
geometric multipliers $\rho_k$ depend on knowledge of the \ugc
increments $\Hinfo_\bind$ for each block $\bind$.  In this section, we
describe how it is possible to estimate these increments, assuming
that we have available a collection of i.i.d. samples
$\{\Zclean{\ell} \}_{\ell=1}^m$ from the original distribution
$\Prob_Z$.  This estimator forms the basis for the certified-optimal
sampling schemes described in~\Cref{SecDataMultiple}.


\subsubsection{Kullback--Leibler unmasking increments}
\label{SecMaskDenoisingIncrements}

From the definition~\eqref{EqnDefnHfun}, the Bernoulli unmasking gain
$\hfun$ is defined in terms of the original reveal process $\Xvar_p$
with additional conditioning on the event $\{ i \in \Masked(\Xvar_p) \}$.
Consequently, it is useful to introduce the \emph{forced-mask
reveal-time process} that explicitly encodes this conditioning.
Recalling the uniform random variables that underlie the masking
process~\eqref{EqnMaskingProcess}, we define
\begin{subequations}
\begin{align}
\label{EqnForcedMaskCoupling}  
\NewMask{i}{\rtime} \defn \{ i \} \cup \big\{ \ell \in [\usedim]
\setminus \{ i \} \mid \Uvar_\ell > \rtime \big\}, \quad \mbox{and}
\quad \NewXvar{i}{\rtime} \defn \left( \NewMask{i}{\rtime},
\Zmask{(\NewMask{i}{\rtime})^c} \right).
\end{align}
Due to the shared set of uniform random variables, the process
satisfies a natural coupling: for $q > p$, the variable
$\NewXvar{i}{q}$ can be obtained from $\NewXvar{i}{p}$ by revealing each
variable in $\Masked(\NewXvar{i}{p}) \setminus \{ i \}$ independently with probability
$\frac{q - p}{1 - p}$.  Moreover, by construction, the random variable
$\NewXvar{i}{\rtime}$ has the law of the original reveal process
$X_\rtime$ conditional on $i \in \Masked(X_\rtime)$.

For $0 \leq p < q \leq 1$, we define the \emph{KL unmasking increment}
\begin{align}
\label{EqnDefnDinfoMask}
\Dinfo(p, q) & \defn (q - p) \sum_{i = 1}^{\usedim} \Exs\left[
  \KL\left( \pi_{i, q} \,\middle\|\, \pi_{i, p} \right) \right] \qquad
\mbox{where $\pi_{i, \rtime}(\cdot) \equiv \Denoise_i\left(
  \mathord\cdot, \NewXvar{i}{\rtime} \right)$ for each $\rtime \in
               [0, 1]$,}
\end{align}
\end{subequations}
and the expectation is taken with respect to the joint forced-mask
coupling $(\NewXvar{i}{p}, \NewXvar{i}{q})$ for each $i = 1, \ldots,
\usedim$.  With this set-up, we have the following guarantee:

\mygraybox{
\begin{lemma}
\label{LemDinfoSandwich}
For every $0 \leq p < q < 1$, we have the exact relation
\begin{subequations}
  \begin{align}
    \label{EqnDinfoExact}
\frac{\Dinfo(p, q)}{q - p} & = \hfun(q) - \hfun(p).
\end{align}
For every $0 < p < q < 1$, we also have the sandwich relation
\begin{align}
\label{EqnDinfoSandwich}
\frac{1}{\myrho} \Dinfo(p, q) \; \leq \; \Hinfo(p, q) \; \leq \;
\frac{1 + \myrho}{\myrho} \Dinfo(p, q) \, \qquad \mbox{where $\myrho
  \defn \dfrac{\odds(q)}{\odds(p)} - 1 > 0$.}
\end{align}
\end{subequations}
\end{lemma}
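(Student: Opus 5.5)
The plan is to prove the exact relation~\eqref{EqnDinfoExact} coordinate by coordinate using the forced-mask coupling and the chain rule for mutual information. The sandwich~\eqref{EqnDinfoSandwich} will then follow from elementary bounds on the weight $\rtime(1-\rtime)$ over $[p,q]$.

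\emph{Step 1 (exact relation).} Fix $i$, and write $X_p = \NewXvar{i}{p}$ and $X_q = \NewXvar{i}{q}$ for the coupled forced-mask variables. Since $\pi_{i,\rtime}$ is the conditional law of $\Zvar_i$ given $\NewXvar{i}{\rtime}$, we have $\pi_{i,q} = \Law(\Zvar_i \mid X_q)$. The first point to check is that $\pi_{i,p} = \Law(\Zvar_i \mid X_p, X_q)$. The reason is that $X_p$ is a deterministic function of $X_q$ together with the uniforms $\{\Uvar_\ell\}$, and conditionally on $X_q$ these uniforms are independent of $\Zvar$ (they only decide which already-revealed coordinates were revealed before time $p$). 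Hence conditioning on $X_p$ adds nothing beyond $X_q$ about $\Zvar_i$. The standard identity $\Exs[\KL(\Law(\Zvar_i\mid X_q)\,\|\,\Law(\Zvar_i\mid X_p))] = \Info(\Zvar_i; X_q \mid X_p)$ then applies. By the chain rule and the same Markov structure $\Zvar_i \to X_q \to X_p$, this equals $\Info(\Zvar_i; X_q) - \Info(\Zvar_i; X_p)$.

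Because $\NewXvar{i}{\rtime}$ has the law of $X_\rtime$ conditioned on $i \in \Masked(X_\rtime)$, these are exactly the summands in the definition~\eqref{EqnDefnHfun} of $\hfun$. Summing over $i$ and multiplying by $(q-p)$ gives $\Dinfo(p,q) = (q-p)\{\hfun(q)-\hfun(p)\}$. The one delicate point is the conditional-independence claim for the uniforms given $X_q$, which I would verify directly from the product structure of~\eqref{EqnForcedMaskCoupling}. I expect this to be the main obstacle, modest as it is.

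\emph{Step 2 (monotonicity).} Applying Step 1 to arbitrary sub-intervals shows $\hfun(b)-\hfun(a) = \Dinfo(a,b)/(b-a) \ge 0$ for all $a<b$. By the representation~\eqref{EqnHfunRandSub}, $\hfun$ is a polynomial in $\rtime$, so it is differentiable with $\hfun' \ge 0$ on $(0,1)$.

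\emph{Step 3 (sandwich).} Since $\odds(q)/\odds(p) - 1 = \frac{q(1-p) - p(1-q)}{p(1-q)} = \frac{q-p}{p(1-q)}$, we have $\myrho = (q-p)/(p(1-q))$. Combining with Step 1 gives
\begin{align*}
\frac{\Dinfo(p,q)}{\myrho} = p(1-q)\{\hfun(q)-\hfun(p)\}, \qquad \frac{1+\myrho}{\myrho}\Dinfo(p,q) = q(1-p)\{\hfun(q)-\hfun(p)\},
\end{align*}
where the second uses $p(1-q)(1+\myrho) = q(1-p)$. For $\rtime\in[p,q]$ we have $p(1-q) \le \rtime(1-\rtime) \le q(1-p)$, because $p \le \rtime \le q$ and $1-q \le 1-\rtime \le 1-p$. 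Since $\hfun'\ge 0$ by Step 2, integrating $\rtime(1-\rtime)\hfun'(\rtime)$ over $[p,q]$ gives
\begin{align*}
p(1-q)\{\hfun(q)-\hfun(p)\} \le \Hinfo(p,q) \le q(1-p)\{\hfun(q)-\hfun(p)\},
\end{align*}
which is exactly~\eqref{EqnDinfoSandwich}.
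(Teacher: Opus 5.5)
Your proposal is correct and follows the paper's proof essentially step for step. The Markov chain $Z_i \to X_{q\mid i} \to X_{p\mid i}$ turns the expected KL into $\Info(Z_i; X_{q\mid i}) - \Info(Z_i; X_{p\mid i})$, and the sandwich comes from integrating $p(1-q) \le t(1-t) \le q(1-p)$ against $\hfun' \ge 0$. The one small difference is that you derive $\hfun' \ge 0$ from the exact relation itself (nonnegativity of KL plus polynomiality of $\hfun$), whereas the paper takes it from the monotonicity of the Bernstein coefficients proved in the appendix.

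There is one slip in Step 1: the identity you need, and in fact justify, is $\pi_{i,q} = \mathcal{L}(Z_i \mid X_{p\mid i}, X_{q\mid i})$, not $\pi_{i,p}$.
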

}
\noindent See~\Cref{SecProofLemDinfoSandwich} for the proof.

Observe that the sandwich relation~\eqref{EqnDinfoSandwich} is
particularly well-suited to intervals $[p,q]$ for which the ratio of
the reveal odds $\odds(r) = r/(1-r)$ is well-controlled.  Concretely,
for a dyadic interval with $\odds(q)/\odds(p) = 2$, the relation
provides a factor two sandwich.  We now exploit this fact to design an
estimator for a partition of arbitrary length.


\subsubsection{A tail-robust estimate of the \ugc increment}

We now specify and analyze a tail-robust estimate of the \ugc
increment.  For an arbitrary pair $0 < p < q < 1$, consider the
\emph{reveal-odds-dyadic partition} $\{v_j\}_{j=0}^{J}$ given by
\begin{subequations}
\begin{align}
\label{EqnRevDyadic}    
  \odds(v_j) & \defn \min\{ 2^j \odds(p), \odds(q) \} \qquad
  \mbox{where $\odds(r) \defn r/(1-r)$ and $J \defn \left\lceil
    \log_2\left( \frac{\odds(q)}{\odds(p)} \right) \right\rceil$.}
\end{align}
For each interval $[v_j, v_{j+1}]$,~\Cref{LemDinfoSandwich} guarantees
that $\Dinfo(v_j, v_{j+1}) \; \leq \; \Hinfo(v_j, v_{j+1}) \; \leq \;
2 \Dinfo(v_j, v_{j+1})$, and hence that $\sum_{j = 0}^{J-1}
\Dinfo(v_j, v_{j+1}) \; \leq \; \Hinfo(p,q) \; \leq \; 2 \sum_{j =
  0}^{J-1} \Dinfo(v_j, v_{j+1})$, where we have used the additivity
property~\eqref{EqnHinfoAdditive} of $\Hinfo$.

Given a clean sample $\Zclean{\ell}$, we can simulate the forced-mask
reveal process $\XvarForce{t}$ over the interval $t \in [0,1]$, for
each coordinate $i = 1, \ldots, \usedim$, and using these
trajectories, we can construct the \emph{trajectory statistic}
\begin{align}
  \label{EqnDefnQhatSam}
  \QhatSam & \defn \sum_{i = 1}^\usedim \left \{ \sum_{j = 0}^{J - 1}
  (v_{j + 1} - v_j) \KL\Big( \Denoise_i(\cdot,
  \XvarForce{v_{j + 1}}) \| \Denoise_i(\cdot,
  \XvarForce{v_{j}}) \Big) \right \}.
\end{align}
By construction, we have $\Exs[\QhatSam] = \sum_{j=0}^{J-1}
\Dinfo(v_j, v_{j+1})$, where the expectation is taken over the sample
$\Zclean{\ell}$, along with the forced-mask trajectory randomness.
Combined with the factor two sandwich property, we have constructed a
statistic such that
\begin{align}
\label{EqnQhatSamSand}  
\Exs[ \QhatSam] \; \leq \; \Hinfo(p, q) \; \leq \; 2 \Exs[\QhatSam].
\end{align}
\end{subequations}

\paragraph{A tail-robust estimator:}
Given the statistic $\QhatSam$ from equation~\eqref{EqnDefnQhatSam},
the standard Monte Carlo estimate is given by $\frac{1}{\numsam}
\sum_{\samind=1}^\numsam \QhatSam$.  It is unbiased and can be
analyzed, but can be overly sensitive to the tail behavior of the
underlying KL divergences in the estimate.  To introduce tail
robustness, we instead analyze a truncated version of this
estimator.  It requires a moment bound, but provides exponential
Bernstein-type tail control.

More precisely, suppose that, for some moment order $\alpha \geq 4$,
the changes in the one-coordinate denoisers satisfy the aggregate
moment bound
\begin{subequations}
\begin{align}
\label{EqnMomentCondition}
\max_{j = 0, \ldots, J - 1} \left\{ \Exs\left[ \left\{ \sum_{i =
    1}^{\usedim} \KL\left( \Denoise_i\left(
  \mathord\cdot, \XvarForce{v_{j + 1}} \right) \,\middle\|\,
  \Denoise_i\left( \mathord\cdot, \XvarForce{v_j} \right)
  \right) \right\}^{\alpha / 2}
  \right] \right\}^{2 / \alpha} & \leq B_\alpha \qquad \mbox{for some
  known $B_\alpha > 0$.}
\end{align}
Under this condition, we define the
$\tau$-truncated estimator
\begin{align}
\label{EqnDefnHhat}
\Hhat(p, q) & \defn \frac{2}{\numobs} \sum_{\samind = 1}^{\numobs}
\min\{ \Qfunup{\samind}, \tau \} \qquad \mbox{where $\tau \defn (q -
  p) B_\alpha \left\{ \frac{3 (\numobs - 1)} {7 \log(4 / \eta)}
  \right\}^{2 / \alpha}$.}
\end{align}
Here $\eta \in (0,1)$ is a user-chosen failure probability and
$\{\Qfunup{\samind} \}_{\samind = 1}^\numobs$ are i.i.d. copies of the
trajectory statistic $\QhatSam$ from equation~\eqref{EqnDefnQhatSam}.
We also define the (unbiased) empirical variance of our truncated
estimator as
\begin{align}
\label{EqnDefnEmpBernstein}
\Vhat & \defn \frac{1}{\numobs - 1} \sum_{\samind = 1}^{\numobs}
\left( 2 \min\{ \Qfunup{\samind}, \tau \} - \Hhat(p, q) \right)^2.
\end{align}
\end{subequations}

\mygraybox{
\begin{proposition}[Factor-two data-dependent sandwich on \ugc]
\label{PropDataSingle}
Given the estimate $\Hhat(p,q)$, the \ugc increment $\Hinfo(p,q)$ can
be sandwiched as
\begin{subequations}
\begin{align}
\label{EqnDataSingle}
\Hinfo(p, q) & \stackrel{(A)}{\leq} \Hhat(p, q) + \HackErr
\stackrel{(B)}{\leq} 2 \big\{ \Hinfo(p, q) + \HackErr \big\} \qquad
\mbox{with probability at least $1 - \eta$,}
\end{align}
where the upper confidence correction takes the form
\begin{align}
\label{EqnDefnHackErr}
\HackErr & \defn \sqrt{ \frac{2 \Vhat \log(4 / \eta)} {\numobs} } + 4
(q - p) B_\alpha \left\{ \frac{7 \log(4 / \eta)} {3 (\numobs - 1)}
\right\}^{1 - 2 / \alpha}.
\end{align}
\end{subequations}
\end{proposition}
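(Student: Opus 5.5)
The argument combines the factor-two sandwich~\eqref{EqnQhatSamSand} with a two-sided empirical-Bernstein bound for the truncated mean, while separately controlling the truncation bias via the moment condition~\eqref{EqnMomentCondition}. Write $\mu \defn \Exs[\QhatSam]$ and $\mu_\tau \defn \Exs[\min\{\QhatSam, \tau\}]$, so that $\mu_\tau \leq \mu$, and by~\eqref{EqnQhatSamSand} we have $\mu \leq \Hinfo(p,q) \leq 2\mu$. The variables $Y_\samind \defn 2\min\{\Qfunup{\samind},\tau\}$ are i.i.d., take values in $[0, 2\tau]$, and have mean $2\mu_\tau$. The estimator $\Hhat(p,q)$ is their empirical mean, and $\Vhat$ is their unbiased sample variance.

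\textbf{Step 1 (truncation bias).} I would bound $\mu - \mu_\tau = \Exs[(\QhatSam - \tau)_+] \leq \Exs[\QhatSam^{\alpha/2}]/\tau^{\alpha/2 - 1}$. To obtain the $\alpha/2$ moment of $\QhatSam$, note that $\QhatSam = \sum_j (v_{j+1}-v_j) S_j$, where $S_j$ is the aggregate KL in~\eqref{EqnMomentCondition}. Since $\sum_j (v_{j+1} - v_j) = q - p$, Jensen's inequality (equivalently, Minkowski's inequality for the convex combination) gives $\|\QhatSam\|_{\alpha/2} \leq (q-p) B_\alpha$. Substituting $\tau = (q-p) B_\alpha \{3(\numobs-1)/(7\log(4/\eta))\}^{2/\alpha}$ yields
\begin{align*}
2(\mu - \mu_\tau) \; \leq \; 2 (q-p) B_\alpha \Big\{\tfrac{7\log(4/\eta)}{3(\numobs-1)}\Big\}^{1-2/\alpha}.
\end{align*}

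\textbf{Step 2 (empirical Bernstein).} I would apply the Maurer--Pontil empirical Bernstein inequality two-sidedly, with confidence $\eta/2$ per side, to variables in $[0, 2\tau]$. This gives $|\Hhat - 2\mu_\tau| \leq \sqrt{2\Vhat\log(4/\eta)/\numobs} + \tfrac{7 \cdot 2\tau \log(4/\eta)}{3(\numobs-1)}$ with probability at least $1 - \eta$. The constants $7/3$ and $\log(4/\eta)$ in the definition of $\tau$ are chosen precisely so that the range term equals $2\tau \cdot \tfrac{7\log(4/\eta)}{3(\numobs-1)} = 2(q-p)B_\alpha\{\tfrac{7\log(4/\eta)}{3(\numobs-1)}\}^{1-2/\alpha}$. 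This matches the bias term, so the two together make up the $4(q-p)B_\alpha\{\cdot\}^{1-2/\alpha}$ term in $\HackErr$.

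\textbf{Step 3 (conclusion).} For (A), on the good event, $\Hinfo(p,q) \leq 2\mu = 2\mu_\tau + 2(\mu-\mu_\tau) \leq \Hhat + \sqrt{\cdot} + (\text{range}) + (\text{bias}) = \Hhat + \HackErr$. For (B), the other side gives $\Hhat \leq 2\mu_\tau + \HackErr \leq 2\mu + \HackErr \leq 2\Hinfo(p,q) + \HackErr$, using $\mu \leq \Hinfo$. Hence $\Hhat + \HackErr \leq 2\Hinfo + 2\HackErr$.

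The main obstacle is the constant bookkeeping. One must verify that the empirical Bernstein range term with range $2\tau$, combined with the bias bound, produces exactly the coefficient $4$, and that the union bound over the two sides is consistent with $\log(4/\eta)$. If the Maurer--Pontil constants do not fit as stated, I would instead keep the variance term in its $\Vhat$ form and absorb the slack into the truncation level.
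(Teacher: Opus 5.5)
Your proposal is correct and follows the paper's proof essentially step for step. The shared steps are the Minkowski moment bound $\|\QhatSam\|_{\alpha/2} \leq (q-p)B_\alpha$, the truncation-bias bound $2\Exs[(\QhatSam-\tau)_+] \leq 2\{(q-p)B_\alpha\}^{\alpha/2}/\tau^{\alpha/2-1}$, and the two-sided Maurer--Pontil bound applied to the rescaled truncated variables with failure probability $\eta/2$ per side. As in the paper, the choice of $\tau$ makes the range and bias terms each equal to $2(q-p)B_\alpha\{7\log(4/\eta)/(3(\numobs-1))\}^{1-2/\alpha}$. The only cosmetic difference is in (B): you use the favorable sign $\mu_\tau \leq \mu$ of the truncation bias, whereas the paper uses the symmetric bound $|2\mu - \Hhat(p,q)| \leq \HackErr$; both give the same conclusion. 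Your closing caveat is unnecessary, since the constants fit exactly.
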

}
\noindent See~\Cref{SecProofPropDataSingle} for the proof.

\paragraph{Interpretation of the tail condition:}
Condition~\eqref{EqnMomentCondition} controls the tails of the
aggregate change in the one-coordinate denoisers over each
reveal-odds-dyadic interval.  This choice is important: it measures
the same posterior changes that enter the trajectory
statistic~\eqref{EqnDefnQhatSam}, rather than the absolute uncertainty
of the individual denoisers.  A condition on the true-symbol log loss,
for instance, could be large even when revealing additional
coordinates produces no change in any posterior.

As a basic example, suppose that the coordinates of $\Zvar$ are
independent.  For the exact Bayes denoisers considered here, the
forced-mask observation $\NewXvar{i}{\rtime}$ reveals only masking
variables and a subset of the coordinates $\Zvar_{-i}$.  Consequently,
$\Zvar_i$ is independent of $\NewXvar{i}{\rtime}$, and hence
\begin{align*}
\Denoise_i\left( \mathord\cdot,
  \NewXvar{i}{\rtime} \right) & = \Law(\Zvar_i) \qquad
  \mbox{for every $\rtime \in [0, 1]$.}
\end{align*}
All of the KL increments in equation~\eqref{EqnMomentCondition} therefore
vanish, so that the condition holds with any positive $B_\alpha$, in agreement
with the fact that the corresponding \ugc increment is zero.


\subsection{Sampling schemes that are certified-optimal}
\label{SecDataMultiple}

In this section, we turn to the key problem of designing samplers that
are \emph{certified-optimal}, meaning that their iteration complexity
meets the optimum specified by the partition complexity functional, up
to data-dependent residual terms, and that for any target accuracy
$\varepsilon > 0$ and failure probability $\eta \in (0, 1)$, there
are data-dependent parameter choices for the sampler
guaranteeing that its output $\Zhat$ satisfies the accuracy bound
\begin{align}
  \label{EqnCertOpt}
  \KL \big(\Prob_{\Zvar} \|
    \Prob_{\ZhatSmall} \big) & \leq \varepsilon \qquad
    \mbox{with probability at least $1 - \eta$.}
\end{align}
In~\Cref{SecDataKblock}, we specify and analyze a certified procedure
for a given $\Btot$-block partition, whereas in~\Cref{SecDataDP},
we describe certified procedures based on dynamic programming for
choosing the block boundaries.


\subsubsection{Certified-optimal sampling using $\Btot$-blocks}
\label{SecDataKblock}

Suppose that we are given a $\Btot$-block partition of the interval
$[\newrinit, \newrfinal]$, based on blocks of the form $[\block_\bind,
  \block_{\bind+1}]$.  Our goal is to design a sampler that achieves
the optimal $\Btot$-bound from~\Cref{PropTrackMulti}, but using a
choice of geometric multipliers that can be estimated based on samples
$\{\Zclean{\samind}\}_{\samind=1}^\numobs$.  Here we describe a simple
way of doing so, one which exploits the guarantees
from~\Cref{PropTrackMulti} as well as the \ugc increment estimator
analyzed in~\Cref{PropDataSingle}.

Suppose that, for each block $[\block_{\bind}, \block_{\bind+1}]$, we
apply the tail-robust estimator from~\Cref{PropDataSingle} with
failure probability $\eta/\Btot$.
We then obtain a tail bound with the upper confidence limit
\begin{subequations}
\begin{align}
\label{EqnNewHackErr}
\underbrace{\NewHackErr}_{\equiv \rhat_\bind} & \defn \sqrt{ \frac{ 2
    \Vhat_\bind \log(4 \Btot / \eta) }{\numobs} } + 4 (\block_{\bind +
  1} - \block_\bind) B_\alpha \left( \frac{ 7 \log(4 \Btot / \eta) }{
  3(\numobs - 1) } \right)^{1 - 2/\alpha},
\end{align}
where $\Vhat_\bind$ is the empirical variance estimate for the block.
Finally, we define the surrogate partition complexity
\begin{align}
\label{EqnDefnPartCompHat}
\PartCompHat(\Partition) & \defn \Big( \sum_{\bind =0}^{\Btot-1}
\sqrt{\Sinfo_\bind} \sqrt{\HhatBind{\bind} + \rhat_\bind} \Big)^2.
\end{align}
\end{subequations}
With this set-up, we have the following \emph{certified analogue} for
the \Ber-unmasking sampler with geometric multipliers, or
the $\Geo(\rho)$-scheme for short.

\mygraybox{
\begin{theorem}[Data-certified guarantees for multi-block scheme]
\label{ThmCertifiedMulti}    
Given a failure probability $\eta \in (0, 1)$, and a $\Btot$-block
partition of the interval $[\newrinit, \newrfinal]$, consider any
iteration budget \mbox{$\Nscore \geq 2 \left\{ \Btot + 2 \big\{
  \logit(\newrfinal) - \logit(\newrinit) \big\} \right\}$.}
Then running the $\Btot$-block \mbox{$\Geo(\rho)$-unmasking} scheme
with the geometric multipliers
\begin{subequations}
\begin{align}
\label{EqnCertifiedMultiRho}
\rhohat_\bind & \defn \min\left\{ 1, \frac{4
  \sqrt{\PartCompHat(\Partition)}}{\Nscore}
\sqrt{\frac{\Sinfo_\bind}{\HhatBind{\bind} + \rhat_\bind}} \right\}
\qquad \mbox{for $\bind = 0, 1, \ldots, \Btot - 1$.}
\end{align}
yields a completed output $\Zhat$ such that
\begin{align}
\label{EqnCertifiedMulti}
\KL(\Prob_{\Zvar} \| \Prob_{\Zhat}) & \leq \frac{4
  \PartCompHat(\Partition)}{\Nscore} + \BOUNDARY \qquad \mbox{with
  probability at least $1 - \eta$,}
\end{align}
and does so with at most $N$ unmasking rounds.
\end{subequations}
\end{theorem}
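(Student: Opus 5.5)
We argue on the event where all $\Btot$ block estimates are simultaneously valid, and then rerun the deterministic argument behind \Cref{PropTrackMulti} with the surrogate masses $\widetilde{\Hinfo}_\bind \defn \HhatBind{\bind} + \rhat_\bind$ in place of $\Hinfo_\bind$.

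\emph{Step 1 (good event).} Apply \Cref{PropDataSingle} to each block $[\block_\bind, \block_{\bind+1}]$ with failure probability $\eta/\Btot$. A union bound shows that, with probability at least $1-\eta$, inequality (A) of~\eqref{EqnDataSingle} holds simultaneously for all blocks:
\begin{align*}
\Hinfo_\bind \leq \widetilde{\Hinfo}_\bind \qquad \mbox{for all $\bind$.}
\end{align*}
Only this upper-confidence direction is needed for the certificate~\eqref{EqnCertifiedMulti}. Inequality (B) is reserved for comparing $\PartCompHat$ with $\PartH$ afterwards, to justify the term ``optimal''. All further steps are deterministic on this event. The estimators use the samples $\Zclean{\samind}$, which are independent of the sampler's internal randomness. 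Hence, conditional on the data, the multipliers $\rhohat_\bind$ are fixed, and \Cref{ThmMaster} applies to the conditional law $\Prob_{\Zhat}$.

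\emph{Step 2 (KL bound).} On block $\bind$, the $\Geo(\rhohat_\bind)$ scheme has every step ratio at most $1+\rhohat_\bind$. Using \Cref{ThmMaster} together with additivity~\eqref{EqnHinfoAdditive}, block $\bind$ contributes at most $\rhohat_\bind \Hinfo_\bind \leq \rhohat_\bind \widetilde{\Hinfo}_\bind$. Since $\rhohat_\bind \leq 4\sqrt{\PartCompHat}\,\sqrt{\Sinfo_\bind}/(\Nscore\sqrt{\widetilde{\Hinfo}_\bind})$, we get
\begin{align*}
\sum_\bind \rhohat_\bind \widetilde{\Hinfo}_\bind \leq \frac{4\sqrt{\PartCompHat}}{\Nscore} \sum_\bind \sqrt{\Sinfo_\bind \widetilde{\Hinfo}_\bind} = \frac{4\PartCompHat(\Partition)}{\Nscore}.
\end{align*}
Adding the initialization and completion terms gives~\eqref{EqnCertifiedMulti}.

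\emph{Step 3 (iteration count; main obstacle).} Block $\bind$ uses $N_\bind = \lceil \Sinfo_\bind / \log(1+\rhohat_\bind) \rceil \leq 1 + \Sinfo_\bind/\log(1+\rhohat_\bind)$ rounds. For $\rho\in(0,1]$ we have $\log(1+\rho) \geq \rho/2$, so $N_\bind \leq 1 + 2\Sinfo_\bind/\rhohat_\bind$. We split into two cases.
\begin{itemize}
\item If $\rhohat_\bind = 1$, this gives $N_\bind \leq 1 + 2\Sinfo_\bind$. Summed over such blocks, these contributions are at most $\Btot/2$-type terms plus $2\{\logit(\newrfinal)-\logit(\newrinit)\}$.
\item If $\rhohat_\bind < 1$, then $2\Sinfo_\bind/\rhohat_\bind = \Nscore \sqrt{\Sinfo_\bind \widetilde{\Hinfo}_\bind}/(2\sqrt{\PartCompHat})$. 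Summing over all such blocks gives at most $\Nscore/2$, because $\sum_\bind \sqrt{\Sinfo_\bind\widetilde{\Hinfo}_\bind} = \sqrt{\PartCompHat}$.
\end{itemize}
The total is therefore at most $\Btot + 2\{\logit(\newrfinal)-\logit(\newrinit)\} + \Nscore/2$. The hypothesis $\Nscore \geq 2\{\Btot + 2(\logit(\newrfinal)-\logit(\newrinit))\}$ makes this at most $\Nscore$. This is the bookkeeping of \Cref{PropTrackMulti}, which should transfer verbatim since it never uses the meaning of $\Hinfo_\bind$, only positivity.

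The delicate points are:
\begin{itemize}
\item the ceiling and boundary-truncation accounting, since the last step in a block is clipped and clipping only lowers the ratio;
\item ensuring $\widetilde{\Hinfo}_\bind>0$, which holds because $\rhat_\bind>0$ and avoids division by zero.
\end{itemize}
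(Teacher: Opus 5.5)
Your proposal is correct and follows the paper's proof essentially step for step. You use the union bound at level $\eta/K$ to get the simultaneous upper-confidence event, and then bound each block's contribution by $\hat\rho_k(\widehat{\mathsf H}_k+\hat r_k)\le \frac{4\sqrt{\widehat{\mathsf C}}}{N}\sqrt{\mathsf S_k(\widehat{\mathsf H}_k+\hat r_k)}$; your round count via $\log(1+u)\ge u/2$ and your case split on $\hat\rho_k=1$ versus $\hat\rho_k<1$ play exactly the role of the paper's bound $\max\{1,1/a_k\}\le 1+1/a_k$, yielding the same total $K+2\{\varphi(T)-\varphi(t_0)\}+N/2\le N$ (your phrase ``$K/2$-type terms'' is loosely worded, but that final total is right).
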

}
\noindent See~\Cref{SecProofThmCertifiedMulti} for the proof.

\paragraph{Certified-optimal:} Let us now clarify why the guarantee of~\Cref{ThmCertifiedMulti} is both
certified and optimal (up to constant factors).  Suppose that our goal
is to design a sampler that achieves a prescribed KL error
$\varepsilon > 0$.  As discussed in~\Cref{SecCanonical}, it is
straightforward to ensure that the initialization and terminal
contribution satisfies $\HACKT \leq \varepsilon / 2$.
From~\Cref{ThmCertifiedMulti}, it can be seen that choosing an
iteration number \mbox{$\Nscore \geq 8 \PartCompHat(\Partition) /
  \varepsilon$} guarantees that the resulting sampler has output
$\Zhat$ such that
\begin{align}
\label{EqnCertifiedOptimal}  
\KL(\Prob_\Zvar \| \Prob_{\Zhat}) & \leq \varepsilon \qquad
\mbox{with probability at least $1 - \eta$.}
\end{align}
Thus, the estimated partition complexity provides both a
data-dependent schedule and an end-to-end certificate of its sampling
accuracy.  For this reason, we refer to it as a \emph{certified
guarantee}.

It is also \emph{near-optimal} in a precise sense.  In particular,
recall that the procedure in~\Cref{PropTrackMulti} achieves the DP
optimum up to a constant factor of $4$.  Under the same initialization
conditions, doing so requires an iteration number $N \geq 8
\PartH/\varepsilon$.  As shown in the proof
of~\Cref{ThmCertifiedMulti}, on the simultaneous high-probability
event from~\Cref{PropDataSingle}, we have
\begin{align*}
  \sqrt{\PartCompHat(\Partition)} & \leq \sqrt{2} \left\{
  \sqrt{\PartH} + \sum_{\bind = 0}^{\Btot - 1} \sqrt{\Sinfo_\bind
    \rhat_\bind} \right\}.
\end{align*}
Thus, the increase in computational complexity incurred by replacing the
oracle \ugc increments by their data-dependent estimates is captured
explicitly by the confidence radii $\rhat_\bind$, together with the
factor of $\sqrt{2}$ in the sandwich bound.


\subsubsection{Certified-optimal boundary selection}
\label{SecDataDP}

Our analysis thus far has focused on optimal schemes for a fixed set
of $\Btot$ blocks. In this section, we formulate the problem of
optimizing the split points that define a (near)-optimal set of
$\Btot$ blocks via a simple dynamic program.

More precisely, for an integer $\Jtot > \Btot$ that defines the
resolution, suppose that we fix a fine deterministic grid of
\emph{candidate reveal times}
$\tau_0 = \newrinit < \tau_1 < \cdots < \tau_\Jtot = \newrfinal$.
Our goal is to select indices $\{ i_\bind \}_{\bind=1}^{\Btot-1}$
from the set $\{ 1, \ldots, \Jtot - 1 \}$, and use the corresponding
reveal times $t_\bind = \tau_{i_\bind}$, together with the endpoints
$t_0 = \newrinit$ and $t_\Btot = \newrfinal$, to define a $\Btot$-block
partition. We seek to minimize the square root of the partition
complexity
\begin{align*}
  \sqrt{\PartComp(\Partition)} & = \sum_{\bind=0}^{\Btot-1}
  \sqrt{\Sinfo_\bind \Hblk_\bind}, \qquad \mbox{where $\Sinfo_\bind
    \defn \logit(t_{\bind+1}) - \logit(t_\bind)$, and
    $\Hblk_\bind \defn \Hinfo(t_\bind,
    t_{\bind+1})$.}
\end{align*}

Given the naturally sequential structure, this optimization problem
can be solved by dynamic programming, as we now formalize. For
candidate intervals $0 \leq i < j \leq \Jtot$, define the edge cost
\begin{subequations}
\begin{align}
\label{EqnKBlockEdgeCost}
e(i, j) & \defn \sqrt{\Sinfo(\tau_i, \tau_j) \Hinfo(\tau_i, \tau_j)},
\qquad \mbox{where $\Sinfo(a, b) \defn \logit(b) - \logit(a)$.}
\end{align}
With this notation, our ultimate goal is to compute
\begin{align}
  \label{EqnFinalDP}
  \ValFun_\Btot(\Jtot) & \defn \min_{0 = j_0 < j_1 < \cdots < j_\Btot
    = \Jtot} \sum_{\bind=0}^{\Btot-1} e(j_\bind, j_{\bind+1}).
\end{align}
\end{subequations}
This quantity is the minimum cost of a $\Btot$-block partition of the
interval $[\tau_0, \tau_\Jtot] = [\newrinit, \newrfinal]$. It is easy
to see that it can be computed via dynamic programming, where we
introduce intermediate quantities $\ValFun_\bind(j)$, defined for
each $\bind \in \{ 0, \ldots, \Btot \}$ and $j \in \{ 0, \ldots, \Jtot
\}$ as the minimum cost of a $\bind$-block partition of the
subinterval $[\tau_0, \tau_j]$, with value $+\infty$ when no such
partition exists.

Finally, we observe that the edge costs in
equation~\eqref{EqnKBlockEdgeCost} depend on the \ugc increments
$\Hinfo(\tau_i, \tau_j)$.  As in the previous section, we can obtain
confidence bounds for increments of this type using the estimator
from~\Cref{PropDataSingle}, thereby obtaining a data-dependent version
of the dynamic program.

\section{Fine partition limit and optimal Euler discretization}
\label{SecFine}

In the preceding section, we developed and analyzed sampling methods
based on finite $\Btot$-block partitions, and showed how their
performance depends on the local \ugc mass of the chosen partition.
We now characterize the limit under arbitrarily fine refinement and
connect it to the optimal KL error of Euler discretization.  This
analysis highlights the central role of the log-reveal-odds \ugc density
$\qdens$ defined in equation~\eqref{EqnDefnQdens}.


\subsection{Fine partition limit and its Euler optimality}
\label{SecUGCDensityRole}

We specialize to the canonical reveal interval $\IntStar \defn
[1/\usedim, 1 - 1/\usedim]$.  Its image under the log-reveal-odds map
is $[-\Elld, \Elld]$, where $\Elld \defn \log(\usedim - 1)$.  For a
partition $\Partition = \{ [\block_\bind, \block_{\bind + 1}]
\}_{\bind = 0}^{\Btot - 1}$ of $\IntStar$, define the corresponding
log-reveal-odds blocks $\Lint_\bind \defn [\logit(\block_\bind),
  \logit(\block_{\bind + 1})]$ and their lengths $\Sinfo_\bind \defn
\MyLen(\Lint_\bind)$.  Our previously defined \emph{partition
complexity} can then be written directly in terms of $\qdens$ as
\begin{subequations}
  \begin{align}
    \label{EqnDefnNewPartComp}
  \PartComp(\Partition) & \defn \left( \sum_{\bind = 0}^{\Btot - 1}
  \sqrt{ \Sinfo_\bind \int_{\Lint_\bind} \qdens(\lam) \, d\lam }
  \right)^2.
\end{align}
Observe that refinement can only decrease this complexity: whenever
$\Partition'$ is a refinement of $\Partition$, we have
$\PartComp(\Partition') \leq \PartComp(\Partition)$.  This fact
motivates the \emph{fine-partition limit}
\begin{align}
  \label{EqnDefnPartHfine}
  \PartHfine(\IntStar) & \defn \inf_{\Partition}
  \PartComp(\Partition),
\end{align}
\end{subequations}
where the infimum is over all finite partitions of $\IntStar$.  We now
characterize this infimum explicitly in terms of $\sqrt{\qdens}$, and
show that the same quantity governs the sharp leading-order KL error
of optimal Euler discretization.

\mygraybox{
\begin{theorem}[Fine-partition limit and Euler optimality]
\label{ThmFineEuler}
Suppose that $\qdens$ is continuous and strictly positive on $[-\Elld,
  \Elld]$.  Then the fine-partition limit~\eqref{EqnDefnPartHfine} is
given by
\begin{subequations}
\begin{align}
  \label{EqnFinePartSandwich}
  \PartHfine(\IntStar) & = \left( \int_{-\Elld}^{\Elld}
  \sqrt{\qdens(\lam)} \, d\lam \right)^2.
\end{align}
Moreover, the optimal $N$-step KL discretization error satisfies
\begin{align}
  \label{EqnFineEulerOptimality}
  \inf_{\frac{1}{\usedim} = \rtime_0 < \cdots < \rtime_N =
    \frac{\usedim - 1}{\usedim}} \sum_{\iter = 0}^{N - 1}
  \underbrace{\UnmaskKL(\rtime_\iter, \rtime_{\iter + 1})}_{\mbox{KL
      discretization error}} & = \frac{\PartHfine(\IntStar)}{2 N} +
  o(N^{-1}),
\end{align}
where the infimum is over all $N$-step discretizations of the
canonical interval $[\tfrac{1}{d}, \tfrac{d-1}{d}]$.
\end{subequations}
\end{theorem}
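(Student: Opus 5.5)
The plan has two parts, matching the two displays in the statement.

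\emph{Fine-partition identity~\eqref{EqnFinePartSandwich}.} For the lower bound, fix any partition $\Partition$ with log-reveal-odds blocks $\Lint_\bind$. By Cauchy--Schwarz on each block,
\[
\int_{\Lint_\bind}\sqrt{\qdens}\,d\lam \;\le\; \sqrt{\Sinfo_\bind\int_{\Lint_\bind}\qdens\,d\lam}.
\]
Summing over $\bind$ and squaring gives $\PartComp(\Partition)\ge(\int\sqrt{\qdens})^2$ for every partition, and hence for the infimum. For the upper bound, take uniform partitions of $[-\Elld,\Elld]$ with mesh $h\to0$. By the mean value theorem for integrals, $\int_{\Lint_\bind}\qdens = \Sinfo_\bind\,\qdens(\xi_\bind)$ for some $\xi_\bind\in\Lint_\bind$. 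Therefore $\sqrt{\PartComp(\Partition)}=\sum_\bind \Sinfo_\bind\sqrt{\qdens(\xi_\bind)}$, which is a Riemann sum for $\int\sqrt{\qdens}$. This Riemann sum converges because $\sqrt{\qdens}$ is continuous on a compact interval, so the infimum equals the stated value.

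\emph{Euler optimality~\eqref{EqnFineEulerOptimality}.} Here $\UnmaskKL(p,q)$ denotes the exact one-step KL discretization error from the proof of~\Cref{ThmMaster}. The first step is a sharp local expansion of it. Work in $\lam$-coordinates, with $\lam_\iter=\logit(\rtime_\iter)$ and step $\delta_\iter=\lam_{\iter+1}-\lam_\iter$. The target is
\[
\UnmaskKL(\rtime_\iter,\rtime_{\iter+1}) = \tfrac12\,\qdens(\lam_\iter)\,\delta_\iter^2\,(1+o(1)),
\]
uniformly as $\delta_\iter\to0$. I would derive this from the exact representation. The one-step error is the gap between the mutual information gained and its linearization: the increment of $\Info(\Zvar;\Xvar_t)$ minus the first-order term is, by Taylor's theorem, a double integral of $-\tfrac{d^2}{dt^2}\Info = \hfun'$, suitably weighted by the reveal probabilities. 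To leading order this equals $\int_p^q (q-t)\,\kappa(t)\,\hfun'(t)\,dt$ for an explicit weight $\kappa$. Changing variables $t=\revinv(\lam)$ turns it into $\int_{\lam_\iter}^{\lam_{\iter+1}}(\lam_{\iter+1}-\lam)\,\qdens(\lam)\,d\lam\,(1+o(1))$, which gives the $\tfrac12\qdens\delta^2$ form. The upper bound in~\Cref{ThmMaster}, $\{\odds(q)/\odds(p)-1\}\Hinfo(p,q)\approx \delta\cdot\qdens\,\delta$, is consistent with this: it is exactly twice the leading term, matching the ``factor of two'' remark. Uniformity of the $o(1)$ follows from uniform continuity of $\qdens$ on the compact interval.

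Given the local expansion, the optimization is standard. For any grid, Hölder's inequality gives
\[
\Big(\sum_\iter \sqrt{\qdens(\lam_\iter)}\,\delta_\iter\Big)^2 \le N\sum_\iter \qdens(\lam_\iter)\,\delta_\iter^2 .
\]
Along any sequence of grids, either the maximum step stays bounded away from zero, in which case the error is $\Omega(1)$ because $\qdens>0$ and a nondegenerate step costs a positive amount, or the mesh tends to zero, in which case the left side tends to $\PartHfine$. This yields the liminf bound $N\cdot\inf\ge \PartHfine/2$. For achievability, choose the grid equidistributing $\sqrt{\qdens}$, i.e.\ $\int_{\lam_\iter}^{\lam_{\iter+1}}\sqrt{\qdens}=\frac1N\int\sqrt{\qdens}$. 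Then $\delta_\iter\approx \frac{\int\sqrt{\qdens}}{N\sqrt{\qdens(\lam_\iter)}}$, and the total error is
\[
\sum_\iter \tfrac12\,\qdens\,\delta_\iter^2 = \frac{(\int\sqrt{\qdens})^2}{2N}(1+o(1)).
\]
Strict positivity of $\qdens$ guarantees the mesh is $O(1/N)$, so the uniform local expansion applies.

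\emph{Main obstacle.} The hardest step is the sharp second-order expansion of the exact one-step KL error with the precise constant $\tfrac12\qdens(\lam)\delta^2$, uniform over the interval. The upper bound of~\Cref{ThmMaster} only gives this up to a factor of two, and a matching lower bound is needed. I would first obtain an exact integral formula for $\UnmaskKL(p,q)$ in terms of $\hfun'$, by revisiting the exact KL representation in the proof of~\Cref{ThmMaster}. If $\hfun'$ lacks the regularity needed to pass directly to $\qdens$, I would work with the integral form throughout and use only continuity of $\qdens$.
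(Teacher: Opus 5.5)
Your proposal is correct and follows the paper's proof essentially step for step. For the fine-partition identity you use blockwise Cauchy--Schwarz together with a mean-value Riemann-sum argument on uniform partitions. For the Euler claim you use the exact representation $\UnmaskKL(p,q)=\int_p^q (q-u)\hfun'(u)\,du$ from the proof of \Cref{ThmMaster}; the weight you call $\kappa$ is identically one, and the formula is exact because it is the negated Taylor remainder of $\Info(\Zvar;\Xvar_t)$. Changing to $\lam$-coordinates gives the uniform expansion $\tfrac12\qdens\,\delta^2+o(\delta^2)$. You then use an equal-$\sqrt{\qdens}$-mass grid for achievability and Cauchy--Schwarz for the converse. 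Your mesh dichotomy is the qualitative form of the paper's bound $\gamma(x,y)\ge c_0(y-x)^2$, which is what forces near-optimal grids to have vanishing mesh.
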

}
\noindent See~\Cref{SecProofThmFineEuler} for the proof.

\noindent Taken together, the two claims in~\Cref{ThmFineEuler} show
that the fine-partition complexity is precisely twice the
leading-order constant governing the optimal Euler KL error, and that
it can be approached arbitrarily closely by the $\Btot$-block schemes
analyzed in~\Cref{PropTrackMulti} and~\Cref{ThmCertifiedMulti}.

\paragraph{Related work:}
A related square-root principle appears in the analysis of Lavenant
and Zanella~\cite{LavZan25}, who study random-order fixed-cardinality
unmasking and express its factorization error in terms of a
cardinality-indexed information profile.  They then consider a
high-dimensional continuum limit in which, using our notation, both
the ambient dimension $\usedim$ and the number of sampling rounds $N$
diverge. In the regime $\usedim / N \rightarrow +\infty$, optimization
of the resulting limiting functional yields a square-root
information-profile rule.

Our results differ in both formulation and scope.  The \ugc density
$\qdens$ is defined directly from the finite-dimensional reveal path,
with log-reveal odds providing the natural path coordinate, and our
finite-step bounds apply at fixed dimension and for arbitrary finite
partitions.  Moreover, rather than optimizing only a limiting
variational problem, we optimize blockwise schedules at finite
dimension, and use estimated \ugc increments to construct
data-dependent partitions with certified guarantees.  Finally,
equation~\eqref{EqnFineEulerOptimality} gives the sharp leading-order
optimal Euler error at fixed dimension $\usedim$ as the number of
discretization steps $N$ grows.

\subsection{Gains and convergence to the fine-partition limit}
\label{SecXORSATSeparation}

We first ask how much can be gained by exploiting the local \ugc
geometry, and then how many blocks are needed to realize this gain.
The maximal improvement over a single-block scheme is measured by
\begin{align}
  \label{EqnDefnRatioNew}
  \Ratio(\Prob_Z)
  & \defn
  \frac{\PartComp([\IntStar])}{\PartHfine(\IntStar)}
  \geq 1.
\end{align}
For the repeated-bit and parity models
(see~\Cref{FigRepeatedParityQdens}), we have $\Ratio(\ProbZ) \asymp
\log(\usedim)$, yielding a growing but relatively modest separation.
Much larger gains are possible when the \ugc density becomes
increasingly concentrated with dimension.  For example, the discrete
mixture model in~\Cref{FigDiscreteMixture} satisfies
\mbox{$\Ratio(\Prob_\Zvar) = \OmegaTil(\sqrt{\usedim})$} with high
probability over the cluster centers.  We next exhibit the same
phenomenon in a simpler random XORSAT model, for which the scaling can
be analyzed via an easier argument.

\subsubsection{An example with $\OmegaTil(\sqrt{\usedim})$ gains}

To obtain a tractable example with the same
$\OmegaTil(\sqrt{\usedim})$ separation, consider a dense random XORSAT
ensemble.  For ambient dimension $\usedim$ and latent dimension
$\kdim$, draw $\Amat \in \{0,1\}^{\usedim \times \kdim}$ with i.i.d.
$\Ber(1/2)$ entries, draw $\Uvar \in \{0,1\}^{\kdim}$ with i.i.d.
$\Ber(1/2)$ entries, and set $\Zvar = \Amat \Uvar$ modulo two.  Thus,
$\Zvar$ is supported on a random Boolean subspace.  We parameterize
the ensemble by the ratio $\alpha \defn \kdim / \usedim$.

\Cref{FigXORSAT} plots the \ugc density for $\usedim = 64$ and $\alpha
\in \{0.2, 0.5, 0.8\}$.  The density is unimodal, with its peak moving
from right to left as $\alpha$ increases.  To quantify the resulting
gain, we focus on the symmetric case $\alpha = 0.5$.
\begin{figure}[h]
\begin{center}
\begin{tabular}{ccc}
\widgraph{0.31\textwidth}{\figdir/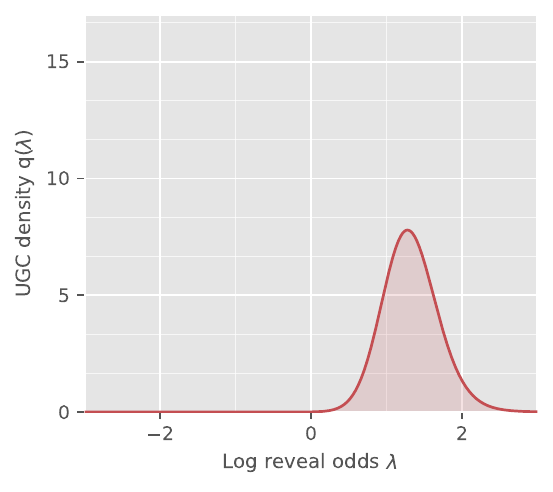}
&
\widgraph{0.31\textwidth}{\figdir/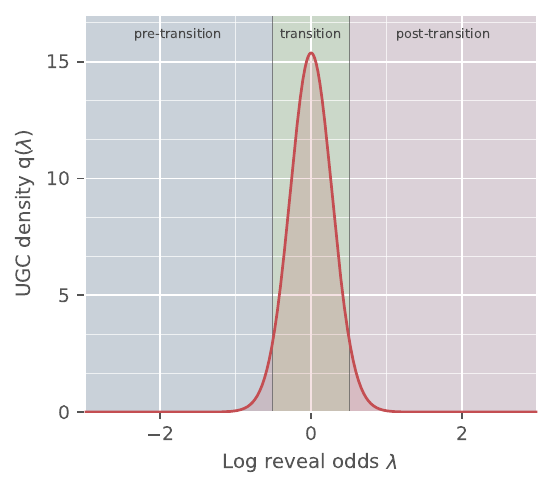}
&
\widgraph{0.31\textwidth}{\figdir/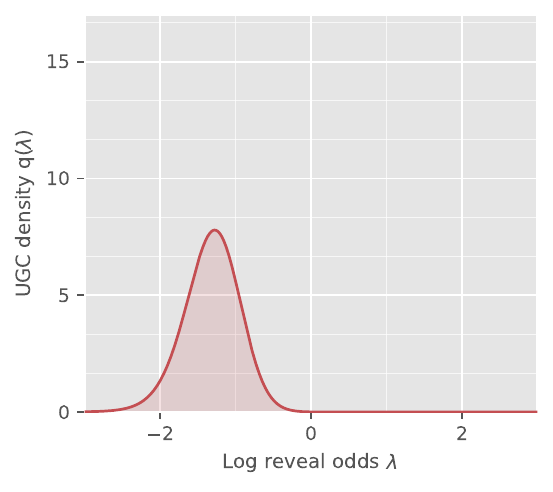}
\\
(a) & (b) & (c)
\end{tabular}
\caption{Plots of the log-odds \ugc density $\qdens$ for the random
  XORSAT ensemble with $d = 64$ and parameter $\alpha \in \{0.2, 0.5,
  0.8\}$ in panels (a)--(c), respectively.  Panel (b) marks the
  pre-transition, transition, and post-transition regions used in the
  three-block construction.}
\label{FigXORSAT}
\end{center}
\end{figure}

\mygraybox{
\begin{lemma}
\label{LemXORSAT}
For the random XORSAT model with $\alpha = 0.5$, for each even
dimension $\usedim$ above a universal constant, there is a
distribution $\ProbZ$ such that
\begin{align}
  \label{EqnXORSAT}
  \Ratio(\ProbZ) & \geq c \, \sqrt{\usedim \log \usedim} \qquad
  \mbox{for a universal constant $c > 0$,}
\end{align}
and this gain is achieved, up to constants, by a $\Btot = 3$ block
scheme.
\end{lemma}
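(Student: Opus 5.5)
The plan is to compute the Bernoulli unmasking gain $\hfun$ for $\Zvar = \Amat \Uvar$ almost explicitly, and then bound both sides of the ratio. The first side is the coarse complexity $\PartComp([\IntStar]) = 2\Elld \, \Hinfo(\tfrac1\usedim, 1-\tfrac1\usedim)$; I will show it is $\gtrsim \usedim \log \usedim$. The second side is the fine complexity $\PartHfine(\IntStar)$. For it I will exhibit an explicit three-block partition with $\PartComp(\Partition_3) \lesssim \sqrt{\usedim \log \usedim}$, and then use $\PartHfine \leq \PartComp(\Partition_3)$. This gives \eqref{EqnXORSAT} and also shows the gain is attained by $\Btot = 3$ blocks.

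The key structural fact is the following. With $\Uvar$ uniform on $\{0,1\}^\kdim$, each $\Zvar_S = \Amat_S \Uvar$ is uniform on the row space of $\Amat_S$. Hence $\Info(\Zvar_i; \Zvar_S) = \log 2 \cdot \mathbf{1}\{a_i \in \operatorname{span}(a_j, j \in S)\}$ whenever $a_i \neq 0$, where $a_j$ denotes the rows of $\Amat$. Combined with \eqref{EqnHfunRandSub}, this gives
\begin{align*}
\hfun(\rtime) = \log 2 \sum_{i=1}^\usedim \Prob\big(a_i \in \operatorname{span}(a_j, j \in A_{i,\rtime})\big),
\end{align*}
with $|A_{i,\rtime}| \sim \mathrm{Bin}(\usedim-1, \rtime)$. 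Averaging over the random $\Amat$, with $\kdim = \usedim/2$, standard random-matrix rank estimates over $\mathbb{F}_2$ give the following. If $|S| = s \leq \kdim$, then $\Prob(a_i \in \operatorname{span}) \leq 2^{s-\kdim}$. If $s \geq \kdim$, then the $s$ rows span $\mathbb{F}_2^\kdim$ except with probability $\lesssim 2^{\kdim - s}$. So $\hfun$ increases from $\approx 0$ to $\approx \usedim \log 2$, and it does so exactly when $\mathrm{Bin}(\usedim-1,\rtime)$ crosses $\kdim$, i.e., in a window $|\rtime - \tfrac12| \lesssim \usedim^{-1/2}$. In log-reveal-odds this is a window $|\lam| \lesssim \usedim^{-1/2}$.

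For the lower bound, $\rtime(1-\rtime) \geq \tfrac15$ on the central window, so $\Hinfo(\tfrac1\usedim, 1-\tfrac1\usedim) \geq \tfrac15 \big(\hfun(\tfrac12+\delta) - \hfun(\tfrac12-\delta)\big) \geq c\,\usedim$ with $\delta = C\sqrt{\log\usedim/\usedim}$, using binomial tails and the two rank bounds. This gives $\PartComp([\IntStar]) \geq c\,\usedim \log \usedim$.

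For the upper bound, take blocks $\IntLeft = [-\Elld, -w]$, $\IntMid = [-w, w]$ and $\IntRight = [w, \Elld]$, with $w = C\sqrt{\log \usedim/\usedim}$. The middle block contributes $\sqrt{2w \cdot O(\usedim)} \lesssim (\usedim \log\usedim)^{1/4}$. Each tail block has length $\leq \Elld$. Its \ugc mass is at most $\hfun$'s total variation on that range, which is $\usedim \log 2$ times the Chernoff tail of $\mathrm{Bin}$ beyond $\kdim \pm C'\sqrt{\usedim\log\usedim}$ plus the $2^{-C'\sqrt{\usedim \log \usedim}}$ leakage terms. Choosing $C$ large makes this $\leq \usedim^{-2}$, so each tail contributes $O(\sqrt{\log \usedim}\,\usedim^{-1})$. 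Hence $\PartComp(\Partition_3) \lesssim \sqrt{\usedim \log \usedim}$.

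The step I expect to be the main obstacle is passing from bounds averaged over $\Amat$ to a single deterministic $\Amat$. The \ugc mass bounds are averages of $\hfun$ differences over $\Amat$, and the claim needs one fixed distribution. I would handle this as follows. Both relevant quantities are functions of $\Amat$: the central mass, which is bounded by $\usedim \log 2$, and the tail masses. I would show $\Exs_\Amat[\text{central mass}] \geq c\,\usedim$ and $\Exs_\Amat[\text{tail mass}] \leq \usedim^{-2}$. Then Markov's inequality, together with the reverse Markov inequality for the bounded central mass, yields an $\Amat$ satisfying both with constants degraded by a factor two. A secondary care point is that the rank bound for $s \geq \kdim$ must be applied with $a_i$ independent of $a_S$. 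This holds since $i \notin A_{i,\rtime}$. Rows with $a_i = 0$ contribute zero and only help.
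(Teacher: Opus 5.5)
Your proposal is correct and follows essentially the same route as the paper's proof. Both use the span-indicator characterization of the per-coordinate information, average over $\Amat$ using the $\mathbb{F}_2$ rank bounds $2^{s-\kdim}$ and $2^{\kdim-s}$ together with Hoeffding tails for $\operatorname{Bin}(\usedim-1,\rtime)$, take a three-block partition whose central block has width of order $\sqrt{\log\usedim/\usedim}$, and apply Markov's inequality (to the tail masses and to the bounded central deficit) to fix a single realization of $\Amat$. One small slip: $\Zvar_S = \Amat_S\Uvar$ is uniform on the column space of $\Amat_S$, not its row space, but the mutual-information formula you derive from it is correct.
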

}
\noindent See~\Cref{SecXORSAT} for the proof.

Our analysis uses the probabilistic method~\cite{AloSpe16} to show that, for each
sufficiently large dimension, there exists a realization of $\Amat$,
inducing a ``good'' distribution $\ProbZ$, that satisfies the
following properties.  First, its coarse complexity satisfies
$\PartComp([\IntStar]) = \Theta(\usedim \log \usedim)$.  As a
consequence, it suffices to construct a three-block partition with
complexity $\Order(\sqrt{\usedim \log \usedim})$.  Set $\deld = c_1
\sqrt{\log \usedim / \usedim}$ for a sufficiently large constant $c_1
> 0$, and define the reveal time intervals
\begin{align*}
  \IntLeft & \defn \big[ 1/\usedim, 1/2 - \deld \big], & \IntMid &
  \defn \big[ 1/2 - \deld, 1/2 + \deld \big], & \IntRight & \defn
  \big[ 1/2 + \deld, 1 - 1/\usedim \big].
\end{align*}
Our proof shows that the ``good'' $\ProbZ$ has \ugc masses
\begin{align}
  \label{EqnXORWidth}
  \Hinfo(\IntLeft) = \Order(\usedim^{-10}), \quad \Hinfo(\IntMid) =
  \Theta(\usedim), \quad \mbox{and} \quad \Hinfo(\IntRight) & =
  \Order(\usedim^{-10}).
\end{align}
Thus, consistent with the behavior shown in~\Cref{FigXORSAT},
essentially all of the \ugc mass lies in the transition interval
$\IntMid$, which has log-reveal-odds width $\Order(\sqrt{\log \usedim
  / \usedim})$.  This combination leads to the claimed scaling
$\Order(\sqrt{\usedim \log \usedim})$ for the $3$-block partition.


\subsubsection{Uniform partitions: a generic convergence bound}

The XORSAT example shows that a small number of well-placed blocks can
already be near-optimal.  For comparison, let $\UniK$ denote the
uniform $\Btot$-block partition of $[-\Elld, \Elld]$, which ignores
local geometry when choosing the block boundaries.  Here we quantify
its convergence to the fine-partition limit under a simple global
regularity condition.

In particular, suppose that $\fdens \defn \sqrt{\qdens}$ is continuous
on $[-\Elld, \Elld]$, and define its modulus of continuity by
\begin{align}
\label{EqnBlockAccuracyModulus}
\omega_\fdens(h) & \defn \sup_{\lam, \lam'} |\fdens(\lam) -
\fdens(\lam')| \qquad \mbox{where the sup ranges over pairs $\lam,
  \lam' \in [-\Elld, \Elld]$ with $|\lam - \lam'| \leq h$.}
\end{align}
\mygraybox{
\begin{lemma}[Accuracy of uniform $\Btot$-block approximations]
\label{LemBlockAccuracy}
For the uniform partition $\UniK$, we have
\begin{align}
  \label{EqnBlockAccuracyEqual}
  \sqrt{\PartComp(\UniK)} & \leq \sqrt{\PartHfine(\IntStar)} + \Elld
  \, \omega_\fdens\left( \frac{2 \Elld}{\Btot} \right).
\end{align}
\end{lemma}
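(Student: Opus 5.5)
The uniform partition $\UniK$ splits $[-\Elld,\Elld]$ into $\Btot$ blocks $\Lint_\bind$, each of length $\Sinfo_\bind = h \defn 2\Elld/\Btot$. The plan is to compare, block by block, the quantity $\sqrt{\Sinfo_\bind \int_{\Lint_\bind}\qdens}$ with $\int_{\Lint_\bind}\fdens$, where $\fdens = \sqrt{\qdens}$.

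Fix a block $\Lint_\bind$ and set $M_\bind \defn \max_{\Lint_\bind}\fdens$ and $m_\bind \defn \min_{\Lint_\bind}\fdens$. Both are attained because $\fdens$ is continuous. Since $\qdens = \fdens^2 \le M_\bind^2$ on the block, we get
\begin{align*}
\sqrt{\Sinfo_\bind \int_{\Lint_\bind}\qdens(\lam)\,d\lam} \;\le\; \sqrt{h\cdot h M_\bind^2} \;=\; h M_\bind .
\end{align*}
On the other hand, $\int_{\Lint_\bind}\fdens \ge h\, m_\bind$. Any two points of a block lie within distance $h$, so $M_\bind - m_\bind \le \omega_\fdens(h)$. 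Combining these,
\begin{align*}
\sqrt{\Sinfo_\bind \int_{\Lint_\bind}\qdens} \;\le\; \int_{\Lint_\bind}\fdens(\lam)\,d\lam + h\,\omega_\fdens(h).
\end{align*}

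Now sum over the $\Btot$ blocks. By the definition~\eqref{EqnDefnNewPartComp},
\begin{align*}
\sqrt{\PartComp(\UniK)} \;\le\; \int_{-\Elld}^{\Elld}\fdens + \Btot h\,\omega_\fdens(h) \;=\; \int_{-\Elld}^{\Elld}\fdens + 2\Elld\,\omega_\fdens(2\Elld/\Btot).
\end{align*}
Finally, invoke~\Cref{ThmFineEuler}, equation~\eqref{EqnFinePartSandwich}, to identify $\int\fdens$ with $\sqrt{\PartHfine(\IntStar)}$. If one prefers not to assume strict positivity of $\qdens$, note that only the direction $\int\fdens \le \sqrt{\PartHfine}$ is needed here. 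That direction follows by Cauchy--Schwarz on each block, giving $\int_{\Lint}\fdens \le \sqrt{\MyLen(\Lint)\int_{\Lint}\qdens}$, and summing over the blocks of an arbitrary partition.

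The main obstacle is the constant. This crude argument yields $2\Elld\,\omega_\fdens$, whereas the statement has $\Elld\,\omega_\fdens$, so a factor of two must be recovered. Working in squared form, write $\Hinfo_\bind = \int_{\Lint_\bind}\fdens^2$ and $F_\bind = \int_{\Lint_\bind}\fdens$. Then
\begin{align*}
\sqrt{h\Hinfo_\bind} - F_\bind \;=\; \frac{h\Hinfo_\bind - F_\bind^2}{\sqrt{h\Hinfo_\bind}+F_\bind}.
\end{align*}
The numerator $h\Hinfo_\bind - F_\bind^2$ equals $h^2$ times the variance of $\fdens$ over the block under the uniform measure. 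That variance is at most $(M_\bind-m_\bind)^2/4$ (Popoviciu's inequality), so the numerator is at most $h^2(M_\bind-m_\bind)^2/4$. For the denominator, the worst case $m_\bind = 0$ still gives $\sqrt{h\Hinfo_\bind} \ge hM_\bind/\sqrt{3}$ in the linear model. I would therefore first try to show that the denominator is at least $h(M_\bind - m_\bind)$, using $\sqrt{h\Hinfo_\bind}\ge F_\bind$ and a bound of the form $F_\bind \gtrsim h(M_\bind - m_\bind)/2$ when $m_\bind$ is small, possibly after splitting into cases $m_\bind \ge (M_\bind-m_\bind)/2$ and the reverse. This would give a per-block error of $h\,\omega_\fdens(h)/4$, comfortably inside the claimed $\Elld\,\omega_\fdens(h)$ after summing. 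If that case analysis is delicate, the fallback is the crude factor $2\Elld$, which is the natural form up to constants.
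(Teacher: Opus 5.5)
Your crude blockwise estimate is correct. So is the reduction $\int_{-\Elld}^{\Elld}\fdens \le \sqrt{\PartHfine(\IntStar)}$, obtained by Cauchy--Schwarz on each block of an arbitrary partition, which rightly avoids the positivity hypothesis of \Cref{ThmFineEuler}. Together, however, they only give $\sqrt{\PartComp(\UniK)} \le \sqrt{\PartHfine(\IntStar)} + 2\Elld\,\omega_{\fdens}(2\Elld/\Btot)$. That is a factor of two weaker than the lemma, so your fallback does not prove the statement. Everything rests on the refinement, and as sketched it does not work.

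The failing step is the proposed lower bound $\sqrt{h\Hinfo_{\bind}} + F_{\bind} \ge h(M_{\bind} - m_{\bind})$ when $m_{\bind}$ is small. Your other case, $m_{\bind} \ge (M_{\bind}-m_{\bind})/2$, is fine, since then $F_{\bind} \ge h m_{\bind}$. For a general continuous $\fdens$, though, the bound is false. Let $\fdens$ equal $0$ (or a tiny constant $\delta>0$) on a block, except for a linear ramp up to $M_{\bind}$ on a terminal sub-interval of length $\epsilon h$. Then $F_{\bind} \approx \epsilon h M_{\bind}/2$ and $\sqrt{h\Hinfo_{\bind}} \approx hM_{\bind}\sqrt{\epsilon/3}$, both $o(hM_{\bind})$ as $\epsilon\to 0$. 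Dividing the Popoviciu numerator $h^2 M_{\bind}^2/4$ by the true denominator gives something of order $hM_{\bind}\,\epsilon^{-1/2}$, far above the target $hM_{\bind}/4$. The linear-model value $hM_{\bind}/\sqrt{3}$ says nothing about such profiles. The underlying problem is pairing a worst-case numerator bound with a denominator that can be small.

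The numerator must instead be controlled in terms of $F_{\bind}$. Since $\fdens \ge 0$ and any two points of a block are within distance $h$, we have $(\fdens(s)-\fdens(t))^2 \le \omega_{\fdens}(h)\{\fdens(s)+\fdens(t)\}$ for $s,t \in \Lint_{\bind}$. Hence
\begin{align*}
h\Hinfo_{\bind} - F_{\bind}^2 = \frac{1}{2}\iint_{\Lint_{\bind}\times\Lint_{\bind}} \big(\fdens(s)-\fdens(t)\big)^2\,ds\,dt \le \frac{\omega_{\fdens}(h)}{2}\iint_{\Lint_{\bind}\times\Lint_{\bind}} \big\{\fdens(s)+\fdens(t)\big\}\,ds\,dt = h\,\omega_{\fdens}(h)\,F_{\bind}.
\end{align*}
Combine this with $\sqrt{h\Hinfo_{\bind}} + F_{\bind} \ge 2F_{\bind}$, which follows from Cauchy--Schwarz. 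This yields $\sqrt{h\Hinfo_{\bind}} - F_{\bind} \le \tfrac{h}{2}\,\omega_{\fdens}(h)$ whenever $F_{\bind}>0$. If $F_{\bind}=0$, then $\fdens\equiv 0$ on the block and both sides vanish. Summing over the $\Btot$ blocks gives $\tfrac{\Btot h}{2}\,\omega_{\fdens}(h) = \Elld\,\omega_{\fdens}(2\Elld/\Btot)$, which is exactly the stated constant.
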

}
\noindent
The proof is the same modulus-of-continuity argument used for Gaussian
diffusion in our companion paper (Lemma 4,~\cite{Wai26}).  As a
particular consequence, if $\fdens$ is Lipschitz with constant
$\Lip(\fdens)$, then
\begin{align}
  \label{EqnBlockAccuracyLipschitz}
  \sqrt{\PartComp(\UniK)} & \leq \sqrt{\PartHfine(\IntStar)} + \frac{2
    \Lip(\fdens) \Elld^2}{\Btot}.
\end{align}
This generic bound can be conservative because it depends on a global
regularity constant and fixes the block boundaries in advance.  For
the XORSAT example, the Lipschitz constant $\Lip(\fdens)$ grows with
dimension even though $\Btot = 3$ adaptively placed blocks are
near-optimal.  When samples are available, the data-dependent DP
procedure in~\Cref{SecDataDP} instead optimizes the block boundaries
directly.


\section{Proofs}
\label{SecProofs}

In this section, we collect the proofs of a subset of our results,
including the proof of~\Cref{ThmMaster}, with the Bernoulli and
fixed-cardinality versions given in~\Cref{SecProofThmMasterBer}
and~\Cref{SecProofThmMasterExact}, respectively; the proof of our
data-certification procedures, including~\Cref{LemDinfoSandwich}
and~\Cref{PropDataSingle} in~\Cref{SecProofData}; the proof
of~\Cref{ThmCertifiedMulti} in~\Cref{SecProofThmCertifiedMulti}; and
finally, the proof of~\Cref{ThmFineEuler}
in~\Cref{SecProofThmFineEuler}.  We defer the proofs of other results
in the paper to the appendices.

\subsection{Proof of~\Cref{ThmMaster}: Bernoulli version}
\label{SecProofThmMasterBer}

We begin by proving the bound~\eqref{EqnMaster} for the Bernoulli
unmasking sampler.

\subsubsection{Main argument}
\label{SecProofThmMasterMain}

Recall from equation~\eqref{EqnDefnRevealProcess} the reveal process
$\{\Xvar_t, t \in [0,1] \}$.  For any pair $0 < p < q < 1$, we let
$\Kexact{p}{q}$ denote the transition kernel of the reveal process in
moving from $X_p$ to $X_q$.  On the other hand, the unmasking sampler
is based on the updates in equation~\eqref{EqnRandSub} and
equation~\eqref{EqnRandUnmask}, and we
let $\Khat{p}{q}$ denote the associated transition kernel that moves
from $\Xhat_p$ to $\Xhat_q$.  We introduce the shorthand notation
\begin{align}
\label{EqnDefnUnmaskKL}  
\UnmaskKL(p, q) & \defn \Exs_{X_p}\left[ \KL\left( \Kexact{p}{q}(
  \mathord\cdot \mid X_p ) \,\middle\|\, \Khat{p}{q}( \mathord\cdot
  \mid X_p ) \right) \right],
\end{align}
corresponding to the averaged Kullback--Leibler (KL) discrepancy
between the two transition kernels over the interval $[p,q]$.
Here the expectation is taken over the marginal distribution
of the reveal process variable $X_p$.

\noindent The core technical result at the heart of~\Cref{ThmMaster} is
the following bound:
\mygraybox{
\begin{lemma}[Multiplicative control of the one-step unmasking defect]
\label{LemGeoOneStep}
For every $0 < p < q < 1$, we have
\begin{align}
\label{EqnGeoOneStep}
\UnmaskKL(p, q) & \leq \left\{ \frac{\odds(q)}{\odds(p)} - 1 \right\}
\Hinfo(p, q) \qquad \mbox{where $\odds(r) \defn \dfrac{r}{1 - r}$.}
\end{align}
\end{lemma}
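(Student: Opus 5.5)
The plan is to first obtain an \emph{exact} integral representation of the one-step defect $\UnmaskKL(p,q)$ in terms of the Bernoulli unmasking gain $\hfun$, and then to compare it pointwise with the weighted integral defining $\Hinfo(p,q)$.

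\textbf{Step 1 (the defect as a conditional total correlation).} Given $X_p$, both kernels $\Kexact{p}{q}$ and $\Khat{p}{q}$ draw the same random subset $A \subseteq \Masked(X_p)$, each masked index being included independently with probability $\beta = (q-p)/(1-p)$. Conditionally on $A$, the exact kernel fills $Z_A$ from the joint law $\Law(Z_A \mid X_p)$, whereas the sampler uses the product $\bigotimes_{i \in A} \Denoise_i(\cdot, X_p)$. Since the subset law is common to both, the chain rule for KL gives
\begin{align*}
\UnmaskKL(p,q) = \Exs\Big[ \KL\Big( \Law(Z_A \mid X_p) \,\Big\|\, \bigotimes_{i\in A} \Law(Z_i \mid X_p) \Big) \Big],
\end{align*}
which is the conditional total correlation of $Z_A$ given $X_p$. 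I will realize $A$ through the coupling \eqref{EqnMaskingProcess}: $A = \{ i : \Uvar_i \in (p,q] \}$. Ordering the elements of $A$ by their uniform times $\Uvar_i$ and applying the chain rule for total correlation yields
\begin{align*}
\UnmaskKL(p,q) = \Exs \sum_{i \in A} \Info\big( Z_i ; Z_{\{j \in A : \Uvar_j < \Uvar_i\}} \,\big|\, X_p \big).
\end{align*}
On the event $\Uvar_i = t$, the pair consisting of $X_p$ and the earlier-revealed coordinates is exactly $X_t$, with coordinate $i$ still masked; this is the forced-mask variable $\NewXvar{i}{t}$. Moreover, $X_p$ is a coarsening of $X_t$ through independent masks. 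Consequently, by the chain rule for mutual information,
\begin{align*}
\Info\big(Z_i;X_t \mid X_p, i\in\Masked\big) = \Info\big(Z_i; X_t \mid i \in \Masked(X_t)\big) - \Info\big(Z_i; X_p \mid i \in \Masked(X_p)\big).
\end{align*}
Integrating over $\Uvar_i \sim \operatorname{Unif}[0,1]$ restricted to $(p,q]$ and summing over $i$ gives the exact identity
\begin{align*}
\UnmaskKL(p,q) = \int_p^q \big\{ \hfun(t) - \hfun(p) \big\} \, dt.
\end{align*}
This is consistent with \eqref{EqnDinfoExact}. The delicate point here, and the step I expect to be the main obstacle, is the bookkeeping: I must check that the conditional independence of the masks makes the conditioning on $\{i \in \Masked\}$ line up with the definition \eqref{EqnDefnHfun}, so that no stray factor such as $1/(1-p)$ appears.

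\textbf{Step 2 (swap the integrals).} Writing $\hfun(t) - \hfun(p) = \int_p^t \hfun'(s)\,ds$ and applying Fubini, I obtain
\begin{align*}
\UnmaskKL(p,q) = \int_p^q (q-s)\, \hfun'(s)\, ds.
\end{align*}
Here $\hfun' \geq 0$, because $\hfun$ is nondecreasing: revealing more coordinates can only increase each conditional mutual information term, by the monotone coupling.

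\textbf{Step 3 (pointwise weight comparison).} With $c \defn \odds(q)/\odds(p) - 1 = (q-p)/(p(1-q))$, it suffices to show that
\begin{align*}
q - s \leq c\, s(1-s) \qquad \mbox{for all } s \in [p,q].
\end{align*}
Since $s/p \geq 1$, it is enough to verify
\begin{align*}
(1-s)(q-p) - (q-s)(1-q) \geq 0.
\end{align*}
The left-hand side is affine in $s$. It equals $(q-p)^2$ at $s = p$ and $(q-p)(1-q)$ at $s = q$, both of which are nonnegative, so the inequality holds on the whole interval. Multiplying the pointwise inequality by $\hfun'(s) \geq 0$ and integrating over $[p,q]$ gives $\UnmaskKL(p,q) \leq c\,\Hinfo(p,q)$, which is the claimed bound \eqref{EqnGeoOneStep}.
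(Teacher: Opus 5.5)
Your proof is correct. Its skeleton matches the paper's: both establish the exact identity $\UnmaskKL(p,q) = \int_p^q \{\hfun(t) - \hfun(p)\}\,dt$, convert it by Fubini to $\int_p^q (q-u)\,\hfun'(u)\,du$, and finish with a pointwise comparison of $q-u$ against $\{\odds(q)/\odds(p) - 1\}\, u(1-u)$.

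The real difference is in how you obtain the exact identity.

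\emph{The paper's route.} It splits the conditional total correlation into two pieces. The first is the expected sum of single-site posterior entropies over $A$, which it evaluates directly as $(q-p)\{S(Z) - \hfun(p)\}$. The second is $\Ent(Z_A \mid X_p, A)$, which it identifies with $\Info(Z; X_q \mid X_p) = \Info(Z; X_q) - \Info(Z; X_p)$. To integrate this it needs the information-rate identity $\frac{d}{du}\Info(Z; X_u) = S(Z) - \hfun(u)$, proved by a first-order expansion over a small time increment $\delta$.

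\emph{Your route.} You apply the chain rule for total correlation along the reveal order induced by the uniforms. You recognize each term as a forced-mask increment $\Info(Z_i; \NewXvar{i}{t}) - \Info(Z_i; \NewXvar{i}{p})$, and integrate against the density of $U_i$ on $(p,q]$. This argument is purely static: it needs no derivative of $\Info(Z; X_u)$ and no $O(\delta^2)$ bookkeeping.

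The weight $q - p = (1-p)\beta$ comes out automatically from the uniform density. Conditioning on $U_i = t$ leaves the other uniforms untouched, so the conditioning on $\{i \in \Masked\}$ lines up exactly with the definition of $\hfun$, and no stray $1/(1-p)$ arises. Your route also exposes that the integrand $\hfun(t) - \hfun(p)$ equals $\Dinfo(p,t)/(t-p)$ by \Cref{LemDinfoSandwich}. In exchange, the paper's route yields the rate identity \eqref{EqnMaskInfoRate} as a by-product, which it needs for the curvature interpretation \eqref{EqnHderivativeMI}.

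The remaining steps are fine. Your affine check for $q - s \leq c\, s(1-s)$ is valid, though the paper's version is slightly quicker: it uses $q - u \leq q - p$ and $u(1-u) \geq p(1-q)$. Your monotonicity argument for $\hfun$ via the monotone coupling is also fine, and since $\hfun$ is a polynomial, the Fubini step is legitimate.
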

}
\noindent See~\Cref{SecProofLemGeoOneStep} for the proof.

Given this lemma, the proof of~\Cref{ThmMaster} is very simple.
Recalling the definition~\eqref{EqnDefnCompletionDefect} of the
completion defect $\CompletionDefect_{\CompKernel}(T)$, we have
\begin{align*}
\KL\left( \Prob_Z \,\middle\|\, \Prob_{\Zhat} \right) &
\stackrel{(i)}{\leq} \KL(\Prob_{\rtime_0} \| \Qprob_{\rtime_0}) +
\sum_{\iter = 0}^{N - 1} \UnmaskKL(\rtime_\iter, \rtime_{\iter+1}) +
\CompletionDefect_{\CompKernel}(T) \\
& \stackrel{(ii)}{\leq} \KL(\Prob_{\rtime_0} \| \Qprob_{\rtime_0}) +
\sum_{\iter = 0}^{N - 1} \left\{
\frac{\odds(\rtime_{\iter+1})}{\odds(\rtime_\iter)} - 1 \right\}
\Hinfo(\rtime_\iter, \rtime_{\iter+1}) +
\CompletionDefect_{\CompKernel}(T),
\end{align*}
where step (i) follows from the KL chain rule, combined with the data
processing inequality, whereas step (ii) follows by applying the
bound~\eqref{EqnGeoOneStep} repeatedly to each of the KL increments.

\paragraph{Extension to estimated denoisers:}
Fix a state $x$ at step $\iter = 0, \ldots, N - 1$.  In the existing
kernel notation, $\Kexact{\rtime_\iter}{\rtime_{\iter + 1}}(
\mathord\cdot \mid x)$ is the exact block transition,
$\Khat{\rtime_\iter}{\rtime_{\iter + 1}}(\mathord\cdot \mid x)$ is the
frozen product transition formed from the exact one-site posteriors,
and $\Khat{\rtime_\iter}{\rtime_{\iter + 1}}[
  \DenoiseHat](\mathord\cdot \mid x)$ is the frozen product transition
formed from the estimated posteriors.  The two frozen kernels have the
same reveal probability $\beta_\iter \defn \frac{\rtime_{\iter + 1} -
  \rtime_\iter}{1 - \rtime_\iter}$.  Since the exact block transition
and $\Khat{\rtime_\iter}{\rtime_{\iter + 1}}(\mathord\cdot \mid x)$
have the same one-coordinate marginals, splitting the log-likelihood
ratio through the product of these marginals gives
\begin{align*}
\KL\left( \Kexact{\rtime_\iter}{\rtime_{\iter + 1}}( \mathord\cdot
\mid x) \,\middle\|\, \Khat{\rtime_\iter}{\rtime_{\iter + 1}}[
  \DenoiseHat](\mathord\cdot \mid x) \right) & = \KL\left(
\Kexact{\rtime_\iter}{\rtime_{\iter + 1}}( \mathord\cdot \mid x)
\,\middle\|\, \Khat{\rtime_\iter}{\rtime_{\iter + 1}}(\mathord\cdot
\mid x) \right) + \beta_\iter \sum_{\ind \in \Masked(x)} \KL\left(
\Denoise_\ind(\mathord\cdot, x) \,\middle\|\,
\DenoiseHat_\ind(\mathord\cdot, x) \right).
\end{align*}
Indeed, for each $\ind \in \Masked(x)$, both one-coordinate kernels
leave the coordinate masked with probability $1 - \beta_\iter$.
Conditional on revealing it, they use $\Denoise_\ind(\mathord\cdot,
x)$ and $\DenoiseHat_\ind(\mathord\cdot, x)$, respectively.  Their
one-coordinate KL divergence is given by $\beta_\iter \KL\left(
\Denoise_\ind(\mathord\cdot, x) \,\middle\|\,
\DenoiseHat_\ind(\mathord\cdot, x) \right)$.  Averaging the first KL
term over $x = X_{\rtime_\iter}$ under the exact reveal-process law
gives exactly $\UnmaskKL(\rtime_\iter, \rtime_{\iter + 1})$.
Averaging the decomposition and summing over $\iter = 0, \ldots, N -
1$ in the same KL chain-rule argument adds the term displayed after
equation~\eqref{EqnMaster}, which completes the proof.


\subsubsection{Proof of~\Cref{LemGeoOneStep}}
\label{SecProofLemGeoOneStep}

Our proof is based on the exact one-step representation
\begin{subequations}
\begin{align}
\label{EqnOneStepExact}
\UnmaskKL(p, q) & = \int_p^q (q - u) \hfun'(u) \, du,
\end{align}
which we return to prove below.  Taking this claim as given, let us
prove the bound~\eqref{EqnGeoOneStep}.  Introduce the shorthand $f(u)
= u \, (1 - u)$, and observe that $f(u) \geq 0$ for $u \in [0,1]$.
Note that for every $u \in [p, q]$, we have $(q - u) \leq (q - p)$,
and $f(u) \geq p (1 - q)$, so that
\begin{align}
\label{EqnItchy}
q - u \; \leq \; \frac{q - p}{p (1 - q)} f(u) & \stackrel{(i)}{=}
\left\{ \frac{q (1 - p)}{p (1 - q)} - 1 \right\} f(u) \;
\stackrel{(ii)}{=} \; \left\{ \frac{\odds(q)}{\odds(p)} - 1
\right\} f(u)
\end{align}
\end{subequations}
where step (i) follows from simple algebra; and step (ii) uses the
definition $\odds(r) = r/(1-r)$.
Thus, we can write
\begin{align*}
  \UnmaskKL(p, q) & \stackrel{(iii)}{=} \int_p^q (q - u) \hfun'(u) \,
  du \stackrel{(iv)}{\leq} \left\{ \frac{\odds(q)}{\odds(p)} - 1
  \right\} \int_p^q f(u) \; \hfun'(u) du \; \stackrel{(v)}{=} \;
  \left\{ \frac{\odds(q)}{\odds(p)} - 1 \right\} \Hinfo(p,q),
\end{align*}
where step (iii) follows from the exact
representation~\eqref{EqnOneStepExact}; step (iv) follows from the
inequality~\eqref{EqnItchy} along with the fact that $\hfun'(t) \geq
0$ for all $t \in [0,1]$; and step (v) follows from the
definition~\eqref{EqnDefnUGC} of the \ugc function. Thus, we have
established the claim~\eqref{EqnGeoOneStep}.

\paragraph{Proof of the exact representation~\eqref{EqnOneStepExact}:}

We first claim that $\hfun$ satisfies the 
relation
\begin{subequations}
\begin{align}
  \label{EqnMaskInfoRate}
\hfun(u) & = S(Z) - \frac{d}{du} \Info\left( \Zvar; X_u \right),
\end{align}
where we have introduced the shorthand $S(Z) \defn \sum_{i =
  1}^{\usedim} \Ent(\Zvar_i)$ for the entropy sum.  To verify the
claim~\eqref{EqnMaskInfoRate}, fix some $\delta > 0$ sufficiently
small.  Conditional on $X_u$, each coordinate $i \in \Masked(X_u)$ is
revealed by time $u + \delta$ with probability $\delta / (1 - u)$.
Revealing two or more coordinates has probability $O(\delta^2)$, while
revealing only coordinate $i$ contributes its current posterior
entropy.  Thus, dividing the first-order expansion of the information
increment by $\delta$ and letting $\delta \downarrow 0$ gives
\begin{align*}
  \frac{d}{du} \Info\left( \Zvar; X_u \right) & = \frac{1}{1 - u}
  \sum_{i = 1}^{\usedim} \Exs\left[ \mathbf{1}\big\{ i \in
    \Masked(X_u) \big\} \Ent\left( \Zvar_i \mid X_u \right) \right] \;
  = \; \sum_{i = 1}^{\usedim} \Ent\left( \Zvar_i \mid X_u,\, i \in
  \Masked(X_u) \right).
\end{align*}
Since the event $i \in \Masked(X_u)$ is independent of $\Zvar_i$, we
have the equivalence
\begin{align*}
  \Ent\left( \Zvar_i \mid X_u,\, i \in \Masked(X_u) \right) =
  \Ent(\Zvar_i) - \Info\left( \Zvar_i; X_u \mid i \in \Masked(X_u)
  \right).
\end{align*}
Summing over $i$ and using the definitions of $S(Z)$ and $\hfun(u)$
proves the claim~\eqref{EqnMaskInfoRate}.

We now prove the exact representation.  Conditional on $X_p = x$, let
$A \subseteq \Masked(x)$ be the set of coordinates revealed between
$p$ and $q$.  The exact and frozen-product kernels induce the same law
for the random subset $A$. Conditional on $A$, they unmask coordinates
using, respectively, the joint posterior of $\Zmask{A}$ and the
product of its one-site posterior marginals.  Consequently, we can
decompose the KL divergence as
\begin{align*}
  \KL\left( \Kexact{p}{q}( \mathord\cdot \mid x ) \,\middle\|\,
  \Khat{p}{q}( \mathord\cdot \mid x ) \right) & = \underbrace{
    \Exs\left[ \sum_{i \in A} \Ent\left( \Zvar_i \mid X_p = x \right)
      \,\middle|\, X_p = x \right] }_{\Term_1} - \underbrace{
    \Exs\left[ \Ent\left( \Zmask{A} \mid X_p = x, A \right)
      \,\middle|\, X_p = x \right] }_{\Term_2}.
\end{align*}
Note that each coordinate still masked at time $p$ is included in $A$
with probability $\beta_{p, q} \defn \frac{q - p}{1 - p}$. Averaging
over $X_p$ and using the definition~\eqref{EqnDefnHfun} of $\hfun$, we
find that
\begin{align*}
  \Exs[\Term_1] & = \beta_{p, q} (1 - p) \left\{ S(\Zvar) - \hfun(p)
  \right\} \; = \; (q - p) \left\{ S(\Zvar) - \hfun(p) \right\}.
\end{align*}
Moreover, conditional on $X_p$, the new part of $X_q$ is $(A,
\Zmask{A})$.  Since $A$ is conditionally independent of $\Zvar$ and
the random sub-vector $\Zmask{A}$ is a function of $(\Zvar, A)$, we
have
\begin{align*}
  \Exs[\Term_2] & = \Info\left( \Zvar; A, \Zmask{A} \mid X_p \right)
  \; = \; \Info\left( \Zvar; X_q \mid X_p \right).
\end{align*}
Combining these two identities yields
  \begin{align}
    \label{EqnUnmaskKLInformation}
  \UnmaskKL(p, q) & = (q - p) \left\{ S(Z) - \hfun(p) \right\} -
  \Info\left( \Zvar; X_q \mid X_p \right).
\end{align}

It remains to analyze the trailing mutual information term.
We have
\begin{align}
\label{EqnInter}  
  \Info\left( \Zvar; X_q \mid X_p \right) & \stackrel{(i)}{=}
  \Info\left( \Zvar; X_q \right) - \Info\left( \Zvar; X_p \right) \;
  \stackrel{(ii)}{=} \; \int_p^q \frac{d}{du} \Info\left( \Zvar; X_u
  \right) du \; \stackrel{(iii)}{=}\; \int_p^q \left\{ S(Z) - \hfun(u)
  \right\} \, du,
\end{align}
\end{subequations}
where step (i) follows because $\Zvar \longrightarrow X_q
\longrightarrow X_p$ is a Markov chain; step (ii) follows from the
fundamental theorem of calculus; and step (iii) follows from the
identity~\eqref{EqnMaskInfoRate}.

Finally, we have
\begin{align*}
  \UnmaskKL(p, q) & \stackrel{(iv)}{=} \int_p^q \left\{ \hfun(u) -
  \hfun(p) \right\} \, du \; \stackrel{(v)}{=} \; \int_p^q \int_p^u
  \hfun'(v) \, dv \, du \; \stackrel{(vi)}{=} \; \int_p^q (q - v)
  \hfun'(v) \, dv,
\end{align*}
where equality (iv) follows by substituting the
representation~\eqref{EqnInter} into
equation~\eqref{EqnUnmaskKLInformation} and simplifying terms; step
(v) follows from the fundamental theorem of calculus; and step (vi)
follows from Fubini's theorem.  This proves the exact
representation~\eqref{EqnOneStepExact}.

\subsection{Proof of~\Cref{ThmMaster}: Exact cardinality sampling}
\label{SecProofThmMasterExact}

For integers $0 \leq a < b \leq \usedim - 1$, let $A$ be a uniformly
random $a$-subset of $[\usedim]$, independent of $\Zvar$, and define
\begin{align*}
X^{[a]} & \defn \Big( \Zmask{A}, \MyMask{A^c} \Big) \qquad \mbox{where
  $A \sim \operatorname{Unif}\Big\{ S \subseteq [\usedim] \,\Big|\,
  \card(S) = a \Big\}$},
\end{align*}
where $A$ is a subset of cardinality $a$, chosen uniformly at random,
and independently of $\Zvar$.  Conditional on $A$, we choose $(B \mid
A) \sim \operatorname{Unif}\Big\{ S \subseteq A^c \,\Big|\, \card(S)
= b - a \Big\}$, corresponding to the uniform-subset
rule~\eqref{EqnRandSubUni} that defines the exact cardinality sampler.

Write $\Kexact{a}{b}^{\mathrm{card}}$ for the exact transition from
$X^{[a]}$ to cardinality $b$, and $\Khat{a}{b}^{\mathrm{card}}$ for
the transition generated by the \Card-unmasking sampler.  In analogy
with the Bernoulli defect~\eqref{EqnDefnUnmaskKL}, define the one-step
KL defect
\begin{subequations}
\begin{align}
\label{EqnDefnCardUnmaskKL}
\UnmaskCard(a, b) & \defn \Exs_{X^{[a]}}\left[ \KL\left(
  \Kexact{a}{b}^{\mathrm{card}}(\mathord\cdot \mid X^{[a]})
  \,\middle\|\, \Khat{a}{b}^{\mathrm{card}}(\mathord\cdot \mid
  X^{[a]}) \right) \right].
\end{align}
The kernels use the same conditional law for $B$.  Conditional on
$(X^{[a]}, B)$, the exact kernel samples from $\Law(\Zvar_B \mid
\Zvar_A)$, whereas the sampler uses $\bigotimes_{i \in B} \Law(\Zvar_i
\mid \Zvar_A)$, corresponding to the product
update~\eqref{EqnDefnRandUmask}.

We now apply the one-block specialization of the exact error
representation of Chen et al.~\cite[Theorem~3.3]{CheEtAl25}.  After
translation\footnote{To be clear, their information curve at index $j
+ 1$ is $\hcard_j / \usedim$, and their previous cardinality and block
size are $a$ and $b - a$.  Since $B$ is recoverable from the input and
output mask patterns, retaining the subset partition does not change
this transition KL divergence.} to our normalization and notation, it
guarantees that
\begin{align}
\label{EqnCardOneStepExact}
\UnmaskCard(a, b) & = \frac{1}{\usedim} \sum_{j = a + 1}^{b - 1} (b -
j) \big\{ \hcard_j - \hcard_{j - 1} \big\}.
\end{align}
\end{subequations}

We claim that this exact representation admits the following upper
bound:
\mygraybox{
\begin{lemma}[Fixed-cardinality one-step unmasking defect]
\label{LemCardOneStep}
For all integers $0 \leq a < b \leq \usedim - 1$, we have
\begin{align}
\label{EqnCardOneStep}
\UnmaskCard(a, b) & \leq \left\{ \frac{\odds(b / \usedim)}
           {\odds\big( (a + 1) / \usedim \big)} - 1 \right\}
           \Hcard(a, b).
\end{align}
\end{lemma}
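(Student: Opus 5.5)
The argument mirrors the proof of \Cref{LemGeoOneStep}, with the integral replaced by a sum. We start from the exact representation~\eqref{EqnCardOneStepExact},
\begin{align*}
\UnmaskCard(a, b) = \frac{1}{\usedim} \sum_{j = a + 1}^{b - 1} (b - j) \big\{ \hcard_j - \hcard_{j - 1} \big\},
\end{align*}
and compare it term by term with the definition~\eqref{EqnDefnHcardUGC} of $\Hcard(a,b)$. Both sums run over the same range $j = a+1, \ldots, b-1$ and involve the same increments $\hcard_j - \hcard_{j-1}$. The proof therefore reduces to two facts. The first is that these increments are nonnegative. The second is a pointwise inequality between the weights $(b-j)/\usedim$ and $(j/\usedim)(1 - j/\usedim)$.

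For the first fact, monotonicity of $j \mapsto \hcard_j$ holds because, for each fixed $i$, a uniform $j$-subset of $[\usedim]\setminus\{i\}$ can be coupled as a subset of a uniform $(j+1)$-subset. Mutual information is monotone under enlarging the conditioning set, so $\Exs[\Info(\Zvar_i; \Zvar_{B_{i,j}})] \le \Exs[\Info(\Zvar_i; \Zvar_{B_{i,j+1}})]$. The base case $\hcard_0 = 0 \le \hcard_1$ is immediate. This is the discrete analogue of the fact $\hfun' \ge 0$ used in step (iv) of \Cref{LemGeoOneStep}.

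For the weight inequality, set $u = j/\usedim$, $p' = (a+1)/\usedim$ and $q = b/\usedim$. For $j$ in the range $a+1 \le j \le b-1$ we have $u \in [p', q)$, and $b - j = \usedim (q - u)$. The target is therefore
\begin{align*}
q - u \le \left\{ \frac{\odds(q)}{\odds(p')} - 1 \right\} u(1-u).
\end{align*}
This is exactly inequality~\eqref{EqnItchy} with $p$ replaced by $p'$. To verify it, use $q - u \le q - p'$ and $u(1-u) \ge p'(1-q)$, where the latter holds since $p' \le u \le q$. Then $\frac{q-p'}{p'(1-q)} = \frac{\odds(q)}{\odds(p')} - 1$ by the same algebra as before.

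Multiplying each weight inequality by the nonnegative increment $\hcard_j - \hcard_{j-1}$ and summing over $j$ gives the claim~\eqref{EqnCardOneStep}. The shift from $a/\usedim$ to $(a+1)/\usedim$ is precisely what allows this: the sum starts at $j = a+1$, so the lower endpoint of the effective interval is $p'$ rather than $a/\usedim$. This is also why the bound~\eqref{EqnMasterCard} carries $\odds(\rtime_\iter + \tfrac{1}{\usedim})$.

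The only step requiring care is the monotonicity of $\hcard_j$. The coupling argument is straightforward, but it must be stated with the uniform-subset laws, that is, by constructing $B_{i,j} \subseteq B_{i,j+1}$ through removal of a uniformly chosen element. Degenerate cases are harmless. When $b = a+1$ the sum is empty and both sides vanish. The edge case $a+1 = b$ with $\odds(p') = \odds(q)$ gives a zero multiplier but also an empty sum, so there is no conflict.
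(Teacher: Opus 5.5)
Your proof is correct and follows the paper's argument. Both start from the exact representation~\eqref{EqnCardOneStepExact} and use nonnegativity of the increments $\hcard_j - \hcard_{j-1}$; the paper proves this in the appendix by the same nested-subset coupling, adding an element where you remove one. Both then bound the weight ratio $\usedim(b-j)/\{j(\usedim-j)\}$ termwise by $\rho_{a,b} = \usedim(b-a-1)/\{(a+1)(\usedim-b)\}$, which is exactly your inequality~\eqref{EqnItchy} with $p$ replaced by $(a+1)/\usedim$.
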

}
\begin{proof}
If $b = a + 1$, both sums are empty.  Otherwise, we introduce the
shorthand
\begin{align*}
\rho_{a, b} \defn \frac{\odds(b / \usedim)} {\odds\big( (a + 1)
  / \usedim \big)} - 1 \; = \; \frac{\usedim (b - a - 1)} {(a +
  1)(\usedim - b)},
\end{align*}
and make note of the upper bound $\frac{\usedim (b - j)} {j (\usedim -
  j)} \leq \rho_{a, b}$, valid for every $a + 1 \leq j \leq b - 1$.
Since $\hcard_j$ is non-decreasing, the increments $\hcard_j -
\hcard_{j - 1}$ are nonnegative.  Putting together the pieces, we
arrive at the upper bound
\begin{align*}
\UnmaskCard(a, b) & = \frac{1}{\usedim} \sum_{j = a + 1}^{b - 1} (b -
j) \big\{ \hcard_j - \hcard_{j - 1} \big\} \; \leq \; \rho_{a, b}
\frac{1}{\usedim^2} \sum_{j = a + 1}^{b - 1} j (\usedim - j) \big\{
\hcard_j - \hcard_{j - 1} \big\} \; = \; \rho_{a, b} \Hcard(a, b),
\end{align*}
where the last equality follows from the
definition~\eqref{EqnDefnHcardUGC} of $\Hcard$.  This completes the
proof of the upper bound~\eqref{EqnCardOneStep}.
\end{proof}

Note that~\Cref{LemCardOneStep} is the fixed-cardinality analogue of
the Bernoulli sampler bound given in~\Cref{LemGeoOneStep}.
Consequently, the KL bound~\eqref{EqnMasterCard} for the
fixed-cardinality sampler follows from the same argument, instead
using~\Cref{LemCardOneStep} to bound the one-step errors.


\subsection{Proof of data-certification results}
\label{SecProofData}

In this section, we provide proofs of our data-based certification
results, including that of~\Cref{LemDinfoSandwich}
in~\Cref{SecProofLemDinfoSandwich}; as well as that
of~\Cref{PropDataSingle} in~\Cref{SecProofPropDataSingle}.


\subsubsection{Proof of~\Cref{LemDinfoSandwich}}
\label{SecProofLemDinfoSandwich}

We break our proof into parts, one for each claim in the statement.

\paragraph{Proof of the claim~\eqref{EqnDinfoExact}:}
We first prove the exact expression~\eqref{EqnDinfoExact}.  For each
coordinate $i$, let $\NewXvar{i}{p}$ and $\NewXvar{i}{q}$ denote the
coupled observations at times $p$ and $q$, with coordinate $i$ kept
masked. Note that the observation $\NewXvar{i}{p}$ is obtained from
$\NewXvar{i}{q}$ by a random remasking operation independent of
$\Zvar_i$ conditional on $\NewXvar{i}{q}$. Consequently, the triple
$\Zvar_i \longrightarrow \NewXvar{i}{q} \longrightarrow
\NewXvar{i}{p}$ forms a Markov chain, so that we can write
\begin{subequations}
\begin{align}
  \label{EqnMarkovIncrement}
  \Info(\Zvar_i; \NewXvar{i}{q} \mid \NewXvar{i}{p}) & = \Info\left(
  \Zvar_i; \NewXvar{i}{q} \right) - \Info\left( \Zvar_i;
  \NewXvar{i}{p} \right).
\end{align}

At the same time, by the conditional KL representation of conditional
mutual information, we have
\begin{align}
  \Info(\Zvar_i; \NewXvar{i}{q} \mid \NewXvar{i}{p})
  & = \Exs_{(\NewXvar{i}{p}, \NewXvar{i}{q})} \left[
    \KL\left(
      \Law(\Zvar_i \mid \NewXvar{i}{q}, \NewXvar{i}{p})
      \,\middle\|\,
      \Law(\Zvar_i \mid \NewXvar{i}{p})
    \right)
  \right] \notag\\
  \label{EqnTatte}
  & = \Exs_{(\NewXvar{i}{p}, \NewXvar{i}{q})} \left[
    \KL\left(
      \Law(\Zvar_i \mid \NewXvar{i}{q})
      \,\middle\|\,
      \Law(\Zvar_i \mid \NewXvar{i}{p})
    \right)
  \right],
\end{align}
where the second equality follows from the Markov relation
$\Zvar_i \longrightarrow \NewXvar{i}{q} \longrightarrow
\NewXvar{i}{p}$, which implies
\begin{align*}
  \Law(\Zvar_i \mid \NewXvar{i}{q}, \NewXvar{i}{p})
  & = \Law(\Zvar_i \mid \NewXvar{i}{q}).
\end{align*}
Combining equations~\eqref{EqnMarkovIncrement} and~\eqref{EqnTatte}
yields the identity
\begin{align*}
  \Exs\left[ \KL\left( \Law(\Zvar_i \mid \NewXvar{i}{q}) \,\middle\|\,
    \Law(\Zvar_i \mid \NewXvar{i}{p}) \right) \right] \; = \;
  \Info\left( \Zvar_i; \NewXvar{i}{q} \right) - \Info\left( \Zvar_i;
  \NewXvar{i}{p} \right).
\end{align*}
Summing both sides of this identity over the indices $i = 1, \ldots,
\usedim$, and using the definitions of $\Dinfo$ and $\hfun$ in
equation~\eqref{EqnDefnDinfoMask} and
equation~\eqref{EqnDefnHfun}, respectively,
we find that
\begin{align}
\label{EqnExactIdentityTwo}  
  \frac{\Dinfo(p, q)}{q-p} & = \hfun(q) - \hfun(p),
\end{align}
as claimed in equation~\eqref{EqnDinfoExact}.

\paragraph{Proof of the sandwich relation~\eqref{EqnDinfoSandwich}:}
Turning to the claim~\eqref{EqnDinfoSandwich}, for any pair $0 < p < q
< 1$, we have
\begin{align*}
p (1 - q) \; \leq \; \rtime (1 - \rtime) \; \leq \; q (1 - p) \qquad
\mbox{for all $\rtime \in [p, q]$.}
\end{align*}
Multiplying both sides by $\hfun'(\rtime) \geq 0$ and integrating
yields
\begin{align*}
  p (1 - q) \big \{ \hfun(q) - \hfun(p) \big \} \; \leq \; \int_p^q
  \rtime (1 - \rtime) \hfun'(\rtime) \, d \rtime \; \leq \; q (1 - p)
  \big \{ \hfun(q) - \hfun(p) \big \}.
\end{align*}
Using the definition~\eqref{EqnDefnUGC} of $\Hinfo(p, q)$ and the
exact identity~\eqref{EqnExactIdentityTwo}, we have proved that
\begin{align}
\label{EqnAlmostSandwich}  
  \frac{p (1 - q)}{q - p} \Dinfo(p, q) \; \leq \; \Hinfo(p, q) \; \leq
  \; \frac{q (1 - p)}{q - p} \Dinfo(p, q).
\end{align}
\end{subequations}
Finally, an elementary calculation gives $\myrho \defn
\frac{\odds(q)}{\odds(p)} - 1 = \frac{q - p}{p (1 - q)}$, so
that we have
\begin{align*}
  \frac{p (1 - q)}{q - p} & = \frac{1}{\myrho}, \quad \mbox{and} \quad
  \frac{q (1 - p)}{q - p} = \frac{1 + \myrho}{\myrho}.
\end{align*}
Combining with equation~\eqref{EqnAlmostSandwich} proves the
claim~\eqref{EqnDinfoSandwich}.


\subsubsection{Proof of~\Cref{PropDataSingle}}
\label{SecProofPropDataSingle}

Introduce the shorthand $\mu(p, q) \defn \Exs[\QhatSam] = \sum_{j =
  0}^{J - 1} \Dinfo(v_j, v_{j + 1})$ for the expected value of
$\QhatSam$.  Note that we have
\begin{subequations}
\begin{align}
\label{EqnProofProxySandwich}
\mu(p, q) & \leq \Hinfo(p, q) \leq 2 \mu(p, q),
\end{align}
as previously stated in equation~\eqref{EqnQhatSamSand}.  We first
derive the required moment bound for the trajectory statistic from the
denoiser condition~\eqref{EqnMomentCondition}. Set $r \defn \alpha /
2$, so that $r \geq 2$.  Introducing the shorthand notation
\begin{align*}
K_j & \defn \sum_{i = 1}^{\usedim} \KL\left(
  \Denoise_i\left( \mathord\cdot,
  \XvarForce{v_{j + 1}} \right) \,\middle\|\, \Denoise_i\left(
  \mathord\cdot, \XvarForce{v_j} \right) \right),
\qquad \mbox{for $j = 0, \ldots, J - 1$,}
\end{align*}
our statistic $\QhatSam$ can be written as $\QhatSam = \sum_{j = 0}^{J
  - 1} (v_{j + 1} - v_j) K_j$.  We then apply Minkowski's inequality
and make use of the moment condition~\eqref{EqnMomentCondition},
thereby obtaining the moment bound
\begin{align*}
\left\{ \Exs\left[ (\QhatSam)^r \right]
  \right\}^{1 / r} & \leq \sum_{j = 0}^{J - 1} (v_{j + 1} - v_j)
  \left\{ \Exs\left[ K_j^r
  \right] \right\}^{1 / r} \; \leq \; \sum_{j = 0}^{J - 1} (v_{j + 1}
- v_j) B_\alpha = \underbrace{(q - p) B_\alpha}_{\equiv M_\alpha}.
\end{align*}

In terms of the shorthand $\Yup{\samind} \defn 2 \min\{
\Qfunup{\samind}, \tau \}$, we can write $\Hhat(p, q) =
\frac{1}{\numobs} \sum_{\samind = 1}^{\numobs} \Yup{\samind}$.  We
first bound the truncation bias. Since $r > 1$, we have
\begin{align}
\label{EqnMaskTruncatedBias}
0 \leq 2 \mu(p, q) - \Exs[\Yup{\samind}] & = 2 \Exs\left[ (\QhatSam -
  \tau)_+ \right] \; \leq \;\frac{ 2 \Exs\left[ (\QhatSam)^r \right]
}{ \tau^{r - 1} } \leq \frac{ 2 M_\alpha^r }{ \tau^{r - 1} }.
\end{align}

We are now set up to apply the empirical Bernstein inequality of
Maurer and Pontil~\cite[Theorem~4]{MauPont09}.  We apply it to both
the i.i.d. variables $\Yup{\samind} / (2 \tau) \in [0, 1]$, and then
separately to the i.i.d. variables $1 - \Yup{\samind} / (2 \tau)$.
Observe that the two collections have the same empirical variance.  We
apply a failure probability of $\eta/2$ to each application, so that a
union bound over both tail bounds has failure probability at most
$\eta$.  Via this argument, we are guaranteed to have the two-sided
tail bound
\begin{align}
\label{EqnMaskEmpiricalBernstein}
\left| \Exs[\Yup{\samind}] - \Hhat(p, q) \right| & \leq \sqrt{ \frac{
    2 \Vhat \log(4 / \eta) }{ \numobs } } + \frac{ 14 \tau \log(4 /
  \eta) }{ 3 (\numobs - 1) }
\end{align}
with probability at least $1 - \eta$.

Combining the truncation bias~\eqref{EqnMaskTruncatedBias} with the
tail bound~\eqref{EqnMaskEmpiricalBernstein}, we find that
\begin{align*}
\left| 2 \mu(p, q) - \Hhat(p, q) \right| & \leq \sqrt{ \frac{ 2 \Vhat
    \log(4 / \eta) }{ \numobs } } + \frac{ 14 \tau \log(4 / \eta) }{ 3
  (\numobs - 1) } + \frac{ 2 M_\alpha^r }{ \tau^{r - 1} }
\end{align*}
with probability at least $1 - \eta$.  We claim that the right-hand side of
this bound is precisely $\HackErr$.  In particular, we can express the
truncation level~\eqref{EqnDefnHhat} as $\tau = M_\alpha c_\eta^{-1 /
  r}$, where $c_\eta \defn \frac{ 7 \log(4 / \eta) }{ 3 (\numobs - 1)
}$.  In terms of this shorthand, we have $\frac{ 14 \tau \log(4 /
  \eta) }{ 3 (\numobs - 1) } = 2 M_\alpha c_\eta^{1 - 1 / r}$ and
$\frac{ 2 M_\alpha^r }{ \tau^{r - 1} } = 2 M_\alpha c_\eta^{1 - 1 /
  r}$.  Since $1 / r = 2 / \alpha$, the claimed equivalence follows,
and we conclude that
\begin{align}
\label{EqnMaskScaledProxyEvent}
\left| 2 \mu(p, q) - \Hhat(p, q) \right| & \leq \HackErr
\end{align}
\end{subequations}
with probability at least $1 - \eta$.

It remains to prove the sandwich~\eqref{EqnDataSingle} claimed in the
statement.  Conditioned on the event~\eqref{EqnMaskScaledProxyEvent},
the upper inequality in equation~\eqref{EqnProofProxySandwich} gives
\begin{align*}
\Hinfo(p, q) & \leq 2 \mu(p, q) \leq \Hhat(p, q) + \HackErr.
\end{align*}
For the other side, the same concentration event and the lower
inequality in equation~\eqref{EqnProofProxySandwich} give
\begin{align*}
\Hhat(p, q) + \HackErr & \leq 2 \mu(p, q) + 2 \HackErr \leq 2 \big\{
\Hinfo(p, q) + \HackErr \big\}.
\end{align*}
Combining the last two displays proves the claim~\eqref{EqnDataSingle}.

%


\subsection{Proof of~\Cref{ThmCertifiedMulti}}
\label{SecProofThmCertifiedMulti}

We need to verify two separate claims: the stated
bound~\eqref{EqnCertifiedMulti} on the KL divergence, and the fact
that the procedure terminates in at most $N$ unmasking rounds.
Throughout the proof, we make use of the shorthand
\begin{align}
  \label{EqnDefnAshort}
  \ashort_\bind \defn \frac{4
    \sqrt{\PartCompHat(\Partition)}}{\Nscore}
  \sqrt{\frac{\Sinfo_\bind}{\HhatBind{\bind} + \rhat_\bind}} \qquad
  \mbox{so that $\rhohat_\bind = \min\{1, \ashort_\bind\}$.}
\end{align}

\subsubsection{Verifying the KL bound~\eqref{EqnCertifiedMulti}}

Our first step is to apply~\Cref{PropDataSingle} with failure
probability \mbox{$\eta /\Btot \in (0,1)$} to each block
$[\block_\bind, \block_{\bind + 1}]$.  The estimator in this
proposition gives us an estimate $\HhatBind{\bind} \geq 0$ along with
the upper confidence correction $\rhat_\bind > 0$.  By the union bound
over all $\Btot$ blocks, with probability at least $1 - \eta$, we are
guaranteed to have
\begin{align}
  \label{EqnGoodEvent}  
  \Hinfo_\bind & \leq \HhatBind{\bind} + \rhat_\bind \qquad \mbox{for
    all $\bind = 0, \ldots, \Btot - 1$,}
\end{align}
where we have used inequality (A) in equation~\eqref{EqnDataSingle}.
We condition on this ``good event'' throughout the remainder of the
proof.

For each block, we define the iteration count $\Nhat_\bind \defn
\left\lceil \frac{\Sinfo_\bind}{\log(1 + \rhohat_\bind)}
\right\rceil$, using the stepsizes $\rhohat_\bind$ specified in the
theorem statement.  Observe that from our choice of $\rhohat_\bind$
and the definition~\eqref{EqnDefnGeoRho} of the geometric schedule, it
takes at most $\Nhat_\bind$ iterations to traverse the block, and
every step inside this block has reveal-odds multiplier at most $1 +
\rhohat_\bind$.  We now apply~\Cref{ThmMaster} to this concatenated
multi-block schedule.  By the additivity~\eqref{EqnHinfoAdditive} of
$\Hinfo$, the contribution from block $\bind$ takes the form
$\rhohat_\bind \Hinfo_\bind$.  Summing up these terms, along with the
boundary correction, yields
\begin{align*}
  \KL(\Prob_\Zvar \| \Prob_{\Zhat}) & \leq \sum_{\bind = 0}^{\Btot -
    1} \rhohat_\bind \Hinfo_\bind + \BOUNDARY \\
  & \stackrel{(i)}{\leq} \sum_{\bind = 0}^{\Btot - 1} \rhohat_\bind
  (\HhatBind{\bind} + \rhat_\bind) + \BOUNDARY \\
  & \stackrel{(ii)}{\leq} \sum_{\bind = 0}^{\Btot - 1} \; \;
  \underbrace{\tfrac{4 \sqrt{\PartCompHat(\Partition)}}{\Nscore}
    \sqrt{\tfrac{\Sinfo_\bind}{\HhatBind{\bind} + \rhat_\bind}}}_{ =
    \ashort_\bind} \; (\HhatBind{\bind} + \rhat_\bind) + \BOUNDARY \\
& = \frac{4 \sqrt{\PartCompHat(\Partition)}}{\Nscore} \sum_{\bind =
    0}^{\Btot - 1} \sqrt{\Sinfo_\bind (\HhatBind{\bind} +
    \rhat_\bind)} + \BOUNDARY \\
  & \stackrel{(iii)}{=} \frac{4 \PartCompHat(\Partition)}{\Nscore} +
  \BOUNDARY,
\end{align*}
where step (i) uses the good event bound~\eqref{EqnGoodEvent}; step
(ii) uses the fact that $\rhohat_\bind \leq \ashort_\bind$, along with
the definition of $\ashort_\bind$; and equality (iii) follows from the
definition of $\PartCompHat(\Partition)$.


\subsubsection{Verifying the score-evaluation budget}

It remains to verify that the procedure terminates in at most $N$
unmasking rounds, so that it satisfies the prescribed score evaluation
budget.  Since $\rhohat_\bind \in [0, 1]$ and $\log(1 + u) \geq u / 2$
for $u \in [0, 1]$, we have
\begin{align*}
  \Nhat_\bind \; \leq \; 1 + \frac{\Sinfo_\bind} {\log(1 +
    \rhohat_\bind)} \; \leq \; 1 + \frac{2
    \Sinfo_\bind}{\rhohat_\bind} & = 1 + 2 \Sinfo_\bind \max\{1,
  \tfrac{1}{\ashort_\bind} \} \\
  & \leq 1 + 2 \Sinfo_\bind + \frac{\Nscore}{2
    \sqrt{\PartCompHat(\Partition)}} \sqrt{\Sinfo_\bind
    (\HhatBind{\bind} + \rhat_\bind)}.
\end{align*}
We now sum over blocks, using the definition~\eqref{EqnLogRevealBlock}
of $\Sinfo_\bind$, and the definition of $\PartCompHat(\Partition)$.
Doing so yields
\begin{align*}
  \sum_{\bind = 0}^{\Btot - 1} \Nhat_\bind & \leq \Btot + 2
  \sum_{\bind = 0}^{\Btot - 1} \Sinfo_\bind + \frac{\Nscore}{2
    \sqrt{\PartCompHat(\Partition)}} \sum_{\bind = 0}^{\Btot - 1}
  \sqrt{\Sinfo_\bind (\HhatBind{\bind} + \rhat_\bind)} \\
  & = \Btot + 2 \big\{ \logit(\newrfinal) -
  \logit(\newrinit) \big\} +
  \frac{\Nscore}{2} \\
  & \leq \Nscore,
\end{align*}
where the final inequality follows from the assumed lower bound
$\Nscore \geq 2 \Btot + 4 \big\{ \logit(\newrfinal) -
\logit(\newrinit) \big\}$.

\subsection{Proof of~\Cref{ThmFineEuler}}
\label{SecProofThmFineEuler}

We split our proof into two parts, corresponding to the two claims
in the theorem.

\subsubsection{Proof of the claim~\eqref{EqnFinePartSandwich}}

Introducing the shorthand $\fdens(\lam) \defn \sqrt{\qdens(\lam)}$ and
$\Fstar \defn \int_{-\Elld}^{\Elld} \fdens(\lam) \, d\lam$, we need to
prove that $\inf_{\Partition} \PartComp(\Partition) = \Fstar^2$.  By
the definition~\eqref{EqnDefnPartHfine}, for a partition of $[-\Elld,
  \Elld]$ into intervals $L_\bind$ with length $|L_\bind|$, applying
the Cauchy--Schwarz inequality on each interval gives
\begin{align}
\label{EqnLower}  
  \sqrt{\PartComp(\Partition)} & = \sum_\bind \sqrt{ |L_\bind|
    \int_{L_\bind} \qdens(\lam) \, d\lam } \geq \sum_\bind
  \int_{L_\bind} \fdens(\lam) \, d\lam = \Fstar.
\end{align}
Conversely, let $\UniK$ be a uniform $\Btot$-block partition with
blocks $\{L_k\}$. By the integral form of the mean-value theorem, we
can write $\int_{L_k} \qdens(\lam) d \lam \; = \; |L_k|
\qdens(\xi_\bind)$ for some $\xi_\bind \in L_\bind$, and consequently,
we obtain
\begin{align*}
\sqrt{\inf_\Partition \PartComp(\Partition)} \leq \sqrt{
  \PartComp(\UniK)} & = \sum_\bind \sqrt{ |L_\bind| \int_{L_\bind}
  \qdens(\lam) \, d\lam } \; = \sum_\bind \fdens(\xi_\bind) |L_\bind|.
\end{align*}
The quantity on the right-hand side is a Riemann sum for $\fdens$;
taking the limit as $\Btot \rightarrow +\infty$ shrinks the intervals
to zero, and so it converges to $\Fstar$.  We have thus shown that
$\inf_\Partition \PartComp(\Partition) \leq \Fstar^2$.  Combined with
the lower bound~\eqref{EqnLower}, the proof is complete.


\subsubsection{Proof of the claim~\eqref{EqnFineEulerOptimality}}

Recall our shorthand $\fdens = \sqrt{\qdens}$ and $\Fstar =
\int_{-\Elld}^{\Elld} \fdens(\lam) \, d\lam$.  For a pair $-\Elld \leq
x < y \leq \Elld$, introduce the shorthand $\gamma(x, y) \defn
\UnmaskKL\big( \newrevinv(x), \newrevinv(y) \big)$ where $R(\lam) =
\frac{e^\lam}{1 + e^\lam}$.  By changing variables in
equation~\eqref{EqnOneStepExact}, and using the derivative
$\newrevinv'(t) = \newrevinv(t) \{ 1 - \newrevinv(t) \}$, we obtain
\begin{subequations}
  \begin{align}
    \label{EqnCalifornia}
  \gamma(x, y) = \int_x^y \qdens(t) \frac{\newrevinv(y) -
    \newrevinv(t)}{\newrevinv'(t)} \, dt.
\end{align}

Define the constants $m \defn \min_{[-\Elld,\Elld]} \newrevinv' > 0$,
$M \defn \max_{\lam \in [-\Elld,\Elld]} \newrevinv'(\lam) < \infty$,
and $q_{\min} \defn \min_{\lam \in [-\Elld,\Elld]} \qdens(\lam) > 0$.
We then have $\newrevinv(y) - \newrevinv(t) \geq m (y - t)$ and
$\newrevinv'(t) \leq M$, from which equation~\eqref{EqnCalifornia}
implies that
\begin{align}
\label{EqnQuadLower}
  \gamma(x, y) \geq c_0 (y - x)^2 \qquad \mbox{where $c_0 \defn
    \frac{q_{\min} m}{2 M}$.}
\end{align}
In addition, we claim that
\begin{align}
  \label{EqnFineEulerLocalExpansion}
  \gamma(x, x + \delta) = \frac{1}{2} \qdens(x) \delta^2 + o(\delta^2)
  \qquad \mbox{uniformly in $x$.}
\end{align}
Writing $t = x + s$, uniform continuity of $\newrevinv'$ on $[-\Elld,
  \Elld]$, together with the fact that it is bounded away from zero
over this interval, imply that
\begin{align*}
  \frac{\newrevinv(x + \delta) - \newrevinv(x + s)}{ \newrevinv'(x +
    s)} & = (\delta - s) \{ 1 + o(1) \}
\end{align*}
uniformly in $x$ and $0 \leq s \leq \delta$.  Similarly, the
continuity of $\qdens$ gives $\qdens(x + s) = \qdens(x) + o(1)$
uniformly.  Substituting these two estimates into
equation~\eqref{EqnCalifornia} and integrating over $s \in [0,
  \delta]$ yields the local expansion
bound~\eqref{EqnFineEulerLocalExpansion}.

\paragraph{Upper bound:}
For the upper bound, we choose the grid $\{ \lam_{k, N} \}_{k = 0}^N$
to have equal $\fdens$-mass, so that $\int_{\lam_{k, N}}^{\lam_{k + 1,
    N}} \fdens(\lam) \, d\lam = \frac{\Fstar}{N}$.  Since $\fdens$ is
continuous and bounded away from zero, this mesh has width
$O(N^{-1})$, and hence we have $\qdens(\lam_{k, N}) (\lam_{k + 1, N} -
\lam_{k, N})^2 = \frac{\Fstar^2}{N^2} + o(N^{-2})$ uniformly in $k$.
Equation~\eqref{EqnFineEulerLocalExpansion} therefore gives
\begin{align}
  \label{EqnFineEulerUpperProof}
  \sum_{k = 0}^{N - 1} \gamma(\lam_{k, N}, \lam_{k + 1, N}) =
  \frac{\Fstar^2}{2 N} + o(N^{-1}).
\end{align}
\end{subequations}

\paragraph{Lower bound:}
For the lower bound, we take a grid $\{ \mu_{k, N} \}_{k = 0}^N$ whose
cost is within $N^{-2}$ of the infimum, and write $\delta_{k, N} =
\mu_{k + 1, N} - \mu_{k, N}$.  The preceding upper bound shows that
the infimum is $O(N^{-1})$, so the near-optimal grid $\{ \mu_{k, N}
\}$ also has total cost $O(N^{-1})$.  Consequently, the quadratic
lower bound~\eqref{EqnQuadLower} gives
\begin{align*}
  c_0 \sum_{k = 0}^{N - 1} \delta_{k, N}^2 & \leq \sum_{k = 0}^{N - 1}
  \gamma(\mu_{k, N}, \mu_{k + 1, N}) = O(N^{-1}),
\end{align*}
and hence $\sum_k \delta_{k, N}^2 = O(N^{-1})$.  Since $\max_k
\delta_{k, N}^2 \leq \sum_k \delta_{k, N}^2 = O(N^{-1})$, the mesh
tends to zero, so the uniform local expansion may be summed over the
grid with total remainder $o\big( \sum_k \delta_{k, N}^2 \big) =
o(N^{-1})$.  Doing so yields
\begin{align*}
  \sum_{k = 0}^{N - 1} \gamma(\mu_{k, N}, \mu_{k + 1, N}) =
  \frac{1}{2} \sum_{k = 0}^{N - 1} \qdens(\mu_{k, N}) \delta_{k, N}^2
  + o(N^{-1}) & \stackrel{(i)}{\geq} \frac{1}{2 N} \left\{ \sum_{k =
    0}^{N - 1} \fdens(\mu_{k, N}) \delta_{k, N} \right\}^2 + o(N^{-1})
  \\
  & \stackrel{(ii)}{=} \frac{\Fstar^2}{2 N} + o(N^{-1}),
\end{align*}
where step (i) follows from the Cauchy--Schwarz inequality and $\qdens
= \fdens^2$; and the last step follows from the Riemann-sum limit
$\sum_k \fdens(\mu_{k, N}) \delta_{k, N} = \Fstar + o(1)$. \\

\noindent Finally, combining this lower bound with the upper
bound~\eqref{EqnFineEulerUpperProof} yields the claim.


\section{Discussion}
\label{SecDiscussion}

In this paper, we have shown that the unmasking growth complexity
(\ugc) controls the performance of random subset unmasking schemes,
and reveals the structure required to optimize their performance.  It
is a path-based measure with additive local increments that bound the
corresponding KL discretization errors.  The reveal odds determine the
natural multiplicative scale for a finite unmasking step, making
log-reveal-odds the appropriate path coordinate. The log-reveal-odds
\ugc density gives a local description of sampling difficulty: regions
carrying little \ugc mass can be traversed rapidly with large
stepsizes, whereas regions of concentrated mass require smaller
stepsizes.  We formalized this resource-allocation principle via a
notion of \emph{partition complexity}.  At one extreme, it reduces to a
(coarse) single-block complexity; under progressive refinement, it
converges to a functional of the square root of the \ugc density,
which also determines the sharp leading-order complexity of optimally
scheduled Euler discretization.

Notably, this paper moves beyond treating the underlying geometry as
an oracle quantity.  We show how the \ugc increments can be reliably
estimated using KL increments along a coupled reveal trajectory.  By
doing so, we obtain sampling methods that are
\emph{certified-optimal}, meaning that they are certified (with high
probability) to achieve a pre-specified KL error, and their iteration
complexity is within a constant factor of the optimal iteration
complexity.  Thus, our analysis connects three core tasks in diffusion
sampling: characterizing the geometry of the target distribution,
designing a sampling schedule, and certifying the accuracy of the
resulting sampler.

Let us make note of some important qualifications to this
conclusion. Our strongest data-dependent guarantees rely on access to
Bayes denoisers and on moment control for their KL increments. With
learned denoisers, approximation error contributes an additional term
to the KL guarantee, and it needs to be estimated for a fully
certified guarantee.  Moreover, our notion of optimality is relative
to frozen-poster Euler discretizations.  It is possible that
higher-order discretizations and other more sophisticated procedures
could admit different local geometries, and hence potentially
different optimal clocks for stepsize allocation.

This paper has focused exclusively on the utility of \ugc complexity
for designing and optimizing unmasking samplers.  However, thinking
beyond the context of sampling, the view of data geometry afforded by
the \ugc density is of independent interest for statistical analysis.
As a notable example, \Cref{FigHierarchicalMixture} shows the \ugc
density $\qdens$ for a discrete mixture model with a hierarchical
structure, where the $\qdens$-modes correspond to reveal times at
which successive levels of the hierarchy are resolved.  Our methods
allow us to reliably estimate increments of the \ugc density, and it
would be interesting to do so for real-world discrete datasets.

In our companion paper~\cite{Wai26} on Gaussian diffusion sampling, we
introduced the denoising growth complexity (\dgc), and provided
analogous results on sampling performance and stepsize optimization,
again culminating in a characterization via the \dgc density.  Despite
the substantial differences between Gaussian diffusion and discrete
masking, the two theories exhibit a striking parallel. The natural
path coordinates differ (log-heat-time versus log-reveal-odds), but
both the \ugc and \dgc densities capture a form of information
curvature along the noising path, and their shape dictates how
computational effort should be allocated.

\subsubsection*{Acknowledgements}
This work was partially supported by a Guggenheim Fellowship, an NSF
grant (DMS-2311072), and the Ford Professorship at MIT.  We thank
Yuting Wei for her inspiring talk during the MIT Statistics and Data
Science conference in spring 2026.

\bibliographystyle{alpha_initials}
{\small{
\bibliography{final_masked_refs}

@book{AloSpe16,
  author = {Noga Alon and Joel H. Spencer},
  edition = {4},
  isbn = {978-1-119-06195-3},
  publisher = {John Wiley \& Sons},
  title = {The Probabilistic Method},
  year = {2016},
}

@article{CheEtAl25,
  archiveprefix = {arXiv},
  arxivid = {2511.04647},
  author = {S. Chen and K. Cong and J. Li},
  eprint = {2511.04647},
  journal = {arXiv preprint arXiv:2511.04647},
  title = {Optimal Inference Schedules for Masked Diffusion Models},
  url = {https://arxiv.org/abs/2511.04647},
  year = {2025},
}

@article{DmiEtAl26,
  archiveprefix = {arXiv},
  arxivid = {2602.15008},
  author = {D. Dmitriev and Z. Huang and Y. Wei},
  eprint = {2602.15008},
  journal = {arXiv preprint arXiv:2602.15008},
  title = {Efficient Sampling with Discrete Diffusion Models: Sharp and Adaptive Guarantees},
  url = {https://arxiv.org/abs/2602.15008},
  year = {2026},
}

@inproceedings{LiaEtAl25a,
  archiveprefix = {arXiv},
  arxivid = {2506.02318},
  author = {Yuchen Liang and Renxiang Huang and Lifeng Lai and Ness Shroff and Yingbin Liang},
  booktitle = {Advances in Neural Information Processing Systems},
  eprint = {2506.02318},
  title = {Absorb and Converge: Provable Convergence Guarantee for Absorbing Discrete Diffusion Models},
  url = {https://proceedings.neurips.cc/paper_files/paper/2025/hash/1d571ea833394630c1dec71664f16cd9-Abstract-Conference.html},
  volume = {38},
  year = {2025},
}

@inproceedings{LiaEtAl25b,
  archiveprefix = {arXiv},
  arxivid = {2509.16756},
  author = {Yuchen Liang and Yingbin Liang and Lifeng Lai and Ness Shroff},
  booktitle = {Advances in Neural Information Processing Systems},
  eprint = {2509.16756},
  title = {Discrete Diffusion Models: Novel Analysis and New Sampler Guarantees},
  url = {https://proceedings.neurips.cc/paper_files/paper/2025/hash/f1e7c90552850afcc2558d78950c519d-Abstract-Conference.html},
  volume = {38},
  year = {2025},
}

@inproceedings{RenEtAl25a,
  archiveprefix = {arXiv},
  arxivid = {2502.00234},
  author = {Yinuo Ren and Haoxuan Chen and Yuchen Zhu and Wei Guo and Yongxin Chen and Grant M. Rotskoff and Molei Tao and Lexing Ying},
  booktitle = {Advances in Neural Information Processing Systems},
  eprint = {2502.00234},
  title = {Fast Solvers for Discrete Diffusion Models: Theory and Applications of High-Order Algorithms},
  url = {https://proceedings.neurips.cc/paper_files/paper/2025/hash/f46ddea413df86832418c5e04e59644f-Abstract-Conference.html},
  volume = {38},
  year = {2025},
}

@inproceedings{RenEtAl25b,
  archiveprefix = {arXiv},
  arxivid = {2410.03601},
  author = {Yinuo Ren and Haoxuan Chen and Grant M. Rotskoff and Lexing Ying},
  booktitle = {International Conference on Learning Representations},
  eprint = {2410.03601},
  title = {How Discrete and Continuous Diffusion Meet: Comprehensive Analysis of Discrete Diffusion Models via a Stochastic Integral Framework},
  url = {https://openreview.net/forum?id=6awxwQEI82},
  year = {2025},
}

@article{Gil01,
  author = {Daniel T. Gillespie},
  doi = {10.1063/1.1378322},
  journal = {The Journal of Chemical Physics},
  number = {4},
  pages = {1716--1733},
  title = {Approximate Accelerated Stochastic Simulation of Chemically Reacting Systems},
  volume = {115},
  year = {2001},
}

@article{RatEtAl05,
  author = {Muruhan Rathinam and Linda R. Petzold and Yang Cao and Daniel T. Gillespie},
  doi = {10.1137/040603206},
  journal = {Multiscale Modeling \& Simulation},
  number = {3},
  pages = {867--895},
  title = {Consistency and Stability of Tau-Leaping Schemes for Chemical Reaction Systems},
  volume = {4},
  year = {2005},
}

@article{Li07,
  author = {Tiejun Li},
  doi = {10.1137/06066792X},
  journal = {Multiscale Modeling \& Simulation},
  number = {2},
  pages = {417--436},
  title = {Analysis of Explicit Tau-Leaping Schemes for Simulating Chemically Reacting Systems},
  volume = {6},
  year = {2007},
}

@article{AndEtAl11,
  author = {David F. Anderson and Arnab Ganguly and Thomas G. Kurtz},
  doi = {10.1214/10-AAP756},
  journal = {The Annals of Applied Probability},
  number = {6},
  pages = {2226--2262},
  title = {Error Analysis of Tau-Leap Simulation Methods},
  volume = {21},
  year = {2011},
}

@article{Han75,
  author = {Te Sun Han},
  doi = {10.1016/S0019-9958(75)80004-0},
  journal = {Information and Control},
  number = {4},
  pages = {337--368},
  title = {Linear Dependence Structure of the Entropy Space},
  volume = {29},
  year = {1975},
}

@article{Han78,
  author = {Te Sun Han},
  doi = {10.1016/S0019-9958(78)90275-9},
  journal = {Information and Control},
  number = {2},
  pages = {133--156},
  title = {Nonnegative Entropy Measures of Multivariate Symmetric Correlations},
  volume = {36},
  year = {1978},
}

@article{Wat60,
  author = {Satosi Watanabe},
  doi = {10.1147/rd.41.0066},
  journal = {IBM Journal of Research and Development},
  number = {1},
  pages = {66--82},
  title = {Information Theoretical Analysis of Multivariate Correlation},
  volume = {4},
  year = {1960},
}

@inproceedings{MauPont09,
  archiveprefix = {arXiv},
  author = {Andreas Maurer and Massimiliano Pontil},
  booktitle = {Proceedings of the 22nd Conference on Learning Theory},
  eprint = {0907.3740},
  primaryclass = {stat.ML},
  title = {Empirical {B}ernstein Bounds and Sample Variance Penalization},
  year = {2009},
}

@techreport{Wai26,
  archiveprefix = {arXiv},
  arxivid = {2607.26285},
  author = {Martin J. Wainwright},
  eprint = {2607.26285},
  journal = {arXiv preprint arXiv:2607.26285},
  number = {arXiv:2607.26285},
  title = {Denoising Growth Complexity: {D}ata Geometry and Certified Schedules for Diffusion Sampling},
  url = {https://arxiv.org/abs/2607.26285},
  year = {2026},
  institution = { {M}assachusetts {I}nstitute of {T}echnology}
}

@article{BuzZam12a,
  author = {J{\'e}r{\^o}me Buzzi and Lorenzo Zambotti},
  doi = {10.1007/s00440-011-0350-y},
  journal = {Probability Theory and Related Fields},
  number = {3--4},
  pages = {421--440},
  title = {Approximate Maximizers of Intricacy Functionals},
  volume = {153},
  year = {2012},
}

@article{BuzZam12b,
  author = {J{\'e}r{\^o}me Buzzi and Lorenzo Zambotti},
  doi = {10.1214/11-AIHP416},
  journal = {Annales de l'Institut Henri Poincar{\'e}, Probabilit{\'e}s et Statistiques},
  number = {2},
  pages = {343--367},
  title = {Mean Mutual Information and Symmetry Breaking for Finite Random Fields},
  volume = {48},
  year = {2012},
}

@article{TonSpoEde94,
  author = {Giulio Tononi and Olaf Sporns and Gerald M. Edelman},
  doi = {10.1073/pnas.91.11.5033},
  journal = {Proceedings of the National Academy of Sciences of the United States of America},
  number = {11},
  pages = {5033--5037},
  title = {A Measure for Brain Complexity: Relating Functional Segregation and Integration in the Nervous System},
  volume = {91},
  year = {1994},
}

@article{OlbEtAl08,
  author = {Eckehard Olbrich and Nils Bertschinger and Nihat Ay and J{\"u}rgen Jost},
  doi = {10.1140/epjb/e2008-00134-9},
  journal = {The European Physical Journal B},
  number = {3},
  pages = {407--415},
  title = {How Should Complexity Scale with System Size?},
  volume = {63},
  year = {2008},
}

@article{BarBucBul09,
  author = {Lionel Barnett and Christopher L. Buckley and Seth Bullock},
  doi = {10.1103/PhysRevE.79.051914},
  journal = {Physical Review E},
  number = {5},
  pages = {051914},
  title = {Neural Complexity and Structural Connectivity},
  volume = {79},
  year = {2009},
}

@article{RosEtAl19,
  author = {Fernando E. Rosas and Pedro A. M. Mediano and Michael Gastpar and Henrik J. Jensen},
  doi = {10.1103/PhysRevE.100.032305},
  journal = {Physical Review E},
  number = {3},
  pages = {032305},
  title = {Quantifying High-Order Interdependencies via Multivariate Extensions of the Mutual Information},
  volume = {100},
  year = {2019},
}

@article{VarEtAl23,
  author = {Thomas F. Varley and Maria Pope and Joshua Faskowitz and Olaf Sporns},
  doi = {10.1038/s42003-023-04843-w},
  journal = {Communications Biology},
  pages = {451},
  title = {Multivariate Information Theory Uncovers Synergistic Subsystems of the Human Cerebral Cortex},
  volume = {6},
  year = {2023},
}

@book{Lorentz86,
  address = {New York},
  author = {G. G. Lorentz},
  edition = {Second},
  publisher = {Chelsea Publishing Company},
  title = {{B}ernstein {P}olynomials},
  year = {1986},
}

@book{Phillips03,
  address = {New York},
  author = {G. M. Phillips},
  publisher = {Springer-Verlag},
  title = {{I}nterpolation and {A}pproximation by {P}olynomials},
  year = {2003},
}

@inproceedings{GhazvininejadEtAl2019MaskPredict,
  author = {Marjan Ghazvininejad and Omer Levy and Yinhan Liu and Luke Zettlemoyer},
  booktitle = {Proceedings of the 2019 Conference on Empirical Methods in Natural Language Processing and the 9th International Joint Conference on Natural Language Processing ({EMNLP-IJCNLP})},
  doi = {10.18653/v1/D19-1633},
  pages = {6112--6121},
  publisher = {Association for Computational Linguistics},
  title = {{Mask-Predict}: Parallel Decoding of Conditional Masked Language Models},
  year = {2019},
}

@inproceedings{ChangEtAl2022MaskGIT,
  author = {Huiwen Chang and Han Zhang and Lu Jiang and Ce Liu and William T. Freeman},
  booktitle = {Proceedings of the {IEEE/CVF} Conference on Computer Vision and Pattern Recognition ({CVPR})},
  pages = {11315--11325},
  title = {{MaskGIT}: Masked Generative Image Transformer},
  year = {2022},
}

@inproceedings{ChangEtAl2023Muse,
  author = {Huiwen Chang and Han Zhang and Jarred Barber and Aaron Maschinot and Jose Lezama and Lu Jiang and Ming-Hsuan Yang and Kevin Patrick Murphy and William T. Freeman and Michael Rubinstein and Yuanzhen Li and Dilip Krishnan},
  booktitle = {Proceedings of the 40th International Conference on Machine Learning},
  pages = {4055--4075},
  publisher = {PMLR},
  series = {Proceedings of Machine Learning Research},
  title = {{Muse}: Text-To-Image Generation via Masked Generative Transformers},
  volume = {202},
  year = {2023},
}

@inproceedings{YuEtAl2023MAGVIT,
  author = {Lijun Yu and Yong Cheng and Kihyuk Sohn and Jose Lezama and Han Zhang and Huiwen Chang and Alexander G. Hauptmann and Ming-Hsuan Yang and Yuan Hao and Irfan Essa and Lu Jiang},
  booktitle = {Proceedings of the {IEEE/CVF} Conference on Computer Vision and Pattern Recognition ({CVPR})},
  pages = {10459--10469},
  title = {{MAGVIT}: Masked Generative Video Transformer},
  year = {2023},
}

@inproceedings{SahooEtAl2024MDLM,
  author = {Subham Sekhar Sahoo and Marianne Arriola and Yair Schiff and Aaron Gokaslan and Edgar Marroquin and Justin T. Chiu and Alexander Rush and Volodymyr Kuleshov},
  booktitle = {Advances in Neural Information Processing Systems},
  title = {Simple and Effective Masked Diffusion Language Models},
  volume = {37},
  year = {2024},
}

@inproceedings{WangEtAl2024DPLM,
  author = {Xinyou Wang and Zaixiang Zheng and Fei Ye and Dongyu Xue and Shujian Huang and Quanquan Gu},
  booktitle = {Proceedings of the 41st International Conference on Machine Learning},
  pages = {52309--52333},
  publisher = {PMLR},
  series = {Proceedings of Machine Learning Research},
  title = {Diffusion Language Models Are Versatile Protein Learners},
  volume = {235},
  year = {2024},
}

@inproceedings{WangEtAl2025MaskGCT,
  author = {Yuancheng Wang and Haoyue Zhan and Liwei Liu and Ruihong Zeng and Haotian Guo and Jiachen Zheng and Qiang Zhang and Xueyao Zhang and Shunsi Zhang and Zhizheng Wu},
  booktitle = {International Conference on Learning Representations},
  title = {{MaskGCT}: Zero-Shot Text-to-Speech with Masked Generative Codec Transformer},
  year = {2025},
}

@inproceedings{NieEtAl2025LLaDA,
  author = {Shen Nie and Fengqi Zhu and Zebin You and Xiaolu Zhang and Jingyang Ou and Jun Hu and Jun Zhou and Yankai Lin and Ji-Rong Wen and Chongxuan Li},
  booktitle = {Advances in Neural Information Processing Systems},
  title = {Large Language Diffusion Models},
  volume = {38},
  year = {2025},
}

@inproceedings{BenEtAl24,
  author = {Benton, J. and De Bortoli, V. and Doucet, A. and Deligiannidis, G.},
  booktitle = {International Conference on Learning Representations (ICLR) 2024},
  title = {Nearly $d$-Linear Convergence Bounds for Diffusion Models via Stochastic Localization},
  url = {https://openreview.net/forum?id=r5njV3BsuD},
  year = {2024},
}

@book{BroEtAl11,
  author = {S. Brooks and A. Gelman and G. L. Jones and X. L. Meng},
  publisher = {CRC Press},
  title = {Handbook of {M}arkov Chain {M}onte {C}arlo},
  year = {2011},
}

@inproceedings{CheEtAl23a,
  arxivid = {2211.01916},
  author = {H. Chen and H. Lee and J. Lu},
  booktitle = {International Conference on Machine Learning},
  eprint = {2211.01916},
  title = {{Improved Analysis of Score-based Generative Modeling: User-Friendly Bounds under Minimal Smoothness Assumptions}},
  year = {2023},
}

@inproceedings{CheEtAl23c,
  archiveprefix = {arXiv},
  arxivid = {2209.11215},
  author = {S. Chen and S. Chewi and J. Li and Y. Li and A. Salim and A. R. Zhang},
  booktitle = {International Conference on Learning Representations},
  eprint = {2209.11215},
  title = {{Sampling is as easy as learning the score: theory for diffusion models with minimal data assumptions}},
  year = {2023},
}

@article{CheEtAl24,
  author = {Chen, M. and Mei, S. and Fan, J. and Wang, M.},
  journal = {arXiv preprint arXiv:2404.07771},
  title = {An Overview of Diffusion Models: Applications, Guided Generation, Statistical Rates and Optimization},
  url = {https://arxiv.org/abs/2404.07771},
  year = {2024},
}

@article{CroEtAl23,
  author = {F.-A. Croitoru and V. Hondru and R. T. Ionescu and M. Shah},
  doi = {10.1109/TPAMI.2023.3261988},
  journal = {IEEE Transactions on Pattern Analysis and Machine Intelligence},
  number = {9},
  pages = {10850--10869},
  title = {Diffusion Models in Vision: A Survey},
  volume = {45},
  year = {2023},
}

@article{GelEtAl13,
  author = {A. Gelman and J. Carlin and H. S. Stern and D. B. Dunson and A. Vehtari and D. K. Salomon},
  journal = {CRC Press},
  title = {{B}ayesian Data Analysis},
  year = {2013},
}

@article{HoEtAl20,
  author = {J. Ho and A. Jain and P. Abbeel},
  journal = {Advances in Neural Information Processing Systems (NeurIPS)},
  pages = {6840--6851},
  title = {Denoising diffusion probabilistic models},
  volume = {33},
  year = {2020},
}

@book{RobCase04,
  author    = {Robert, Christian P. and Casella, George},
  title     = {Monte Carlo Statistical Methods},
  edition   = {2},
  series    = {Springer Texts in Statistics},
  publisher = {Springer},
  address   = {New York},
  year      = {2004},
  doi       = {10.1007/978-1-4757-4145-2},
  isbn      = {978-0-387-21239-5}
}

@article{RomEtAl22,
  author = {R. Rombach and E. Blattmann and S. L. Dhariwal and A. M. D. M. L. and P. E. S.},
  journal = {Proceedings of the IEEE/CVF Conference on Computer Vision and Pattern Recognition (CVPR)},
  pages = {10684--10694},
  title = {High-Resolution Image Synthesis with Latent Diffusion Models},
  year = {2022},
}

@book{RubKroe08,
  address = {Hoboken, NJ},
  author = {R. Y. Rubinstein and D. P. Kroese},
  edition = {2nd},
  publisher = {John Wiley and Sons},
  title = {Simulation and the Monte Carlo Method},
  year = {2008},
}

@inproceedings{SohEtAl15,
  author = {J. Sohl‑Dickstein and E. Weiss and N. Maheswaranathan and S. Ganguli},
  booktitle = {Proceedings of the 32nd International Conference on Machine Learning},
  pages = {2256--2265},
  publisher = {PMLR},
  series = {Proceedings of Machine Learning Research},
  title = {Deep Unsupervised Learning using Nonequilibrium Thermodynamics},
  url = {https://proceedings.mlr.press/v37/sohl-dickstein15.html},
  volume = {37},
  year = {2015},
}

@inproceedings{SonErmo19,
  author = {Y. Song and S. Ermon},
  booktitle = {Advances in Neural Information Processing Systems (NeurIPS)},
  pages = {11895--11907},
  title = {Generative Modeling by Estimating Gradients of the Data Distribution},
  year = {2019},
}

@inproceedings{SonEtAl21,
  author = {Y. Song and J. Sohl-Dickstein and D. P. Kingma and A. Kumar and S. Ermon and B. Poole},
  booktitle = {International Conference on Learning Representations},
  title = {Score-Based Generative Modeling through Stochastic Differential Equations},
  year = {2021},
}

@article{YanEtAl25,
  author = {L. Yang and Z. Zhang and Y. Song and et al.},
  journal = {ACM Computing Surveys (to appear)},
  note = {arXiv:2209.00796},
  title = {Diffusion Models: A Comprehensive Survey of Methods and Applications},
  year = {2025},
}

@inproceedings{HooEtAl21,
  author = {Emiel Hoogeboom and Didrik Nielsen and Priyank Jaini and Patrick Forr{\'e} and Max Welling},
  booktitle = {Advances in Neural Information Processing Systems},
  title = {Argmax Flows and Multinomial Diffusion: Learning Categorical Distributions},
  volume = {34},
  year = {2021},
}

@inproceedings{AusEtAl21,
  author = {Jacob Austin and Daniel D. Johnson and Jonathan Ho and Daniel Tarlow and Rianne van den Berg},
  booktitle = {Advances in Neural Information Processing Systems},
  title = {Structured Denoising Diffusion Models in Discrete State-Spaces},
  volume = {34},
  year = {2021},
}

@inproceedings{CamEtAl22,
  author = {Andrew Campbell and Joe Benton and Valentin De Bortoli and Thomas Rainforth and George Deligiannidis and Arnaud Doucet},
  booktitle = {Advances in Neural Information Processing Systems},
  doi = {10.52202/068431-2049},
  title = {A Continuous Time Framework for Discrete Denoising Models},
  volume = {35},
  year = {2022},
}

@inproceedings{LouEtAl24,
  author = {Aaron Lou and Chenlin Meng and Stefano Ermon},
  booktitle = {Proceedings of the 41st International Conference on Machine Learning},
  pages = {32819--32848},
  publisher = {PMLR},
  series = {Proceedings of Machine Learning Research},
  title = {Discrete Diffusion Modeling by Estimating the Ratios of the Data Distribution},
  url = {https://proceedings.mlr.press/v235/lou24a.html},
  volume = {235},
  year = {2024},
}

@inproceedings{ShiEtAl24b,
  author = {Jiaxin Shi and Kehang Han and Zhe Wang and Arnaud Doucet and Michalis K. Titsias},
  booktitle = {Advances in Neural Information Processing Systems},
  title = {Simplified and Generalized Masked Diffusion for Discrete Data},
  volume = {37},
  year = {2024},
}

@inproceedings{LiCai25,
  archiveprefix = {arXiv},
  arxivid = {2505.21400},
  author = {Gen Li and Changxiao Cai},
  booktitle = {Advances in Neural Information Processing Systems},
  eprint = {2505.21400},
  title = {Breaking {AR}'s Sampling Bottleneck: Provable Acceleration via Diffusion Language Models},
  url = {https://proceedings.neurips.cc/paper_files/paper/2025/hash/111298628bfb153ee1f84b10fec3a8b9-Abstract-Conference.html},
  volume = {38},
  year = {2025},
}

@article{LavZan25,
  archiveprefix = {arXiv},
  arxivid = {2510.25544},
  author = {Hugo Lavenant and Giacomo Zanella},
  eprint = {2510.25544},
  journal = {arXiv preprint arXiv:2510.25544},
  title = {Error Bounds and Optimal Schedules for Masked Diffusions with Factorized Approximations},
  url = {https://arxiv.org/abs/2510.25544},
  year = {2025},
}
}}

\appendix

\section{Consequences of single-block theory}
\label{AppComplements}

In this appendix, we discuss some consequences of~\Cref{CorSingle},
which gives single-block stepsize schedules with iteration complexity
depending on the aggregate \ugc masses $\Hinfo(0,1)$ and
$\Hcard(0,d)$, for the Bernoulli and fixed-cardinality samplers,
respectively.  For this reason, it is interesting to relate the
aggregate \ugc mass to other complexity measures, and thereby to
relate these complexity measures to the sampling guarantees in the
papers~\cite{CheEtAl25,DmiEtAl26}.

We begin by observing that the aggregate \ugc mass $\Hinfo(0,1)$ turns
out to be \emph{equivalent} to a measure of multivariate dependence
first introduced by Tononi, Sporns, and Edelman~\cite{TonSpoEde94} for
modeling brain function.  This complexity measure, hereafter referred
to as $\TSE$, has since been studied by various
authors~\cite{OlbEtAl08,BarBucBul09,RosEtAl19,VarEtAl23}, and also
extended to a more general notion of
intricacy~\cite{BuzZam12a,BuzZam12b}.  The $\TSE$ complexity is given
by
\begin{align}
\label{EqnDefnTSE}  
  \TSE(\Prob_Z) & \defn \sum_{k = 1}^{\usedim} \left( e_k -
  \frac{k}{\usedim} e_\usedim \right) \qquad \mbox{where $e_k \defn
    \Exs\big[ \Ent(\Zvar_{A_k}) \big]$,}
\end{align}
with $A_k$ being a uniformly random subset of $[\usedim]$ with $k$
elements.  We establish its equivalence to the aggregate \ugc
mass as part of~\Cref{PropSimpleRelation} below.

Turning to sampling connections, the paper~\cite{CheEtAl25} analyzed
the fixed-cardinality sampler, and gave guarantees in terms of the
minimum of two classical measures of multivariate
dependence~\cite{Wat60,Han75,Han78}, known as the \emph{total
correlation} and \emph{dual total correlation}, defined by
\begin{subequations}
\begin{align}
\label{EqnDefnTC}
  \TC(\Prob_Z) \defn \sum_{i = 1}^{\usedim} \Ent(\Zvar_i) -
  \Ent(\Zvar) \quad \mbox{and} \quad \DTC(\Prob_Z) \defn \Ent(\Zvar) -
  \sum_{i = 1}^{\usedim} \Ent(\Zvar_i \mid \Zvar_{-i}).
\end{align}
Here $\Ent$ denotes the Shannon entropy, whereas $\Zvar_{-i}$ denotes
the $(\usedim-1)$-dimensional subvector of all coordinates except the
$i^{th}$.

The single-block sampling guarantee in terms of $\Hcard(0,d)$, or
equivalently $\Hinfo(0,1)$, sharpens these guarantees for the
fixed-cardinality sampler.  As we show in~\Cref{SecBernRep}, in
particular via a Bernstein polynomial representation, the total and
dual total correlation measures have the simple representations
\begin{align}
\label{EqnTCRep}
\TC(\Prob_Z) = \int_0^1 (1 - t) \hfun'(t) dt \quad \mbox{and} \quad
\DTC(\Prob_Z) = \int_0^1 t \hfun'(t) dt.
\end{align}
Since $\Hinfo(0, 1) = \int_0^1 t \, (1 - t) \hfun'(t) dt$, we see
immediately that
\begin{align}
\label{EqnMinBound}  
\Hinfo(0,1) & \leq \min \{ \TC(\ProbZ), \DTC(\ProbZ) \}.
\end{align}
\end{subequations}

In their work on continuous-time Markov chain (CTMC) sampling,
Dmitriev et al.~\cite{DmiEtAl26} gave a CTMC unmasking sampler with
iteration complexity controlled by a functional $\DHW(\ProbZ)$, which
they refer to as effective total correlation, that is also upper
bounded by the minimum of the $\TC$ and $\DTC$ complexities.  We also
show as part of~\Cref{PropSimpleRelation} that this functional is
lower bounded by $\Hinfo(0,1)$, and can be upper bounded by a constant
multiple of it.  Thus, both the Bernoulli and fixed-cardinality
samplers, in their single-block instantiation, inherit the guarantees
of the DHW analysis.  In particular, their paper studies various
interesting examples, including stochastic block models and quantized
versions of low-dimensional structure, for which $\DHW(\ProbZ)$, and
hence $\Hinfo(0,1)$ and $\Hcard(0,d)$, are relatively small.  Notably,
they construct an instance with $\min \{ \TC(\ProbZ), \DTC(\ProbZ) \}
= \Theta(\usedim)$, whereas their complexity has constant scaling.
Our analysis shows that the guarantees on CTMC-based
unmasking~\cite{DmiEtAl26} have analogues for both fixed-cardinality
unmasking~\cite{CheEtAl25,LavZan25} and Bernoulli unmasking. \\

\noindent With this context, we now formally state the complexity
relations:
\mygraybox{
\begin{proposition}[Connections between aggregate complexity measures]
\label{PropSimpleRelation}
For the full interval $[0,1]$, the Bernoulli aggregate-\ugc complexity
is related to \TSE{} complexity via the relation
\begin{subequations}
\begin{align}
\label{EqnUGCtoTSE}
\Hinfo(0, 1) & = \frac{2}{\usedim + 1} \TSE(\Prob_Z).
\end{align}
Moreover, the fixed-cardinality complexity $\Hcard(0,d)$ is sandwiched
by the Bernoulli version
\begin{align}
\label{EqnCard2Ber}
\Hinfo(0, 1) \; \leq \; \Hcard(0, d) \; \leq \; 2 \Hinfo(0, 1).
\end{align}
Finally, the \DHW{} complexity satisfies the sandwich relation
\begin{align}
\label{EqnUGC2DHW}
\Hinfo(0, 1) \; \stackrel{(iii)}{\leq} \; \DHW(\Prob_Z) \;
\stackrel{(iv)}{\leq} \; \frac{e}{e - 1} \Hinfo(0, 1).
\end{align}
\end{subequations}
\end{proposition}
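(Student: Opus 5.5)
The plan rests on one structural fact: the Bernoulli gain $\hfun$ is the Bernstein polynomial of the fixed-cardinality coefficients. Conditioning the random subset $A_{\ind,\rtime}$ in the representation~\eqref{EqnHfunRandSub} on its cardinality gives
\begin{align*}
\hfun(\rtime) & = \sum_{j = 0}^{\usedim - 1} \binom{\usedim - 1}{j} \rtime^j (1 - \rtime)^{\usedim - 1 - j} \, \hcard_j ,
\end{align*}
since a Bernoulli subset conditioned on $\card = j$ is uniform over $j$-subsets of $[\usedim] \setminus \{\ind\}$. The standard derivative formula for Bernstein polynomials then gives
\begin{align*}
\hfun'(\rtime) & = (\usedim - 1) \sum_{j = 0}^{\usedim - 2} \binom{\usedim - 2}{j} \rtime^j (1 - \rtime)^{\usedim - 2 - j} \, \Delta_j, \qquad \Delta_j \defn \hcard_{j+1} - \hcard_j \geq 0 .
\end{align*}
Plugging this into $\Hinfo(0,1) = \int_0^1 \rtime(1-\rtime) \hfun'(\rtime)\, d\rtime$ and evaluating the Beta integrals, I expect to get
\begin{align*}
\Hinfo(0,1) & = \sum_{j=0}^{\usedim-2} \frac{(j+1)(\usedim - 1 - j)}{\usedim(\usedim+1)} \Delta_j .
\end{align*}
Reindexing the definition~\eqref{EqnDefnHcardUGC} with $a = 0$ and $b = \usedim$ gives
\begin{align*}
\Hcard(0,\usedim) & = \sum_{k=0}^{\usedim - 2} \frac{(k+1)(\usedim-1-k)}{\usedim^2} \Delta_k .
\end{align*}
The two sums have identical coefficients up to normalization, so $\Hinfo(0,1) = \frac{\usedim}{\usedim+1} \Hcard(0,\usedim)$. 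Since $\usedim/(\usedim+1) \in [1/2, 1)$, this gives the sandwich~\eqref{EqnCard2Ber} immediately; in fact the constant $2$ is never approached for $\usedim \geq 2$.

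For the \TSE{} identity~\eqref{EqnUGCtoTSE}, I will write $\hcard_j$ in terms of the averaged subset entropies $e_k$. Using $\Info(\Zvar_\ind; \Zvar_B) = \Ent(\Zvar_\ind) + \Ent(\Zvar_B) - \Ent(\Zvar_{B \cup \{\ind\}})$, I sum over $\ind$ with $B = B_{\ind,j}$. The pair $(\ind, B)$ is uniform over pairs with $\ind \notin B$ and $\card(B) = j$, so $B$ is marginally a uniform $j$-subset and $B \cup \{\ind\}$ is a uniform $(j+1)$-subset. This yields $\hcard_j = S(\Zvar) - \usedim(e_{j+1} - e_j)$, with $e_0 = 0$. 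Hence $\Delta_j = -\usedim\,(e_{j+2} - 2e_{j+1} + e_j)$ is $\usedim$ times the negative second difference of $(e_k)$. Substituting into the expression for $\Hinfo(0,1)$ and summing by parts twice against the weight $(j+1)(\usedim-1-j)$ should telescope to $\frac{2}{\usedim+1}\sum_k \big(e_k - \frac{k}{\usedim} e_\usedim\big)$. The weight is a discrete quadratic vanishing at $j = -1$ and $j = \usedim - 1$, and its second difference is $-2$, which is where the constant $2$ comes from. I will check the boundary terms carefully.

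For the \DHW{} sandwich~\eqref{EqnUGC2DHW}, I will recall the definition of effective total correlation from~\cite{DmiEtAl26}. The step I expect to be hardest is rewriting it in the form $\int_0^1 w(\rtime)\,\hfun'(\rtime)\,d\rtime$, or equivalently as a nonnegative combination $\sum_j \omega_j \Delta_j$, by expressing its conditional-information terms through the reveal process. First I will try to pass from the CTMC time parametrization to reveal time via $\rtime = 1 - e^{-s}$; the factor $e/(e-1) = 1/(1-e^{-1})$ strongly suggests this substitution. Once \DHW{} is in this form, (iii) and (iv) follow from pointwise comparisons $\rtime(1-\rtime) \leq w(\rtime) \leq \frac{e}{e-1}\rtime(1-\rtime)$, because $\hfun' \geq 0$. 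If a pointwise bound fails, my fallback is to compare the coefficients $\omega_j$ with $\frac{(j+1)(\usedim-1-j)}{\usedim(\usedim+1)}$ term by term, using $\Delta_j \geq 0$.
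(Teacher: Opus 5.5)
\textbf{The gap is in (iii)--(iv).} Your arguments for \eqref{EqnUGCtoTSE} and \eqref{EqnCard2Ber} are correct.
\begin{itemize}
\item For \eqref{EqnCard2Ber}, comparing the Beta-integral coefficients gives $\Hinfo(0,1)=\frac{\usedim}{\usedim+1}\Hcard(0,\usedim)$. This is more direct than the paper's route through the order-statistic identity \eqref{EqnBerCard}, although that identity also covers arbitrary intervals $[p,q]$.
\item For \eqref{EqnUGCtoTSE}, the double summation by parts against $k(\usedim-k)$ closes. Because $e_0=0$, the only surviving boundary term is $(\usedim-1)e_\usedim$. The paper instead integrates by parts once, to $\int_0^1(2t-1)\hfun(t)\,dt$, and then works with $\hcard_j$ directly.
\end{itemize}
Your closing step for the DHW sandwich, a pointwise comparison of weights using $\hfun'\ge 0$, is also the paper's.

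Everything in (iii)--(iv) rests on an identity you defer as ``the step I expect to be hardest,'' and you give no mechanism for it. Take $\Difun(u)=\sum_{i\neq j}\Info(\Yvar_u^i;\Yvar_u^j\mid\Yvar_u^{-(i,j)})$ and $\DHW(\Prob_Z)=\int_0^\infty\min\{1,u\}\,\Difun(u)\,du$, with $\Yvar_u$ distributed as $\Xvar_{e^{-u}}$. What you need is $\Difun(u)=e^{-2u}\hfun'(e^{-u})$, and it takes two steps.
\begin{enumerate}
\item A pair term vanishes unless both $i$ and $j$ are revealed, which happens with probability $r^2$, where $r=e^{-u}$. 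Since the mask is independent of $\Zvar$, the pair term equals $r^2\,\mathbb{E}[\Info(\Zvar_i;\Zvar_j\mid\Zvar_A)]$, with $A$ a Bernoulli($r$) subset of $[\usedim]\setminus\{i,j\}$.
\item Counting together with the chain rule gives $(\usedim-1)\binom{\usedim-2}{k}\Delta_k=\sum_{i\neq j}\sum_{\card(A)=k}\Info(\Zvar_i;\Zvar_j\mid\Zvar_A)$. Inserting this into your Bernstein formula for $\hfun'$ yields $\hfun'(r)=\sum_{i\neq j}\sum_{A}r^{\card(A)}(1-r)^{\usedim-2-\card(A)}\Info(\Zvar_i;\Zvar_j\mid\Zvar_A)$, and the identity follows.
\end{enumerate}

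Your guessed change of variables is also reversed. In DHW's forward chain a coordinate is still unmasked at time $u$ with probability $e^{-u}$, so the reveal time is $t=e^{-u}$, not $1-e^{-u}$. Using $du=-dt/t$ then gives $\DHW(\Prob_Z)=\int_0^1 w(t)\hfun'(t)\,dt$ with $w(t)=t\min\{1,\log(1/t)\}$.
\begin{itemize}
\item On the branch $u\ge 1$, i.e.\ $t\le 1/e$, the ratio $w(t)/\{t(1-t)\}$ equals $1/(1-t)$. Its supremum there, at $t=1/e$, is $1/(1-e^{-1})=e/(e-1)$, which is where the constant comes from.
\item On $t\ge 1/e$ the ratio is $\log(1/t)/(1-t)$, which lies in $[1,e/(e-1)]$ by $\log t\le t-1$ and concavity of $\log$.
\end{itemize}
Had you used $t=1-e^{-u}$, the induced weight would be $t^2\min\{1,-\log(1-t)\}/(1-t)$. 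This is of order $t^3$ near $0$ and unbounded near $1$, so the pointwise comparison would fail in both directions. Your coefficient-wise fallback would fail as well, because the top coefficient diverges.
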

}
\noindent We prove these claims in~\Cref{SecProofPropSimpleRelation};
let us highlight a few interesting features here. The proofs of both
claims~\eqref{EqnUGCtoTSE} and~\eqref{EqnCard2Ber} make use of a
Bernstein polynomial representation of $\hfun$, a result of
independent interest. In particular, we show in~\Cref{SecBernRep} that
\begin{align}
\label{EqnHfunBernsteinMain}
\hfun(\rtime) & = \sum_{j = 0}^{\usedim - 1} \binom{\usedim - 1}{j}
\rtime^j (1 - \rtime)^{\usedim - 1 - j} \hcard_j,
\end{align}
so that $\hfun$ is a polynomial of degree at most $\usedim - 1$.

In proving the sandwich~\eqref{EqnUGC2DHW}, we show that the $\DHW$
complexity functional has the explicit representation $\DHW(\ProbZ) =
\int_0^1 t \min \{1, \log(1/t) \} \hfun'(t) dt$, from which
the estimates~\eqref{EqnUGC2DHW} follow from simple algebra.  The
analysis leading to this representation reveals an interesting
algorithmic connection to Bernoulli unmasking: the DHW unmasking
algorithm, while described and analyzed as a form of
$\tau$-leaping~\cite{Gil01} in a continuous-time Markov chain, is
closely related to the Bernoulli unmasking
sampler. See~\Cref{SecDHWRelation} for details of this relation.


\section{Alternative representations of $\hfun$}
\label{SecHalt}

In this section, we develop and explore some alternative
representations of $\hfun$ and its derivative $\hfun'$ that provide
useful insight for subsequent proofs.

\subsection{Proof of the identity~\eqref{EqnHfunRandSub}}

In this appendix, we prove the random subset
identity~\eqref{EqnHfunRandSub}, namely that we can write
\begin{align*}
  \hfun(\rtime) = \sum_{\ind = 1}^{\usedim} \Exs_{A_{\ind,\rtime}}
  \big[ \Info(\Zvar_\ind; \Zvar_{A_{\ind,\rtime}}) \big],
\end{align*}
where $A_{\ind,\rtime} \subseteq [\usedim] \setminus \{\ind \}$ is a
random subset obtained by including each coordinate of $[\usedim]
\setminus \{ \ind \}$ independently with probability $\rtime$.

To prove this representation, for each $\ind \in [\usedim]$, let
$E_\ind$ denote the event that coordinate $\ind$ remains masked at
reveal time $\rtime$, and let $A \equiv A_{\ind,\rtime} \subseteq
[\usedim] \setminus \{ \ind \}$ denote the set of revealed
coordinates.  Conditional on $E_\ind$, the observation $X_\rtime$ is
equivalent to the pair $(A, \Zvar_{A})$.  Consequently, we have the
mutual information identity $\Info(\Zvar_\ind; X_\rtime \mid E_\ind) =
\Info\big( \Zvar_\ind; A, \Zvar_{A} \big)$.  By the chain rule for
mutual information, we have
\begin{align*}
  \Info\big( \Zvar_\ind; A, \Zvar_{A} \big) & =
  \underbrace{\Info(\Zvar_\ind; A)}_{=0} + \Info\big( \Zvar_\ind;
  \Zvar_{A} \mid A \big),
\end{align*}
using the fact that $\Zvar_\ind$ and $A \equiv A_{\ind, \rtime}$ are
independent.  Conditioning on $A = a$ does not change the joint law of
$(\Zvar_\ind, \Zvar_A)$, so that we have shown that
\begin{align*}
 \Info(\Zvar_\ind; X_\rtime \mid E_\ind) \; = \; \Info(\Zvar_i;
  \Zvar_A \mid A) & = \sum_a \Prob(A = a) \Info(\Zvar_i; \Zvar_a) =
  \Exs_A\big[ \Info(\Zvar_i; \Zvar_A) \big] \qquad \mbox{for each
    $\ind =1, \ldots, \usedim$.}
\end{align*}
Summing this identity over the coordinate index $\ind$ yields the
claim~\eqref{EqnHfunRandSub}.


\subsection{Representation via Bernstein polynomials}
\label{SecBernRep}
In this section, we develop representations of both $\hfun$ and
$\hfun'$ as Bernstein polynomials in terms of the coefficients
$\hcard_j$. These representations have some useful immediate
consequences, and are used in the proofs in~\Cref{SecCompare}.  Recall
the definition of the coefficients
\begin{align}
\label{EqnDefnHcardGainTwo}
\hcard_j & \defn \sum_{\ind = 1}^{\usedim} \Exs_{B_{i,j}}
\big[\Info(\Zvar_\ind;  \Zvar_{B_{i,j}}) \big] \qquad \mbox{where
    $B_{i,j}$ is uniform over $j$-cardinality subsets of $[\usedim]
    \setminus \{\ind\}$.}
\end{align}
Letting $A_{i,t}$ be a Bernoulli random subset of $[\usedim] \setminus
\{\ind\}$ with inclusion probability $t$, we can write
\begin{align*}
\hcard_j & = \sum_{i=1}^d \Exs\Big[ \Info(\Zvar_\ind; \Zvar_{A_{i,t}})
  \mid \card(A_{i,t}) = j \Big].
\end{align*}
Since $\Prob[\card(A_{i,t}) = j] = \binom{\usedim - 1}{j} \rtime^j (1 -
\rtime)^{\usedim - 1 - j}$, it follows by combining the tower property
with the representation~\eqref{EqnHfunRandSub} that
\begin{subequations}
  \begin{align}
    \label{EqnHfunBernstein}
  \hfun(\rtime) & = \sum_{j = 0}^{\usedim - 1} \binom{\usedim - 1}{j}
  \rtime^j (1 - \rtime)^{\usedim - 1 - j} \hcard_j.
  \end{align}
Moreover, by differentiating equation~\eqref{EqnHfunBernstein}, we
obtain
  \begin{align}    
  \label{EqnHfunPrimeBernstein}
  \hfun'(\rtime) & = (\usedim - 1) \sum_{j = 0}^{\usedim - 2}
  \binom{\usedim - 2}{j} \rtime^j (1 - \rtime)^{\usedim - 2 - j}
  \big\{ \hcard_{j + 1} - \hcard_j \big\}.
  \end{align}
\end{subequations}
This calculation uses standard properties of
derivatives of Bernstein polynomials~\cite{Lorentz86,Phillips03}.  (In
particular, differentiation transforms coefficients into their first
differences.)

As one immediate consequence, from equation~\eqref{EqnHfunBernstein},
we see that $\hfun$ is a polynomial of degree at most $\usedim -1$.
It follows immediately that the derivative $\hfun'$ exists.  Let us
develop two additional consequences.


\subsubsection{Proof of $\hfun$-monotonicity}
\label{SecHfunMonotone}

We now prove that $\hfun$ is monotone by
showing that $\hfun'(\rtime) \geq 0$. The Bernstein polynomial
representation~\eqref{EqnHfunPrimeBernstein} of $\hfun'$ shows that it
suffices to prove that $\hcard_{j + 1} \geq \hcard_j$.

For $0 \leq j \leq \usedim - 2$ and a fixed coordinate $i$, let $A_j$
be a random subset, chosen uniformly from all the $j$-element subsets
of $[\usedim] \setminus \{ i \}$, and, conditionally on $A_j$, let $J$
be uniform over its complement inside $[\usedim] \setminus
\{i\}$. This defines a coupling between the subsets. Then the random
set $A_j \cup \{ J \}$ is uniform over the $(j + 1)$-element subsets,
while the chain rule for mutual information gives the decomposition
$\Info(\Zvar_i; \Zvar_{A_j \cup \{ J \}}) - \Info(\Zvar_i;
\Zvar_{A_j}) = \Info(\Zvar_i; \Zvar_J \mid \Zvar_{A_j}) \geq 0$.
Averaging first over the coupling and then over $i$ shows that
$\hcard_{j + 1} - \hcard_j \geq 0$, as claimed.


\subsubsection{Proof of the total-correlation representations~\eqref{EqnTCRep}}
\label{SecProofEqnTCRep}

Now let us prove the representations of the total correlation $\TC$
and dual total correlation $\DTC$ given in equation~\eqref{EqnTCRep}.
From the random subset representation~\eqref{EqnHfunRandSub} of
$\hfun$, we have
\begin{align*}
  \hfun(1) & = \sum_{i = 1}^{\usedim} \Info(\Zvar_i; \Zvar_{-i}) \; =
  \; \sum_{i=1}^\usedim \big \{ \Ent(Z_i) - \Ent(\Zvar_i \mid
  \Zvar_{-i}) \big \},
\end{align*}
where the second equality expands the definition of mutual
information.  By comparing with the definitions~\eqref{EqnDefnTC}, we
see immediately that $\hfun(1) = \TC(\ProbZ) + \DTC(\ProbZ)$.  Since
$\hfun(1) = \int_0^1 \hfun'(t) dt$, using the fact that $\hfun(0) =
0$, if we can prove the identity $\TC(\ProbZ) = \int_0^1 (1 - t)
\hfun'(t) dt$, then it follows that $\DTC(\ProbZ) = \int_0^1 t \hfun'(t)
dt$.  Accordingly, the remainder of our analysis focuses on the $\TC$
representation.

For $j = 1, \ldots, \usedim$, let $e_j$ be the entropy coefficient
defined in equation~\eqref{EqnDefnTSE}, along with $e_0 = 0$.  Note
that we have $e_1 = \frac{1}{\usedim} \sum_{i = 1}^{\usedim}
\Ent(\Zvar_i)$, and $e_\usedim = \Ent(\Zvar)$.  For any $i \in
    [\usedim]$ and $A \subseteq [\usedim] \setminus \{ i \}$, we have
    the mutual information identity $\Info(\Zvar_i; \Zvar_A) =
    \Ent(\Zvar_i) + \Ent(\Zvar_A) - \Ent(\Zvar_{A \cup \{ i \}})$.  It
    can be averaged over $i$ and over uniform $j$-cardinality subsets
    $A$.  Doing so and using the definition~\eqref{EqnDefnHcardGain}
    of the $\hcard_j$ coefficients, we find that
\begin{align}
  \label{EqnHardE}
  \hcard_j = \usedim \big( e_1 + e_j - e_{j + 1} \big).
\end{align}
Substituting this identity into the Bernstein
representation~\eqref{EqnHfunBernstein} and integrating over the
interval $[0,1]$ yields
\begin{align*}
  \int_0^1 \hfun(t) \, dt & = \sum_{j = 0}^{\usedim - 1}
  \binom{\usedim - 1}{j} \hcard_j \big \{\int_0^1 \rtime^j (1 -
  \rtime)^{\usedim - 1 - j} d \rtime \} \; \stackrel{(i)}{=}
  \frac{1}{\usedim} \sum_{j = 0}^{\usedim - 1} \hcard_j
  \stackrel{(ii)}{=} \usedim e_1 - e_\usedim \; \stackrel{(iii)}{=}
  \TC(\Prob_Z),
\end{align*}
where step (i) uses the beta-integral identity $\int_0^1
\binom{\usedim - 1}{j} t^j (1 - t)^{\usedim - 1 - j} \, dt =
\frac{1}{\usedim}$; step (ii) uses the identity~\eqref{EqnHardE} along
with some algebra; and step (iii) follows from the
definition~\eqref{EqnDefnTC} of the total correlation, combined with
the definition~\eqref{EqnDefnTSE} of the entropy coefficients.

Finally, integrating by parts shows that $\int_0^1 \hfun(t) dt =
\int_0^1 (1-t) \hfun'(t) dt$, where the boundary term vanishes since
$\hfun(0) = 0$.  This completes the proof of the $\TC$ representation
in equation~\eqref{EqnTCRep}.

\section{Proofs of connecting results} 
\label{SecCompare}

In this section, we prove various results that connect the \ugc
aggregate complexity to other complexity notions
(\Cref{SecProofPropSimpleRelation}), and compare the
DHW $\tau$-leaping unmasking algorithm with the Bernoulli unmasking
procedure (\Cref{SecDHWRelation}).


\subsection{Proof of~\Cref{PropSimpleRelation}}
\label{SecProofPropSimpleRelation}

The proposition consists of three claims, and we prove each of them in
turn.

\subsubsection{Proof of $\TSE$ identity~\eqref{EqnUGCtoTSE}}

Beginning with the definition~\eqref{EqnDefnUGC}, integration by parts
yields $\Hinfo(0, 1) = \int_0^1 (2 \rtime - 1) \hfun(\rtime) \,
d\rtime$, where the boundary term vanishes since $\rtime (1 - \rtime)
= 0$ at the endpoints.  Substituting the Bernstein
representation~\eqref{EqnHfunBernstein} of $\hfun$ into this equation
yields
\begin{align}
\label{EqnBlackVelvet}
  \Hinfo(0, 1) & = \sum_{j = 0}^{\usedim - 1} \hcard_j \binom{\usedim
    - 1}{j} \left \{ \int_0^1 (2 \rtime - 1) \rtime^j (1 -
  \rtime)^{\usedim - 1 - j} d \rtime \right \} \; = \;
  \frac{1}{\usedim (\usedim +1)} \sum_{j = 0}^{\usedim - 1} (2 j -
  \usedim + 1) \; \hcard_j,
\end{align}
where the second step follows from a standard beta-integral identity.

To complete the proof, we need to relate the entropy coefficients $e_j =
\Exs[\Ent(Z_{A_j})]$, where $A_j$ is a uniformly random subset of
cardinality $j$, to the $\hcard_j$ coefficients.  For each $j = 0,
\ldots, \usedim - 1$, we can write the mutual information as
$\Info(\Zvar_i; \Zvar_A) = \Ent(\Zvar_i) + \Ent(\Zvar_A) -
\Ent(\Zvar_{A \cup \{ i \}})$. Averaging this identity over $i$ and
the $j$-element subsets $A \subseteq [\usedim] \setminus \{ i \}$
yields the relation $\hcard_j = \usedim \big( e_1 + e_j - e_{j + 1}
\big)$.  Substituting this identity into the
expression~\eqref{EqnBlackVelvet} yields
\begin{align*}
  \Hinfo(0, 1) = \frac{1}{\usedim + 1} \Big \{ e_1 \underbrace{\sum_{j
      = 0}^{\usedim - 1} (2 j - \usedim + 1)}_{=0} + \sum_{j =
    0}^{\usedim - 1} (2 j - \usedim + 1) (e_j - e_{j + 1}) & =
  \frac{1}{\usedim + 1} \left\{ 2 \sum_{j = 1}^{\usedim - 1} e_j -
  (\usedim - 1) e_\usedim \right\} \\
  & = \frac{2}{\usedim + 1} \sum_{j = 1}^{\usedim} \left( e_j -
  \frac{j}{\usedim} e_\usedim \right) = \frac{2}{\usedim + 1}
  \TSE(\Prob_Z),
\end{align*}
which establishes the claim~\eqref{EqnUGCtoTSE}.

\subsubsection{Proof of the bound~\eqref{EqnCard2Ber}}
\label{SecProofCard2Ber}

For any $0 \leq p < q \leq 1$, define the multinomial triple $(U_0,
U_1, U_2) \sim \operatorname{Multinomial}\left( \usedim + 1;\, p,\, q
- p,\, 1 - q \right)$.  As claimed previously, we have the identity
\begin{align}
\label{EqnBerCardTwo}
\Hinfo(p, q) & = \frac{\usedim}{\usedim + 1} \Exs \left[ \Hcard(A_U,
  B_U) \right],
\end{align}
where $A_U \defn \max\{ U_0 - 1, 0 \}$, and $B_U \defn \min\{ U_0 +
U_1, \usedim \}$.

The claims~\eqref{EqnCard2Ber} follow as a special case.  For $(p, q)
= (0, 1)$, we have $(A_U, B_U) = (0, \usedim)$ almost surely, and
hence $\Hinfo(0, 1) = \frac{\usedim}{\usedim + 1} \Hcard(0, \usedim)$,
from which it follows that $\Hinfo(0, 1) \leq \Hcard(0, \usedim) \leq
2 \Hinfo(0, 1)$, as claimed.

We now prove the claim~\eqref{EqnBerCardTwo}.  Introduce the shorthand
notation $\Delta_j \defn \hcard_j - \hcard_{j - 1}$ and $c_j \defn
\frac{j}{\usedim} \left( 1 - \frac{j}{\usedim} \right)$, and observe
that
\begin{subequations}
\begin{align}
  \label{EqnHcardSimple}
  \Hcard(a,b) = \sum_{j=a+1}^{b-1} c_j \Delta_j.
\end{align}
Moreover, using the representation~\eqref{EqnHfunPrimeBernstein} of
$\hfun'$ as a Bernstein polynomial, we can write
\begin{align}
\label{EqnBerExp}  
  \rtime (1 - \rtime) \hfun'(\rtime) & = \sum_{j = 1}^{\usedim - 1}
  c_j \Delta_j \, \usedim \binom{\usedim}{j} \rtime^j (1 -
  \rtime)^{\usedim - j}.
\end{align}
Generate $\usedim + 1$ i.i.d. $\operatorname{Unif}[0, 1]$ random
variables $Y_i$, and let $Y_{(j + 1)}$ denote the $(j + 1)$-st order
statistic. Its density is given by $(\usedim + 1) \binom{\usedim}{j}
\rtime^j (1 - \rtime)^{\usedim - j}$. Consequently, by integrating the
Bernstein expansion~\eqref{EqnBerExp} over $[p, q]$, we obtain
\begin{align}
  \label{EqnInterh}
  \Hinfo(p, q) & = \frac{\usedim}{\usedim + 1} \sum_{j = 1}^{\usedim -
    1} c_j \Delta_j \Prob\big( Y_{(j + 1)} \in (p, q) \big).
\end{align}
\end{subequations}
Now recalling our i.i.d. uniform samples $\{Y_i\}_{i=1}^{\usedim +
  1}$, define the multinomial count vector $U = (U_0, U_1, U_2)$ via
\begin{align*}
U_0 \defn \card \{ \ell \mid Y_\ell \in [0,p] \}, \quad U_1 \defn
\card \{ \ell \mid Y_\ell \in (p, q)\} \quad \mbox{and} \quad U_2
\defn \card \{ \ell \mid Y_\ell \in [q, 1] \}.
\end{align*}
By construction, we have $Y_{(j + 1)} \in (p, q)$ if and only if $A_U
< j < B_U$. Thus, we can rewrite the expression~\eqref{EqnInterh} as
\begin{align*}
  \Hinfo(p, q) = \frac{\usedim}{\usedim + 1} \Exs\left[ \sum_{j =
      1}^{\usedim - 1} c_j \Delta_j \mathbf{1}\{ A_U < j < B_U \}
    \right] & = \frac{\usedim}{\usedim + 1} \Exs\left[ \sum_{j = A_U +
      1}^{B_U - 1} c_j \Delta_j \right] \\
  & = \frac{\usedim}{\usedim + 1} \Exs\big[ \Hcard(A_U, B_U) \big],
\end{align*}
where the last step uses the representation~\eqref{EqnHcardSimple}.
This completes the proof of equation~\eqref{EqnBerCardTwo}.


\subsubsection{Proof of the $\DHW$ sandwich~\eqref{EqnUGC2DHW}}

For $u \geq 0$, we define a reveal time $r = e^{-u}$, so that $u =
\log(1/r)$ is the log-inverse-reveal time.  The analysis of Dmitriev
et al.~\cite{DmiEtAl26} involves a forward masked vector $\Yvar_u$; it
is related to our reveal process $\Xvar_r$ via the relation $\Yvar_u
\overset{\mathrm{law}}{=} \Xvar_{e^{-u}}$.  In the process $\Yvar_u$,
each coordinate $\Yvar_u^i$ equals $\Zvar_i$ with probability $r =
e^{-u}$ and equals $\miss$ otherwise, and they define the
conditional-information density
\begin{align*}
  \Difun(u) & \defn \sum_{i \neq j}
    \Info(\Yvar_u^i; \Yvar_u^j \mid \Yvar_u^{-(i, j)}),
\end{align*}
where $\Yvar_u^{-(i, j)}$ denotes the collection of unmasked
coordinates other than $i$ and $j$.  They define the effective total
correlation by $\DHW(\Prob_Z) \defn \int_0^\infty \min\{ 1, u \}
\Difun(u) \, du$.

In order to prove the sandwich relation, it suffices to prove the
identity
\begin{subequations}
\begin{align}
  \label{EqnKeyRelation}  
  \Difun(u) & = e^{-2u} \hfun'(e^{-u}) \qquad \mbox{for all $u \geq
    0$.}
\end{align}
Indeed, when this identity holds, the change of variables $t = e^{-u}$
guarantees that
\begin{align}
  \DHW(\Prob_Z) & = \int_0^1
    \weight(\rtime) \hfun'(\rtime) \, d \rtime, \qquad
    \mbox{where $\weight(\rtime) \defn \rtime \min\{ 1,
    -\log \rtime \}$.}
\end{align}
\end{subequations}
In terms of the shorthand $v(t) \defn t \, (1 - t)$, it is
straightforward to verify that
\begin{align*}
v(t) \; \leq \; w(t) \; \leq \; \frac{e}{e - 1} v(t) \qquad \mbox{for
  all $t \in [0,1]$,}
\end{align*}
from which the sandwich~\eqref{EqnUGC2DHW} follows.

\paragraph{Proof of the identity~\eqref{EqnKeyRelation}:}
We begin by relating their definitions to equivalent objects in our
notation.  Fix an ordered pair of indices $i \neq j$ and let $A_r^{i,
  j}$ be the random subset of $[\usedim] \setminus \{ i, j \}$
obtained by retaining each coordinate independently with probability
$r$.  If either $i$ or $j$ is masked, one of the first two arguments
in the conditional mutual information is deterministic, and its
contribution is zero.  The probability that both coordinates are
retained is $r^2$.  Conditional on this event and on $A_r^{i, j} = A$,
the three arguments reduce to $\Zvar_i$, $\Zvar_j$, and $\Zvar_A$,
respectively.  Since the masking pattern is independent of $\Zvar$,
the law of total expectation gives
\begin{align*}
  \Info(\Yvar_u^i; \Yvar_u^j \mid \Yvar_u^{-(i, j)})
  & = r^2 \Exs\big[
    \Info(\Zvar_i; \Zvar_j \mid \Zvar_{A_r^{i, j}}) \big].
\end{align*}
Moreover,
$\Prob(A_r^{i, j} = A) = r^{\card(A)} (1 - r)^{\usedim - 2 -
  \card(A)}$.  Expanding the expectation and summing over the ordered
pairs therefore yields
\begin{subequations}
\begin{align}
  \label{EqnDHWRep}
  \Difun(u) & \defn r^2 \sum_{i \neq j} \sum_{A \subseteq [\usedim]
    \setminus \{ i, j \}} r^{\card(A)} (1 - r)^{\usedim - 2 -
    \card(A)} \Info(\Zvar_i; \Zvar_j \mid \Zvar_A) \qquad \mbox{with
    $r = e^{-u}$.}
\end{align}
In order to complete the proof, it suffices to show that $\hfun'$
satisfies the identity
\begin{align}
\label{EqnHprimeRep}  
  \hfun'(r) & = \sum_{i \neq j} \sum_{A \subseteq [\usedim] \setminus
    \{ i, j \}} r^{\card(A)} (1 - r)^{\usedim - 2 - \card(A)}
  \Info(\Zvar_i; \Zvar_j \mid \Zvar_A).
\end{align}
\end{subequations}
Our claim~\eqref{EqnKeyRelation} then follows by comparing the two
representations~\eqref{EqnHprimeRep} and~\eqref{EqnDHWRep}.

To prove the identity~\eqref{EqnHprimeRep}, fix an integer $k \in \{
0, \ldots, \usedim - 2 \}$.  Beginning with the
definition~\eqref{EqnDefnHcardGain} of $\hcard_k$ and averaging over
the choice of the additional coordinate, we find that
\begin{align*}
  (\usedim - 1) \binom{\usedim - 2}{k} \big\{ \hcard_{k + 1} -
  \hcard_k \big\} & = \sum_{i \neq j} \sum_{\substack{ A \subseteq
      [\usedim] \setminus \{ i, j \} \\ \card(A) = k}}
  \underbrace{\Big\{ \Info(\Zvar_i; \Zvar_{A \cup \{ j \}}) -
    \Info(\Zvar_i; \Zvar_A) \Big\}}_{\Info(\Zvar_i; \Zvar_j \mid
    \Zvar_A)},
\end{align*}
where the underbrace relation follows from the chain rule for mutual
information.  Substituting this identity into our Bernstein
representation~\eqref{EqnHfunPrimeBernstein} for $\hfun'$, and then
summing over $k$ yields equation~\eqref{EqnHprimeRep}.

\subsection{Relation with the DHW $\tau$-leaping masking sampler}
\label{SecDHWRelation}

In this section, we formalize the exact relationship between
Algorithm~1 of Dmitriev et al.~\cite{DmiEtAl26} and the Bernoulli
unmasking sampler~\eqref{EqnBerUnmask}.  Any DHW grid of the form $0 =
u_0 < \cdots < u_N = U$ can be used to define a sequence of reveal
times $\rtime_j \defn e^{-(U - u_j)}$, and the corresponding inverse
transformation $U - u_j = \log(1/\rtime_j)$.

In terms of the shorthand $\beta_k \defn (\rtime_{k + 1} - \rtime_k)/(1
- \rtime_k)$, for the Bernoulli unmasking update with estimated
denoiser $\DenoiseHat_i(\mathord\cdot, x)$, the
corresponding transition kernel is given by
\begin{subequations}
  \begin{align}
    \label{EqnBerKernel}
  \Ker_{k, i}^{\Ber}(\miss \mid x) = 1 - \beta_k, \quad \mbox{and}
  \quad \Ker_{k, i}^{\Ber}(a \mid x) = \beta_k
  \DenoiseHat_i(a, x)
  \qquad \mbox{for each $a \in \Alphabet$.}
\end{align}
On the other hand, for a current state $x$ and masked coordinate $i
\in \Masked(x)$, write the DHW estimated score as $q_{k, i}^{\DHW}(a,
x) \defn \scorehat_{\log(1/\rtime_k)}(x^{i \to a}, x)$.  Assuming its
total mass is positive, define its normalized shape and relative mass
by
\begin{align}
\label{EqnDHWBasics}
  \pi_{k, i}(a, x) \defn \frac{q_{k, i}^{\DHW}(a, x)}{ \sum_{b \in
      \Alphabet} q_{k, i}^{\DHW}(b, x)}, \quad \mbox{and} \quad c_{k,
    i}(x) \defn \frac{1 - \rtime_k}{\rtime_k} \sum_{b \in \Alphabet}
  q_{k, i}^{\DHW}(b, x).
\end{align}
These ingredients along with the DHW stepsize yield the kernel
\begin{align}
  \label{EqnDHWKernel}
  \Ker_{k, i}^{\DHW}(\miss \mid x) = (1 - \beta_k)^{c_{k, i}(x)},
  \quad \mbox{and} \quad \Ker_{k, i}^{\DHW}(a \mid x) = \left\{ 1 - (1
  - \beta_k)^{c_{k, i}(x)} \right\} \pi_{k, i}(a, x).
\end{align}
Taking the differences between the expressions~\eqref{EqnBerKernel}
and~\eqref{EqnDHWKernel}, we obtain the kernel differences
\begin{align}
  \label{EqnDHWMaskKernelGap}
  \Ker_{k, i}^{\DHW}(\miss \mid x) - \Ker_{k, i}^{\Ber}(\miss \mid x) & = (1
  - \beta_k)^{c_{k, i}(x)} - (1 - \beta_k), \\
  \label{EqnDHWTokenKernelGap}
  \Ker_{k, i}^{\DHW}(a \mid x) - \Ker_{k, i}^{\Ber}(a \mid x) & =
  \left\{ 1 - (1 - \beta_k)^{c_{k, i}(x)} \right\} \pi_{k, i}(a, x) -
  \beta_k \DenoiseHat_i(a, x).
\end{align}
Thus, the two kernels are very closely related, but not identical in
general.

We claim that when there is no score error, so that $q_{k, i}^{\DHW} =
q_{k, i}^\star$ and $\DenoiseHat_i = \Denoise_i$, then the two
kernels coincide.
First of all, we claim that 
Proposition~6 of Dmitriev et al.~\cite{DmiEtAl26}, when translated
into our notation, shows that
\begin{align}
  \label{EqnWeiExactMaskingScore}
  q_{k, i}^\star(a, x) & \stackrel{(i)}{=} \frac{\rtime_k}{1 -
    \rtime_k} \Denoise_i(a, x), \quad \mbox{and} \quad \sum_{a \in
    \Alphabet} q_{k, i}^\star(a, x) \stackrel{(ii)}{=}
  \frac{\rtime_k}{1 - \rtime_k}.
\end{align}
\end{subequations}
Equation (ii) follows from equation (i), since the denoiser
$\Denoise_i$ is a conditional probability distribution.  To establish
equation (i), if $x$ has coordinate $i$ masked and $y$ is obtained by
filling that coordinate with $a$, then the marginal convention in
Proposition~6 gives $q_0(y) / q_0(x) = \Prob(\Zvar_i = a \mid x) =
\Denoise_i(a, x)$.  Moreover, their CTMC time $u$ is related to our
reveal probability by the transformation $\rtime_k = e^{-u}$, so that
$(e^u - 1)^{-1} = \rtime_k / (1 - \rtime_k)$.

Using equation~\eqref{EqnWeiExactMaskingScore} we see that the exact
score has total mass $\rtime_k/(1 - \rtime_k)$, so that $c_{k, i}(x) =
1$ in equation~\eqref{EqnDHWBasics}. Moreover, normalizing $q_{k,
  i}^\star(a, x)$ by this total mass cancels the common factor
$\rtime_k/(1 - \rtime_k)$, so that $\pi_{k, i}(a, x) = \Denoise_i(a,
x)$.  Thus, the right-hand side of
equation~\eqref{EqnDHWMaskKernelGap} is $(1 - \beta_k) - (1 - \beta_k)
= 0$, while the right-hand side of
equation~\eqref{EqnDHWTokenKernelGap} is $\beta_k \Denoise_i(a, x) -
\beta_k \Denoise_i(a, x) = 0$.

For learned scores, the mask-probability gap is caused by the non-unit
score mass $c_{k, i}(x)$.  The important difference is that rescaling
the DHW score changes its kernel, whereas the Bernoulli reveal
probability remains fixed by the grid.


\section{Proof of~\Cref{LemXORSAT}: XORSAT scaling}
\label{SecXORSAT}

Let us first complete the proof, taking the bounds~\eqref{EqnXORWidth}
as given.  Note that the outer blocks have log-reveal-odds length
$\Order(\log \usedim)$, whereas $\MyLen(\IntMid) = \Order(\sqrt{\log
  \usedim / \usedim})$.  Hence, we can compute
\begin{align*}
  \PartComp(\Partition) & = \left( \sqrt{\MyLen(\IntLeft)
    \Hinfo(\IntLeft)} + \sqrt{\MyLen(\IntMid) \Hinfo(\IntMid)} +
  \sqrt{\MyLen(\IntRight) \Hinfo(\IntRight)} \right)^2 \\ & =
  \Order\left( \left[ \sqrt{\frac{\log \usedim}{\usedim^{10}}} +
    \left( \frac{\log \usedim}{\usedim} \right)^{1/4} \sqrt{\usedim}
    \right]^2 \right) \\ & = \Order\big( \sqrt{\usedim \log \usedim}
  \big),
\end{align*}
where we have used the claimed bound~\eqref{EqnXORWidth}.  Combining
this bound with the scaling $\PartComp([\IntStar]) = \Theta(\usedim
\log \usedim)$ establishes the claim~\eqref{EqnXORSAT}.

We now return to prove the bounds~\eqref{EqnXORWidth}. With the
shorthand $\IntMid = [\sneg, \splus]$ and $c' = \min \big \{ \sneg (1-
\sneg), \spos (1 - \spos) \}$, it follows from the
definition~\eqref{EqnDefnUGC} of $\Hinfo$ that
  \begin{subequations}
\label{EqnRain}    
    \begin{align}
\label{EqnRainOne}
  c' \{ \hfun(\splus) - \hfun(\sneg) \} \leq \Hinfo(\IntMid) \leq
  \frac{\hfun(1) - \hfun(0)}{4}, \quad \mbox{as well as} \\
  \label{EqnRainTwo}
  \Hinfo(\IntLeft) \leq \frac{\hfun(\sneg) - \hfun(0)}{4} \quad
  \mbox{and} \quad \Hinfo(\IntRight) \leq \frac{\hfun(1) -
    \hfun(\splus)}{4}.
    \end{align}
  \end{subequations}
Consequently, we can control the \ugc increments by controlling the
differences in $\hfun$.  From the definition $\hfun(1) = \sum_{i=1}^d
\Info(Z_i; Z_{-i})$ and the binary nature of each $Z_i$, it follows that
$\hfun(1) \leq d \log 2$, and hence that \mbox{$\hfun(1) - \hfun(0) =
  \Order(\usedim)$.}

For controlling the other increments, we study averages of $\hfun$
over the random choice of $\Amat \in \{0,1\}^{\usedim \times \kdim}$.
The increments of the averaged $\hfun$ take a simple form.  For a
positive integer $m \leq \kdim$, let $\Bmat_m \in \{0,1\}^{m \times
  \kdim}$ be a Boolean random matrix with $m$ independent uniform rows
in $\{0, 1\}^\kdim$, and let $b$ be a fresh independent uniform row.
Defining the function
\begin{subequations}
\begin{align}
\label{EqnDefnUfun}  
  \ufun(m) & \defn \Prob\{ b \notin \operatorname{rowspan}(\Bmat_m) \}
\end{align}
we claim that
\begin{align}
  \label{EqnXORSATHfunIdentity}
  \frac{\Exs_\Amat[\hfun(s) - \hfun(t)]}{d \, \log 2} & = \Exs\big[
    \ufun(M_t) - \ufun(M_s) \big], \qquad \mbox{where $M_r \sim
    \operatorname{Bin}(\usedim - 1, r)$ is a binomial RV,}
\end{align}
and also
\begin{align}
  \label{EqnXORSATEndpointEstimates}
  \Exs[\ufun(M_{\sneg})] = 1 - O(d^{-11}) \quad \mbox{and} \quad
  \Exs[\ufun(M_{\splus})] = O(d^{-11}).
\end{align}
\end{subequations}
Taking these two claims as given for the moment, let us complete the
proof of the bounds~\eqref{EqnXORWidth}.

\paragraph{Transition interval:}
Equation~\eqref{EqnXORSATEndpointEstimates} gives $\Exs\big[
  \ufun(M_{\sneg}) - \ufun(M_{\splus}) \big] = 1 - O(d^{-11}) =
\Theta(1)$.  Consequently, equation~\eqref{EqnXORSATHfunIdentity}
gives $\Exs_\Amat[\hfun(\splus) - \hfun(\sneg)] = \Theta(d)$, and
combining with inequality~\eqref{EqnRainOne}, this yields
$\Exs[\Hinfo(\IntMid)] = \Theta(d)$.

\paragraph{Pre- and post-transition intervals:}  For the
pre-transition interval, $M_0 = 0$ deterministically, so that
equation~\eqref{EqnXORSATHfunIdentity} gives $\Exs_\Amat[\hfun(\sneg)]
- \Exs_\Amat[\hfun(0)] = d (\log 2) \big\{ \ufun(0) -
\Exs[\ufun(M_{\sneg})] \big\}$. Since $\ufun$ is non-increasing and $0
\leq \ufun \leq 1$, it follows that
\begin{align*}
  0 & \leq \ufun(0) - \Exs[\ufun(M_{\sneg})] \; \leq \; 1 -
  \Exs[\ufun(M_{\sneg})] = O(d^{-11}),
\end{align*}
and hence $\Exs_\Amat[\hfun(\sneg)] - \Exs_\Amat[\hfun(0)] =
O(d^{-10})$.

Similarly, we have $M_1 = d - 1$ deterministically, and
equation~\eqref{EqnXORSATHfunIdentity} gives $\Exs_\Amat[\hfun(1)] -
\Exs_\Amat[\hfun(\splus)] = d (\log 2) \big\{ \Exs[\ufun(M_{\splus})]
- \ufun(d - 1) \big\}$.  Using monotonicity and non-negativity,
\begin{align*}
  0 & \leq \Exs[\ufun(M_{\splus})] - \ufun(d - 1) \; \leq \;
  \Exs[\ufun(M_{\splus})] = O(d^{-11}),
\end{align*}
and hence $\Exs_\Amat[\hfun(1)] - \Exs_\Amat[\hfun(\splus)] =
O(d^{-10})$.  Combining these two estimates with
inequalities~\eqref{EqnRainTwo}, we conclude that
$\Exs[\Hinfo(\IntLeft)] = \order(d^{-10})$ and
$\Exs[\Hinfo(\IntRight)] = \order(d^{-10})$. \\

To pass from these averaged bounds to a single realization of $\Amat$,
first note that the two outer $\Hinfo$ terms are nonnegative.  Hence,
by Markov's inequality and the preceding expectation bounds, with
probability bounded below by a positive constant, both
$\Hinfo(\IntLeft)$ and $\Hinfo(\IntRight)$ are $\Order(d^{-10})$.  For
the middle interval, the endpoint
estimates~\eqref{EqnXORSATEndpointEstimates} and
identity~\eqref{EqnXORSATHfunIdentity} give $\Exs_\Amat\big[
  \hfun(\splus) - \hfun(\sneg) \big] = d (\log 2) \big\{ 1 -
O(d^{-11}) \big\}$.  Since $0 \leq \hfun(\splus) - \hfun(\sneg) \leq d
\log 2$, Markov's inequality applied to the nonnegative deficit $d
\log 2 - \{ \hfun(\splus) - \hfun(\sneg) \}$ shows that
\begin{align*}
  \Prob_\Amat\left\{ \hfun(\splus) - \hfun(\sneg) < \frac{d \log
    2}{2} \right\} & = O(d^{-11}).
\end{align*}
Thus, by a union bound, for all sufficiently large $d$ there exists a
realization of $\Amat$ for which the two outer bounds hold and
$\hfun(\splus) - \hfun(\sneg) = \Omega(d)$ simultaneously.
Inequality~\eqref{EqnRainOne} then gives $\Hinfo(\IntMid) = \Theta(d)$
for this same realization.


\paragraph{Proof of the identity~\eqref{EqnXORSATHfunIdentity}:}

Fix a coordinate $i$, and condition on coordinate $i$ being masked at
reveal probability $r$.  Let $R \subseteq [\usedim] \setminus \{i\}$
denote the set of revealed coordinates among the remaining
coordinates.  Then $|R| = M_r \sim \operatorname{Bin}(d - 1, r)$.
Coordinate $i$ is associated with the $i^{th}$ row $a_i$ of
$\Amat$, and we use $\Amat_R$ to denote the matrix formed by the rows
indexed by $R$.  All of the following linear-algebraic statements are
over the Boolean field $\mathbb F_2$.

Conditional on $\Amat$ and the revealed values $\Zvar_R$, there are
two possibilities.  If $a_i \in \operatorname{rowspan}(\Amat_R)$, then
$\Zvar_i$ is a linear combination of the revealed coordinates
$\Zvar_R$, and hence is determined by them.  On the other hand, if
$a_i \notin \operatorname{rowspan}(\Amat_R)$, then the linear
functional defined by $a_i$ is nonzero on the nullspace of $\Amat_R$.
Consequently, among the latent assignments compatible with $\Zvar_R$,
exactly half give $\Zvar_i = 0$ and half give $\Zvar_i = 1$.  By this
reasoning, the conditional entropy takes the simple form
\begin{align}
  \Ent(\Zvar_i \mid \Zvar_R, \Amat) & = (\log 2) \, \mathbf{1}\big\{
  a_i \notin \operatorname{rowspan}(\Amat_R) \big\}.
\end{align}

Now condition on the event $|R| = m$.  By independence and permutation
invariance of the rows, the submatrix $\Amat_R$ has the same law as
$\Bmat_m$, while $a_i$ is a fresh independent uniform row.  From the
definition~\eqref{EqnDefnUfun} of $\ufun$, it follows that
$\Exs_\Amat\big[ \Ent(\Zvar_i \mid \Zvar_R, \Amat) \,\big|\, |R| = m
  \big] = (\log 2) \ufun(m)$.  Averaging this identity over the
binomial random variable yields $\Exs_\Amat\big[ \Ent(\Zvar_i \mid
  \Zvar_R, \Amat) \big] = (\log 2) \Exs\big[ \ufun(M_r) \big]$.  The
same argument with no revealed coordinates gives $\Exs_\Amat\big[
  \Ent(\Zvar_i \mid \Amat) \big] = (\log 2) \ufun(0)$.

Combining the pieces, the $i^{th}$ term in the sum~\eqref{EqnDefnHfun}
that defines $\hfun$ has expectation
\begin{align*}
  \Exs_\Amat\big[ \Ent(\Zvar_i \mid \Amat) - \Ent(\Zvar_i \mid
    \Zvar_R, \Amat) \big] = (\log 2) \big\{ \ufun(0) -
  \Exs[\ufun(M_r)] \big\}.
\end{align*}
Summing over the $d$ exchangeable coordinates yields the identity
$\Exs_\Amat[\hfun(r)] = d (\log 2) \big\{ \ufun(0) - \Exs[\ufun(M_r)]
\big\}$.  Taking the difference between $r = s$ and $r = t$ in this
identity yields the claim~\eqref{EqnXORSATHfunIdentity}.


\paragraph{Proof of equation~\eqref{EqnXORSATEndpointEstimates}:}

We first claim that $\ufun$ satisfies the bounds
  \begin{align}
    \label{EqnBone}
    1 - \ufun(m) \; \stackrel{(a)}{\leq} 2^{m - \kdim} \qquad
    \mbox{for $m \leq \kdim$, $\quad$ and} \qquad \ufun(m) \stackrel{(b)}{
      \leq} 2^{\kdim - m} \qquad \mbox{for $m > k$.}
  \end{align}
To prove the first bound~\eqref{EqnBone}(a), we observe that
conditional on $\Bmat_m$, its row span contains
$2^{\operatorname{rank}(\Bmat_m)}$ vectors out of the $2^{\kdim}$
possible rows in $\{0,1\}^{\kdim}$.  Hence a fresh uniform row lies in
this span with probability at most $2^{\operatorname{rank}(\Bmat_m) -
  \kdim} \leq 2^{m - \kdim}$. Combining with the definition of $\ufun$
yields the claim.  As for the bound~\eqref{EqnBone}(b), when $m >
\kdim$, the event that the fresh row lies outside the row span
requires $\operatorname{rank}(\Bmat_m) < \kdim$.  Union-bounding over
nonzero right-null vectors therefore gives $\ufun(m) \leq \Prob \big[
  \operatorname{rank}(\Bmat_m) < \kdim \big] \leq (2^{\kdim} - 1)
2^{-m} < 2^{\kdim - m}$, as claimed.

We now show that $\Exs[\ufun(M_{\sneg})] = 1 - \order(\usedim^{-11})$.
Recalling that $M_{\sneg} \sim \operatorname{Bin}(\usedim - 1,
\sneg)$, its mean is
\begin{align*}
  \mu_- \defn \Exs[M_{\sneg}] & = (\usedim - 1) \sneg = \kdim -
  \frac{\kdim}{\usedim} - c_1 \sqrt{\usedim \log \usedim} + c_1
  \sqrt{\frac{\log \usedim}{\usedim}}.
\end{align*}
For the threshold $T_- \defn \kdim - (c_1 / 4) \sqrt{\usedim \log
  \usedim}$, we therefore have
\begin{align*}
  T_- - \mu_- & = \frac{3c_1}{4} \sqrt{\usedim \log \usedim} +
  \frac{\kdim}{\usedim} - c_1 \sqrt{\frac{\log \usedim}{\usedim}} \geq
  \frac{c_1}{2} \sqrt{\usedim \log \usedim}
\end{align*}
for all sufficiently large $\usedim$.  Defining the bad event $\Event
= \{ M_{\sneg} > T_- \}$, Hoeffding's inequality gives
\begin{align*}
  \Prob(\Event)
\leq \exp\left\{ -\frac{2 (T_- - \mu_-)^2}{\usedim - 1} \right\} \leq
\usedim^{-c_1^2 / 2},
\end{align*}
so that $\Prob(\Event) = \order(\usedim^{-11})$ for $c_1$ sufficiently
large.  On the complement $\Event^c$, the bound~\eqref{EqnBone}(a)
guarantees that $1 - \ufun(M_{\sneg}) \leq 2^{M_{\sneg} - \kdim} \leq
2^{-(c_1 / 4) \sqrt{\usedim \log \usedim}}$.  Since $0 \leq \ufun(m)
\leq 1$, it follows that
\begin{align*}
  1 - \Exs[\ufun(M_{\sneg})] & \leq \Prob(\Event) + 2^{-(c_1 / 4)
    \sqrt{\usedim \log \usedim}} = \order(\usedim^{-11}),
\end{align*}
which proves the claim.

A similar argument applies at the right endpoint $\splus$.  The same
Hoeffding bound guarantees that \mbox{$\Prob \big[ M_{\splus} < \kdim
    + (c_1 / 4) \sqrt{\usedim \log \usedim} \big] = O(\usedim^{-11})$,}
while on the complementary event, we have the bound
\begin{align*}
  \ufun(M_{\splus}) \leq 2^{\kdim - M_{\splus}} \leq 2^{-(c_1 / 4)
    \sqrt{\usedim \log \usedim}}.
\end{align*}
Taking expectations gives $\Exs[\ufun(M_{\splus})] =
O(\usedim^{-11})$. \\

\noindent Combining the two pieces completes the proof of the claimed
bounds~\eqref{EqnXORSATEndpointEstimates}.


\end{document}